\documentclass[11pt]{article}

\usepackage[a4paper,margin=1in]{geometry}
\usepackage{amsmath,amsfonts,amssymb,amsthm}
\usepackage{graphicx}
\usepackage{url}
\usepackage{hyperref}
\usepackage{booktabs}
\usepackage{algorithm}
\usepackage{algorithmic}
\usepackage{listings}
\usepackage{xcolor}
\usepackage{natbib}
\usepackage{array}
\usepackage{mathtools}
\usepackage{tabularx}
\usepackage{enumerate}
\usepackage[inline,shortlabels]{enumitem}
\usepackage{subcaption}
\usepackage{multirow}
\usepackage{bm}
\usepackage{hyperref}
\usepackage{mmll}
\newcommand{\dd}{\mathrm{d}}
\usepackage[capitalize]{cleveref}

\usepackage{arydshln}
\usepackage{wrapfig}
\usepackage{color}
\usepackage{algorithm}
\usepackage{algorithmic}
\usepackage{xifthen}
\usepackage{etoc}
\usepackage{pifont}
\newcommand{\cmark}{\textcolor{green!55!black}{\ding{51}}}
\newcommand{\xmark}{\textcolor{red!75!black}{\ding{55}}}
\makeatletter

\renewcommand{\maketag@@@}[1]{\hbox{\m@th\normalsize\normalfont#1}}%

\makeatother

\title{Heavy-Tailed Flow Matching via Random Clocks}

\newcommand{\authoremail}[1]{%
  {\small\normalfont\href{mailto:#1}{\nolinkurl{#1}}}%
}

\newcommand{\authorblock}[3]{%
  \begin{tabular}[t]{@{}c@{}}
    #1\textsuperscript{#2}\\[-1mm]
    \authoremail{#3}
  \end{tabular}%
}

\author{%
  \begin{tabular}{@{}ccc@{}}
    \authorblock
      {Zhouhao Yang}
      {*}
      {zyang145@jhu.edu}
    &
    \authorblock
      {Yezhen Wang}
      {\ensuremath{\dagger}}
      {e1154541@u.nus.edu}
    &
    \authorblock
      {Kenji Kawaguchi}
      {\ensuremath{\dagger}}
      {kenji@comp.nus.edu.sg}
    \\[1.2em]
    \multicolumn{3}{c}{%
      \begin{tabular}{@{}cc@{}}
        \authorblock
          {Vladimir Braverman}
          {\ensuremath{\ddagger}}
          {vova@cs.jhu.edu}
        &
        \hspace{2em}
        \authorblock
          {Haoyang Cao}
          {*}
          {hycao@jhu.edu}
      \end{tabular}%
    }
  \end{tabular}%
}

\date{}

\newcommand{\Law}{\operatorname{Law}}

\begin{document}

\begingroup
\renewcommand{\thefootnote}{\fnsymbol{footnote}}

\maketitle

\footnotetext[1]{%
Department of Applied Mathematics and Statistics,
Johns Hopkins University.}

\footnotetext[2]{%
School of Computing,
National University of Singapore.}

\footnotetext[3]{%
Department of Computer Science,
Johns Hopkins University.}

\endgroup

\setcounter{footnote}{0}

\maketitle

\begin{abstract}
Heavy-tailed data arise in many domains where rare events carry disproportionate importance, such as imbalanced image datasets, financial returns, and weather extremes. Standard diffusion and flow-matching models typically begin from Gaussian noise or Gaussian source distributions, which yield tractable training targets but provide a poor inductive match for heavy-tailed data. We propose Heavy-Tailed Flow Matching via Random Clocks (HTFM), a framework that portrays heavy-tailed sources as mixtures of clock-conditioned Gaussian sources. Conditioning on a given clock path, the source distribution and flow are  Gaussian; marginalizing over the clock gives a Gaussian scale mixture covering Gaussian, $\alpha$-stable, and Student-t families. To make the clock-conditioned vector field practical, we encode the path-valued clock using truncated logsignature features, allowing the velocity field to adapt to the realized conditional space with negligible overhead. Empirically, on 2D imbalanced $\alpha$-stable mixtures, CIFAR10-LT, and HRRR weather fields, HTFM improves mode coverage, sample quality, and tail-statistic recovery over Gaussian flow matching and competitive heavy-tailed baselines, while retaining the low-NFE sampling advantage of flow matching. Moreover, the random-clock formulation further provides a practical tail-control interface: by varying only the clock law or tail parameter, the same architecture can calibrate the ``heaviness'' of generated tails across different distribution families.
\end{abstract}

\section{Introduction}
\label{sec:introduction}

In many real-world applications, rare or atypical observations carry disproportionately large impact relative to typical ones~\citep{coles2001introduction,resnick2007heavy}: hurricanes and heatwaves in operational weather and climate models~\citep{seneviratne2021weather,dowell2022high,pathak2022fourcastnet,grundemann2022rarest}, the tail of asset returns in financial risk modeling~\citep{cont2001empirical,embrechts2013modelling,cont2026tail}, and minority classes in long-tailed image datasets~\citep{cao2019learning,liu2019large,zhang2024long}. A generative model is useful in such regimes only insofar as it reproduces these rare events with the right frequency and magnitude.

This requirement is difficult for two related reasons. First, extreme observations are rare 
by construction, so empirical training data contains few samples from the relevant tail 
regions~\citep{grundemann2022rarest}. Second, many deep generative modeling pipelines begin 
from simple light-tailed randomness: variational autoencoders and generative adversarial 
networks commonly use Gaussian or uniform latent priors~\citep{kingma2014auto, goodfellow2014generative}, 
while state-of-the-art diffusion and flow-matching models typically use Gaussian noising processes 
or Gaussian source distributions~\citep{ho2020denoising, song2020score, lipman2022flow}. 
When the data distribution has substantially heavier tails, this light-tailed source is a poor 
inductive match: the model must learn tail behavior through the learned transport, which can 
make rare  events difficult to reproduce accurately.

A natural response is to replace the Gaussian source or noise with a heavy-tailed alternative. In diffusion and score-based modeling, L\'evy-It\^{o} models~\citep{yoon2023score,popovimproved} use \(\alpha\)-stable noise but require nonlocal fractional-score machinery and are tied to a particular stable family. Denoising L\'evy probabilistic models (DLPM)~\citep{shariatianheavy} adapt the DDPM~\citep{ho2020denoising} construction to \(\alpha\)-stable noise and use a denoising objective with robust aggregation over heavy-tailed randomness. Student-\(t\)-based diffusion and flow models~\citep{pandey2024heavy} use Student-\(t\) kernels with a \(\gamma\)-divergence training loss and are restricted to the finite-variance regime \(\nu>2\). Other constructions specialize the source or noise to heavy-tailed families such as Cauchy noise~\citep{lian2025cauchy} or finite-activity jump diffusions~\citep{baule2025generative}. Related tail limitations have also been studied for normalizing flows~\citep{jaini2020tails,hickling2025flexible}. These approaches demonstrate the value of heavy-tailed generative mechanisms, but they are typically built around a specific tail family and the corresponding score, kernel, sampler, or loss aggregation. In practice, the appropriate tail mechanism is rarely known in advance; a method built for one heavy-tailed family may be misspecified for another.

This leaves a gap for flow matching~\citep{lipman2022flow,liu2022flow,albergo2022building}. Flow matching learns an ordinary differential equation (ODE) that transports a source distribution to the data distribution by regressing tractable velocity targets along prescribed probability paths, rather than simulating a long reverse diffusion chain. This often enables high-quality sampling with a substantially lower number of function evaluations (NFE), and rectified-flow and optimal-transport variants further sharpen this sample-efficiency advantage~\citep{liu2022rectified,tong2023improving,pooladian2023multisample,kornilov2024optimal}. A heavy-tailed analogue of flow matching should therefore preserve the low-NFE advantages of flow matching while allowing source laws with heavier tails than Gaussians. Existing heavy-tailed diffusion and flow constructions make important progress, but their targets or divergences are usually derived separately for each chosen heavy-tailed family.

In this paper, we propose \emph{Heavy-Tailed Flow Matching via Random Clock} (HTFM). A random clock is a nondecreasing path-valued latent variable that controls the conditional covariance of the source distribution. Conditioning on a given clock path, the source distribution and the corresponding affine flow between source and data are Gaussian, so the endpoint-conditioned velocity target has the same algebraic form as in Gaussian conditional flow matching. After marginalizing over the clock, the source and flow can be heavy-tailed.

The random-clock viewpoint gives a unified construction for several representative tail mechanisms. A deterministic clock recovers vanilla Gaussian flow matching; an \(\alpha\)-stable subordinator clock yields \(\alpha\)-stable source marginals; and an inverse-gamma clock recovers Student-\(t\) marginals for any \(\nu>0\), including the infinite-variance regime \(\nu\le2\) not covered by finite-variance Student-\(t\) constructions~\citep{pandey2024heavy}. Across these choices, the endpoint-conditioned target is derived in the same clock-conditioned Gaussian space, avoiding family-specific score formulas, direct use of generally unavailable \(\alpha\)-stable densities, or Student-\(t\)-specific kernels. 


The remaining difficulty is that the pathwise optimal vector field is determined by the realized clock path, an infinite-dimensional object. Passing the full path into the velocity field would amount to an operator-learning problem~\citep{kovachki2023neural} and would undermine the practical simplicity of flow sampling. We instead characterize the clock path by a finite-dimensional truncated logsignature feature~\citep{lyons2014rough,chevyrev2025primer}. Path signatures enable universal finite-dimensional representations of stream-valued data and have been used as kernels, feature maps, and differentiable neural components~\citep{levin2013learning,kiraly2019kernels,kidger2019deep,kidger2021signatory,morrill2021neural}. This lets the velocity field adapt to the realized clock at negligible training and sampling cost.

Empirically, our experiments support three findings about the random clock: (i) heavy-tailed clocks improve over the Gaussian-clock variant across benchmarks; (ii) different clock families behave differently even at matched polynomial tail-decay rates, indicating that the clock family is a meaningful design knob beyond tail heaviness; and (iii) explicit clock conditioning of the velocity field provides gains beyond simply sampling from a heavy-tailed source.

The contributions of our paper can be summarized as follows:
\begin{enumerate*}
    \item \textbf{A random-clock framework for heavy-tailed flow matching.}
We introduce HTFM, a flow-matching framework in which a path-valued random
clock controls the conditional source covariance. Conditioning on the realized
clock gives a Gaussian conditional source and a tractable endpoint-conditioned
affine target; marginalizing over clocks recovers Gaussian scale-mixture
families, including Gaussian, \(\alpha\)-stable, and Student-\(t\) sources.
    \item \textbf{Signature-based clock conditioning.} We encode the path-valued clock using truncated logsignature features, allowing
the neural velocity field to adapt to the realized conditional Gaussian space
with negligible additional cost. This provides a finite-dimensional interface
for clock-conditioned vector fields.
    \item \textbf{Empirical results.} Experiments on 2D imbalanced \(\alpha\)-stable mixtures, CIFAR10-LT, and HRRR
    weather fields show that heavy-tailed clock laws improve over the Gaussian-clock
variant and that the clock family remains important even at matched polynomial
tail-decay rates. On CIFAR10-LT, HTFM compares favorably with existing
heavy-tailed diffusion and ODE-based baselines, especially at low NFE; ablations
further show that signature conditioning improves over using a heavy-tailed
source alone.
\end{enumerate*}

\section{Preliminaries}
\label{sec:prelim}

\subsection{Flow matching}
\label{subsec:gaussian-fm}

Flow matching learns a time-dependent vector field whose ODE transports a
source distribution \(p_0\) to the data distribution \(p_{\mathrm{data}}\).
Let \(X_t\in\RR^d\) denote the state of the flow at time \(t\in[0,1]\),
a stochastic process with \(X_0\sim p_0\) and marginal law
\(p_t=\Law(X_t)\). Concretely, consider the ODE flow
\begin{equation}
\frac{\dd X_t}{\dd t}=u_t(X_t),
\qquad t\in[0,1],
\qquad X_0\sim p_0,
\label{eq:cnf-ode}
\end{equation}
whose marginal density evolves according to the continuity equation. The goal is to choose \(u_t\) so that
\(p_1=p_{\mathrm{data}}\). Given the probability flow
\((p_t)_{t\in[0,1]}\) and its marginal velocity field \(u_t\), the ideal
marginal flow-matching (FM) objective is
\begin{equation}
\cL_{\mathrm{FM}}(\theta)
=
\EE_{t\sim\mathrm{Unif}(0,1)}
\EE_{X_t\sim p_t}\Big[
\big\|u_\theta(X_t,t)-u_t(X_t)\big\|_2^2
\Big].
\label{eq:general-fm-loss}
\end{equation}
However, the marginal field \(u_t\) is usually intractable. Conditional flow matching (CFM)
therefore conditions on a data endpoint \(X_1\sim p_{\mathrm{data}}\) and
specifies an endpoint-conditioned path \(p_t(\cdot\mid X_1)\) with tractable
conditional velocity \(v_t(\cdot\mid X_1)\). The corresponding objective is
\begin{equation}
\cL_{\mathrm{CFM}}(\theta)
=
\EE_{t\sim\mathrm{Unif}(0,1)}
\EE_{X_1\sim p_{\mathrm{data}}}\,
\EE_{X_t\sim p_t(\cdot\mid X_1)}\Big[
\big\|u_\theta(X_t,t)-v_t(X_t\mid X_1)\big\|_2^2
\Big].
\label{eq:general-cfm-loss}
\end{equation}
Under standard $L^2$-integrability and differentiability assumptions, the
conditional and marginal objectives differ only by a
\(\theta\)-independent constant, so their gradients agree
\citep{lipman2022flow,tong2023improving}. This \(L^2\)-projection principle is
the basic mechanism we preserve in our  framework proposed later in Section~\ref{sec:HTFM}.

\subsection{Heavy-tailed noises as Gaussian scale mixtures}
\label{subsec:ht-noise}

Gaussian and heavy-tailed laws are primarily distinguished by their tail decay.
Gaussian tails are exponentially small, whereas many heavy-tailed laws exhibit
only polynomial decay. Such laws
are often better suited to modeling rare but non-negligible large deviations.

A convenient latent-Gaussian framework for symmetric heavy-tailed modeling is
provided by Gaussian scale mixtures. Let \(V\) be a latent scale variable and
let \(G\sim\cN(0,I_d)\) be independent of \(V\). A random vector
\(X\in\RR^d\) is called a Gaussian scale mixture if it admits a representation
of the form \(X=L(V)G\), where \(L(V)\in\RR^{d\times d}\) is a measurable
random matrix. Equivalently,
\begin{equation}
X\mid V
\sim
\cN\bigl(0,\Sigma(V)\bigr),
\qquad
\Sigma(V):=L(V)L(V)^\top .
\label{eq:gsm-conditional}
\end{equation}
Thus, conditioning on the latent variable \(V\), the law remains Gaussian,
whereas marginalizing over specific choices of \(V\) can recover a heavy-tailed marginal law. The
mechanism is simple: rare large values of the latent scale generate broad
conditional Gaussians, and the mixture assigns non-negligible probability to
large deviations even though every conditional component is light-tailed.
Representative examples include symmetric \(\alpha\)-stable laws and
Student-\(t\) laws; their concrete scale-mixture forms are recalled in
Appendix~\ref{appendix:gsm-examples}.

This latent-Gaussian viewpoint suggests a natural strategy for flow matching.
Rather than trying to regress directly against a heavy-tailed marginal flow, we
condition on the latent mixing object so that the interpolation becomes
conditionally Gaussian. We may then derive the conditional flow-matching target
in this Gaussian regime and exploit the \(L^2\) projection principle from
Section~\ref{subsec:gaussian-fm}. This conditional-Gaussian perspective is also
closely related to \citep{shariatianheavy}, which uses a conditioned representation of
\(\alpha\)-stable noise to build tractable denoising targets for diffusion models.

The Gaussian scale-mixture viewpoint is already broad enough to cover many
symmetric heavy-tailed laws and will be the focus of the main development in
this paper. A broader latent-Gaussian framework, obtained by allowing a
nonzero conditional mean in \eqref{eq:gsm-conditional} and hence passing from
Gaussian scale mixtures to Gaussian mean-variance mixtures, can also
accommodate more general families of asymmetric heavy-tailed laws; we defer this extension to
Appendix~\ref{appendix:gmvm}.

\section{Heavy-Tailed Flow Matching via Random Clocks}
\label{sec:HTFM}
This section develops the random-clock framework for heavy-tailed flow matching; a related diffusion-model extension is given in Appendix~\ref{appendix:diffusion-extension}.

The latent-Gaussian representation in Section~\ref{subsec:ht-noise} suggests
replacing the scalar mixing variable \(V\) by a path-valued latent object, if we want to work on flow matching. We
call this object a \textbf{random clock}. Formally, let
\(
\mathbb D([0,1];\RR_+)
:=
\{\tau:[0,1]\to\RR_+ \;\big|\; \tau
\text{ is c\`adl\`ag and nondecreasing}\},
\)
equipped with its Borel \(\sigma\)-field. A random clock is a random element
on a probability space \((\Omega_T,\cF_T,\PP_T)\),
\(
T:\Omega_T\to \mathbb D([0,1];\RR_+),
\)
and we write \(T_{[0,1]}=(T_t)_{t\in[0,1]}\) for the full realized clock path.
Throughout, \(T_0=0\) almost surely.

The clock is sampled independently of the base randomness used to construct the
source and target variables, such as a standard Gaussian \(G\) and
\(X_1\sim p_{\mathrm{data}}\). Clock-dependent sources are nevertheless
allowed: for example, \(X_0=L(T_{[0,1]})G\) is conditionally Gaussian given the
realized clock path but is not independent of the clock after it is formed.

\begin{figure}[t]
    \centering
    \includegraphics[width=0.92\linewidth]{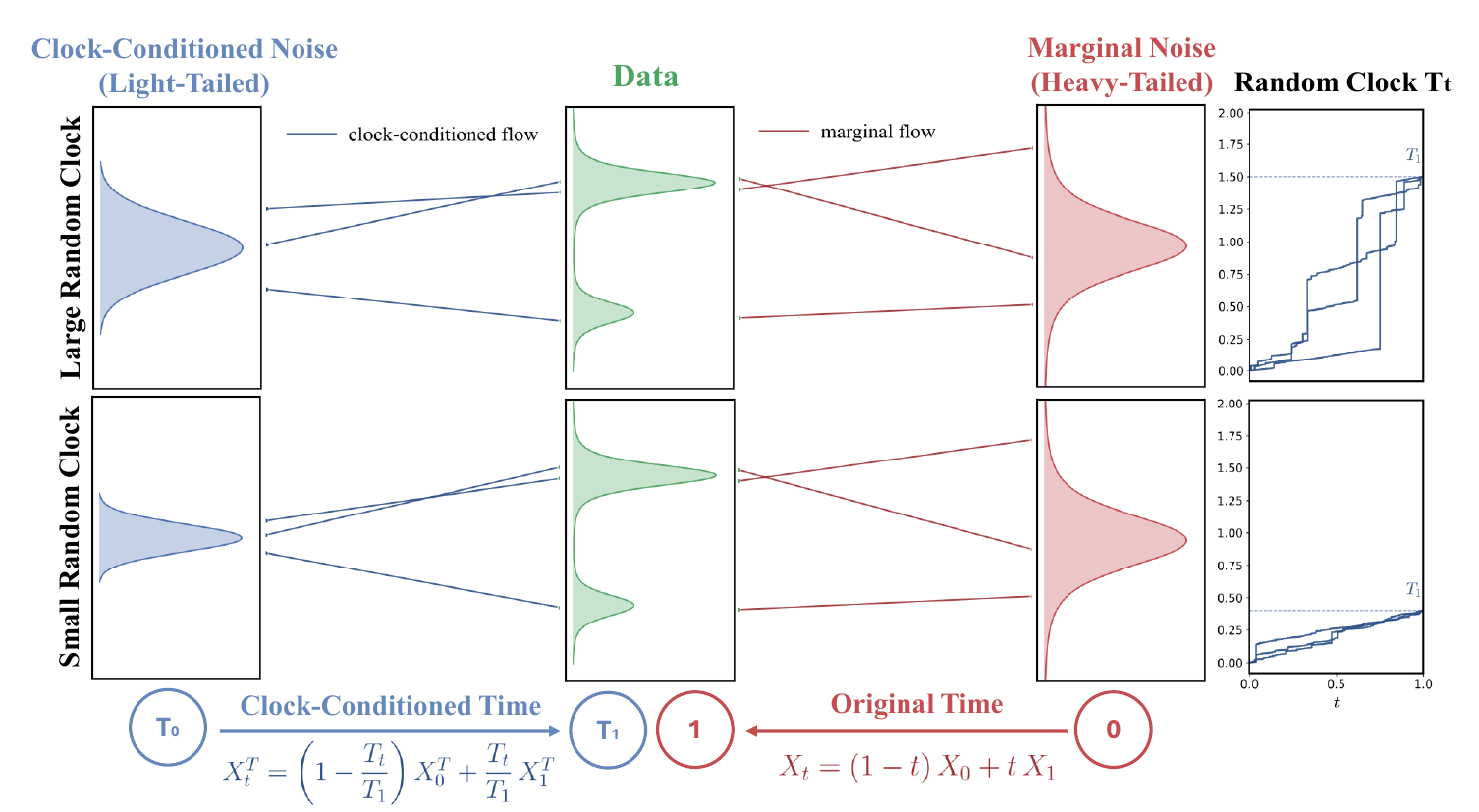}
    \vspace{-0.5em}
    \caption{Overview of random-clock conditioning for heavy-tailed flow matching.
The right column shows two realized clock paths: a large clock realization
(top) and a small clock realization (bottom). For each fixed clock path $T$,
the source distribution is conditionally Gaussian, and the model learns
a clock-conditioned flow from this light-tailed conditional source to the data
distribution. Larger clock realizations induce broader conditional source
distributions, while smaller clock realizations induce more concentrated ones.
After marginalizing over clock paths, the mixture of these conditional Gaussian
sources yields a heavy-tailed marginal source distribution. }
    \vspace{-1em}
    \label{fig:toy_illustration}
\end{figure}

\subsection{Clock-conditioned dynamics and objectives}
\label{subsec:clock-dynamics-loss}

Motivated by Section~\ref{subsec:ht-noise}, we formulate heavy-tailed flow
matching \textbf{pathwise}, that is, conditioning on a realized clock path,
and reserve the outer averaging over the random clock for the final
objective.

Let \(T=(T_t)_{t\in[0,1]}\) be a random clock\footnote{When appearing as a
superscript, \(T\) indicates the clock path and is not an exponent or transpose.}, and fix a realization of the full
clock path \(T_{[0,1]}\). We write \(X_t^T\), \(p_t^T\), and \(u_t^T\) for the
clock-conditioned process, its law \(p_t^T:=\Law(X_t^T)\), and its velocity
field under this realized clock path. Due to the independence of the data law and the random clock, we write \(X_1^T:=X_1\) for the
terminal data variable, and \(p_{\mathrm{data}}(\cdot\mid T_{[0,1]})=p_{\mathrm{data}}\).  We consider
the clock-conditioned ODE
\begin{equation}
\frac{\dd X_t^T}{\dd t} = u_t^T(X_t^T),
\qquad
t\in[0,1],
\qquad
X_0^T \sim p_0^T := p_0(\cdot \mid T_{[0,1]}),
\label{eq:clock-conditioned-ode}
\end{equation}
whose conditional density satisfies the continuity equation
\(
\partial_t p_t^T + \nabla\cdot(p_t^T  u_t^T) = 0.
\)
Equivalently, one may view \(u_t^T(\cdot)\) as the evaluation at
\(T_{[0,1]}\) of a measurable map
\(u_t:\RR^d\times \mathbb D([0,1];\RR_+)\to\RR^d\). Our goal is
to construct, for each realized \(T_{[0,1]}\), a flow from \(p_0^T\) at
time \(t=0\) to the terminal law \(p_{\mathrm{data}}\) at time \(t=1\).

Intuitively, conditioning on a realized clock path locally stretches or
compresses physical time. This conditioning moves the heavy-tailedness
into the outer randomness of the clock, while the inner pathwise problem
becomes conditionally Gaussian and therefore amenable to the usual
\(L^2\)-projection argument of flow matching. An illustration is provided
in Figure~\ref{fig:toy_illustration}.
\vskip-15pt
\paragraph{Training objective.}
For brevity, after fixing \(T_{[0,1]}\) we write \(\EE_{X_t^T}\) for
expectation under \(p_t^T\), and \(\EE_{X_t^T\mid X_1^T}\) for
expectation under \(\Law(X_t^T\mid X_1^T,T_{[0,1]})\). The pathwise
marginal quadratic risk and the corresponding heavy-tailed
flow-matching objective are
\begin{equation}
\mathcal L_{\mathrm{HTFM}}(\theta)
:=
\EE_{t\sim\mathrm{Unif}(0,1)}
\EE_T\underbrace{\Bigl[
\EE_{X_t^T}\bigl[
\|u_\theta(X_t^T,t,T_{[0,1]})-u_t^T(X_t^T)\|_2^2
\bigr]
\Bigr]}_{=:\,R_t^{\mathrm{marg}}(\theta;T)}.
\label{eq:htfm-loss}
\end{equation}
The risk \(R_t^{\mathrm{marg}}(\theta;T)\) is typically intractable
because the pathwise marginal velocity \(u_t^T\) is unknown. As in
\cite{lipman2022flow,liu2022flow}, we therefore introduce a tractable
endpoint-conditioned formulation: for each fixed realized clock path
\(T_{[0,1]}\), denote a conditional path
\(p_t^T(\cdot\mid x_1):=\Law(X_t^T\mid X_1^T=x_1)\) and a corresponding
pathwise conditional velocity field
\(v_t^T(\cdot\mid x_1):\RR^d\to\RR^d\) such that, for a.e.\
\(t\in[0,1]\),
\begin{equation}
\small
p_t^T(x)
=
\int_{\RR^d} p_t^T(x\mid x_1)\,p_{\mathrm{data}}(\dd x_1),
\quad
p_t^T(x)\,u_t^T(x)
=
\int_{\RR^d} p_t^T(x\mid x_1)\,v_t^T(x\mid x_1)\,
p_{\mathrm{data}}(\dd x_1).
\label{eq:clock-compatibility-velocity}
\end{equation}
The corresponding endpoint-conditioned
heavy-tailed flow-matching objective writes
\begin{equation}
\mathcal L_{\mathrm{CHTFM}}(\theta)
:=
\EE_{t\sim\mathrm{Unif}(0,1)}
\EE_{T,X_1^T}\underbrace{\Bigl[
\EE_{X_t^T\mid X_1^T}\bigl[
\|u_\theta(X_t^T,t,T_{[0,1]})-v_t^T(X_t^T\mid X_1^T)\|_2^2
\bigr]
\Bigr]}_{=:\,R_t^{\mathrm{cond}}(\theta;X_1^T,T)}.
\label{eq:chtfm-loss}
\end{equation}

The following proposition is the clock-conditioned analogue of the standard conditional--marginal equivalence~\citep{lipman2022flow,liu2022flow}. The standard identity requires a finite-second-moment marginal velocity and breaks down for heavy-tailed sources; recasting it pathwise in~$T$ is what makes the equivalence available in our setting, as discussed after the proposition. Assumptions and proof are in Appendix~\ref{appendix:proof-chtfm-htfm-equivalence}.

\begin{proposition}[Pathwise conditional--marginal equivalence]
\label{prop:chtfm-htfm-equivalence}
Under the integrability and differentiability conditions in
Appendix~\ref{appendix:proof-chtfm-htfm-equivalence}, for a.e.\ \(t\) there
exists a nonnegative \(C_t(T)\), independent of \(\theta\), such that
\(
\EE\left[
R_t^{\mathrm{cond}}(\theta;X_1^T,T)
\,\middle|\,
T_{[0,1]}
\right]
=
R_t^{\mathrm{marg}}(\theta;T)+C_t(T).
\)
Consequently, whenever the risks and \(C_t(T)\) are integrable over
\((t,T)\) with \(t\sim\mathrm{Unif}(0,1)\),
\begin{equation}
\mathcal L_{\mathrm{CHTFM}}(\theta)
=
\mathcal L_{\mathrm{HTFM}}(\theta)
+
\EE_{t\sim\mathrm{Unif}(0,1)}\EE_T[C_t(T)].
\label{eq:population-risk-decomp}
\end{equation}
Thus, whenever the additive constant is finite, the two objectives have the
same minimizers; when differentiation may be exchanged with the expectations,
their gradients in \(\theta\) are also identical.
\end{proposition}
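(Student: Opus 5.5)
The plan is to run the standard conditional flow matching argument of \citep{lipman2022flow,tong2023improving} separately for each fixed realized clock path \(T_{[0,1]}\), treating \(T\) as a frozen parameter, and only then average over the clock. Since \(X_1^T=X_1\) is independent of the clock, conditioning on \(T_{[0,1]}\) leaves the data law unchanged. The pathwise compatibility relations \eqref{eq:clock-compatibility-velocity} then play exactly the role of the usual marginalization identities.

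For fixed \(T\) and a.e.\ \(t\), I would write \(f(x):=u_\theta(x,t,T_{[0,1]})\) and expand both squared norms. For the marginal risk,
\[
R_t^{\mathrm{marg}}(\theta;T)=\EE_{X_t^T}\|f\|^2-2\,\EE_{X_t^T}\langle f,u_t^T\rangle+\EE_{X_t^T}\|u_t^T\|^2 .
\]
For the conditional risk averaged over \(X_1\) given \(T\), the analogous expansion has \(u_t^T\) replaced by \(v_t^T(\cdot\mid X_1)\). The first terms coincide by the first identity in \eqref{eq:clock-compatibility-velocity}, since \(\int p_t^T(x\mid x_1)\,p_{\mathrm{data}}(\dd x_1)=p_t^T(x)\), combined with Fubini. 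The cross terms coincide by the second identity:
\[
\int\!\!\int \langle f(x),v_t^T(x\mid x_1)\rangle\, p_t^T(x\mid x_1)\,\dd x\, p_{\mathrm{data}}(\dd x_1)=\int \langle f(x),u_t^T(x)\rangle\, p_t^T(x)\,\dd x .
\]
Subtracting the two expansions gives
\[
C_t(T)=\EE\bigl[\|v_t^T(X_t^T\mid X_1)\|^2\mid T\bigr]-\EE_{X_t^T}\|u_t^T(X_t^T)\|^2 .
\]
This quantity is independent of \(\theta\). It is nonnegative by conditional Jensen, because \(u_t^T(x)=\EE[v_t^T(X_t^T\mid X_1)\mid X_t^T=x,T]\). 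Equivalently, \(C_t(T)\) is the conditional variance of the target given \((X_t^T,T)\).

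For the population statement I would integrate the pathwise identity over \(T\sim\PP_T\) and \(t\sim\mathrm{Unif}(0,1)\). Tonelli applies to the nonnegative pieces. The tower property identifies \(\EE_T\,\EE[R_t^{\mathrm{cond}}\mid T]\) with the inner part of \eqref{eq:chtfm-loss}. The assumed integrability of the risks and of \(C_t(T)\) lets us split the expectation of the sum into \(\mathcal L_{\mathrm{HTFM}}+\EE_t\EE_T[C_t(T)]\), which is \eqref{eq:population-risk-decomp}. When this constant is finite, the two objectives differ by a constant, so they have the same minimizers. Gradient equality follows once differentiation under the integral is permitted, for example by dominated convergence.

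The main obstacle is justifying the expansions and the use of Fubini in the pathwise step. Each term must be finite conditionally on \(T\), namely \(\EE\|f(X_t^T)\|^2\), \(\EE\|v_t^T\|^2\) and \(\EE\|u_t^T\|^2\) given \(T\). Otherwise the cancellation is of the form \(\infty-\infty\). This is exactly where the pathwise formulation helps. Given \(T\), the source \(X_0^T\) is Gaussian with covariance \(\Sigma(T)\), so for the affine interpolation \(v_t^T\) is affine in \((X_0^T,X_1)\). Its conditional second moment is then finite whenever \(p_{\mathrm{data}}\) has a finite second moment, even though its unconditional second moment may be infinite. I would state this conditional square-integrability as the appendix assumption, together with \(\EE[\|f(X_t^T)\|^2\mid T]<\infty\). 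Finiteness of \(\EE\|u_t^T\|^2\) then follows from Jensen. I would flag that the only place integrability over \(T\) is needed is the final averaging step, which the proposition assumes explicitly.
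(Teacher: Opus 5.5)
Your proposal is correct and follows essentially the paper's argument. The paper notes that \(u_t^T(X_t^T)=\EE[v_t^T(X_t^T\mid X_1^T)\mid X_t^T,T_{[0,1]}]\) is an \(L^2\)-projection and applies the conditional Pythagorean identity to get \(C_t(T)=\EE[\|v_t^T(X_t^T\mid X_1^T)-u_t^T(X_t^T)\|_2^2\mid T_{[0,1]}]\) directly; your term-by-term expansion is the unpacked proof of that identity (your \(\EE\|v_t^T\|^2-\EE\|u_t^T\|^2\) is the same quantity), and the Fubini/Tonelli averaging over \((t,T)\) uses the same conditional square-integrability assumption stated in Appendix~\ref{appendix:proof-chtfm-htfm-equivalence}.
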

\vskip-10pt
\paragraph{Why a clock-conditioned formulation?}
Without this conditioning, one would attempt to learn the unconditional
marginal velocity \(u_t^\ast\) through an \(L^2\)-risk of the form
\(
\EE_{X_t}[\|u_\theta(X_t,t)-u_t^\ast(X_t)\|_2^2],
\)
which may fail to be finite under heavy tails, since heavy-tailed laws
need only admit moments of order \(p<2\). In practice, this infinite-variance
effect can appear as exploding or highly unstable quadratic training losses,
as observed in L\'evy-based models such as LIM and addressed explicitly by
DLPM through robust aggregation \citep{yoon2023score,shariatianheavy}. Replacing the \(L^2\)-loss
by a generic \(L^p\)-loss is not a satisfactory theoretical remedy either,
because it destroys the $L^2$ projection structure underlying the
conditional--marginal equivalence.
HTFM instead fixes the realized clock path before applying flow matching. In
this clock-conditioned space, the source is Gaussian and the standard
conditional-flow-matching projection identity applies pathwise under
Proposition~\ref{prop:chtfm-htfm-equivalence}. The heavy-tailedness is then
pushed to the outer clock average, whose finiteness is handled by separate
integrability assumptions.

\subsection{Signature feature for path conditioning}
\label{subsec:clock-signature-schedules}

The objective \eqref{eq:chtfm-loss} fixes the inner conditional regression
space through its dependence on the path-valued variable \(T_{[0,1]}\). Because
\(T_{[0,1]}\) is infinite-dimensional, it cannot be passed directly to a neural
field; we therefore summarize the realized clock path by a finite-dimensional
path-signature feature, taking the truncated logsignature \(\ell_m(T)\) defined
below. Path signatures originate in rough path theory
\citep{lyons2014rough,chevyrev2025primer} and have been widely used as
universal finite-dimensional encodings of stream-valued data in machine
learning, both as differentiable neural-network layers
\citep{kidger2019deep,kidger2021signatory} and as kernels and feature maps for
sequential data \citep{levin2013learning,kiraly2019kernels}; they are
computationally cheap to evaluate yet expressive, separating paths up to
tree-like equivalence on suitable compact families
(Proposition~\ref{prop:signature-linear-universality} in
Appendix~\ref{appendix:signature-linear-schedules}).
\vskip-15pt
\paragraph{Time-augmented signature.}
Let \(\mathbf T_s:=(s,T_s)\in\RR^2\), \(s\in[0,1]\), be the time-augmented
clock path. Its truncated signature of order \(m\) is the iterated-integral
tensor
\begin{equation}
{\small
\phi_m(T):=S^{(\le m)}(\mathbf T)
=
\left(
1,\,
\left(
\int_{0<u_1<\cdots<u_k<1}
\dd\mathbf T_{u_1}\otimes\cdots\otimes\dd\mathbf T_{u_k}
\right)_{k=1}^m
\right)
\in\RR^{D_m}.
}
\label{eq:clock-truncated-signature-feature}
\end{equation}
Time augmentation captures how clock mass is allocated in physical time.
For c\`adl\`ag clocks observed on a discretization
\(0=t_0<t_1<\cdots<t_N=1\), we linearly interpolate the sampled time-augmented
points \(\bigl((t_i,T_{t_i})\bigr)_{i=0}^{N}\) and take the truncated signature
of the resulting piecewise linear path; this is the operational meaning of
\(\phi_m(T)\) used throughout the paper. For a more compact representation we
additionally use the truncated logsignature
\(\ell_m(T):=\log_{\otimes}\bigl(\phi_m(T)\bigr)\), computed in the truncated
tensor algebra. The dimension reduction is substantial: for \(q=2\) and
\(m=3\), the raw signature has \(15\) coordinates whereas the nonconstant
logsignature has only \(5\). Precise jump conventions, dimension formulas, and
the lead--lag and piecewise-constant variants are deferred to
Appendix~\ref{appendix:log-sig}.
\vskip-15pt
\paragraph{Network conditioning.}
The neural velocity field uses \(\ell_m(T)\) as an auxiliary conditioning
input alongside the scalar time \(t\) and the state \(X_t^T\). In the
experiments we use the \emph{concat-std} variant: \(\ell_m(T)\) is
standardized using running statistics collected during training, projected to
a fixed embedding size by a small learnable linear map, and concatenated with
the time embedding before being injected into every residual block of the
velocity-field backbone. Replacing the full path argument in
\eqref{eq:chtfm-loss} by \(\ell_m(T)\) gives the practical
signature-conditioned endpoint objective
\begin{equation}
\mathcal L_{\mathrm{CHTFM}}^{\mathrm{sig}}(\theta)
:=
\EE_{t\sim\mathrm{Unif}(0,1)}
\EE_{T,X_1^T}\left[
\EE_{X_t^T\mid X_1^T}\left[
\bigl\|
u_\theta(X_t^T,t,\ell_m(T))-v_t^T(X_t^T\mid X_1^T)
\bigr\|_2^2
\right]
\right].
\label{eq:chtfm-loss-st}
\end{equation}
\vskip-15pt
\paragraph{Computational overhead.}
The signature pipeline adds a one-shot logsignature computation of complexity
\(\mathcal O(Nq^m)\) (with \(N\) clock observation points and \(q=2\),
\(m\le3\) in our experiments), plus a small linear projection of \(\ell_m(T)\)
concatenated with the time embedding. This cost is independent of the data
dimension and negligible compared to a single forward pass of the
velocity-field backbone on \(X_t^T\); no extra denoising or solver step is
introduced.

\subsection{Flow design and transport interpretation}
\label{subsec:flow-choices}

We now specify the affine flow family used in this paper. Let
\(
\Sigma: \mathbb D([0,1];\RR_+) \to \mathbb S_+^d
\)
be a measurable map into the cone of symmetric positive semidefinite
\(d\times d\) matrices. Suppose that, conditioned on the realized clock path,
\(
X_0^T \sim \cN\bigl(0,\Sigma(T_{[0,1]})\bigr),
\)
and that \(X_0^T\) and \(X_1^T\) are conditionally independent given
\(T_{[0,1]}\). Let \(\alpha_t^T\) and \(\beta_t^T\) be clock-dependent
schedules such that, for almost every realized clock path,
\(t\mapsto\alpha_t^T\) and \(t\mapsto\beta_t^T\) are absolutely continuous and
satisfy
\(
\alpha_0^T=0,
\
\beta_0^T=1,
\
\alpha_1^T=1,
\
\beta_1^T=0,
\
\beta_t^T>0\ \text{for }t\in[0,1).
\)
The clock-conditioned affine path is then
\begin{equation}
X_t^T = \alpha_t^T X_1^T + \beta_t^T X_0^T,
\qquad t\in[0,1].
\label{eq:affine-family}
\end{equation}
Under \eqref{eq:affine-family}, for each \(x_1\in\RR^d\) and each realized
clock path \(T_{[0,1]}\), the conditional law of \(X_t^T\) is Gaussian (derivation
in Appendix~\ref{appendix:proof-clock-affine-law}):
\begin{equation}
p_t^T(\cdot\mid x_1)
=
\Law(X_t^T\mid X_1^T=x_1)
=
\cN\bigl(\alpha_t^T x_1,\ (\beta_t^T)^2\Sigma(T_{[0,1]})\bigr).
\label{eq:clock-affine-law}
\end{equation}
The random clock thus influences the interpolation through the conditional
source covariance \(\Sigma(T_{[0,1]})\) and through the schedule pair
\((\alpha_t^T,\beta_t^T)\); in both cases the path remains conditionally
Gaussian once the clock is fixed. For a.e.\ \(t\in[0,1)\), differentiating
\eqref{eq:affine-family} at fixed clock realization and substituting
\(X_0^T=(X_t^T-\alpha_t^T X_1^T)/\beta_t^T\) yields the explicit
endpoint-conditioned target
\begin{equation}
v_t^T(x\mid x_1)
:=
\EE\left[
\frac{\dd X_t^T}{\dd t}
\,\middle|\,
X_t^T=x,\ X_1^T=x_1
\right]
=
\frac{\frac{\dd \beta_t^T}{\dd t}}{\beta_t^T}\,x
+\left(
\frac{\dd \alpha_t^T}{\dd t}
-
\frac{\frac{\dd \beta_t^T}{\dd t}}{\beta_t^T}\alpha_t^T
\right)x_1.
\label{eq:clock-affine-target}
\end{equation}
This is the same algebraic target as in Gaussian flow matching, with
the deterministic schedules replaced by their clock-dependent versions. Training
samples are generated by drawing \(T\) and then \(X_t^T\) from
\eqref{eq:affine-family}; the model regresses
\(u_\theta(X_t^T,t,\ell_m(T))\) toward \eqref{eq:clock-affine-target} in
\eqref{eq:chtfm-loss-st}. For generation, we sample \(T\), compute
\(\ell_m(T)\), draw \(X_0^T\sim\cN(0,\Sigma(T_{[0,1]}))\), and use a numerical
ODE solver on
\(
\dd X_t^T=u_{\theta^*}(X_t^T,t,\ell_m(T))\,\dd t.
\)

We now specialize to the straight-line flow used throughout our experiments; a more general signature-linear schedule family is deferred to Appendix~\ref{appendix:signature-linear-schedules}.
\vskip-15pt
\paragraph{Straight-line flow.}
Set \(\alpha_t^T=t\), \(\beta_t^T=1-t\). Then \eqref{eq:clock-affine-target}
reduces to
\(
v_t^T(x\mid x_1) = \tfrac{x_1-x}{1-t},
\ t\in[0,1),
\)
and the endpoint-conditioned target has no explicit clock dependence. The
clock still enters through the conditional law \eqref{eq:clock-affine-law} via
the source covariance, and hence through the projection defining the pathwise
marginal optimum
\(
u_t^T(x)=\EE[v_t^T(X_t^T\mid X_1^T)\mid X_t^T=x].
\)
A network that does not observe the clock can therefore only learn a clock-averaged field, whereas the clock-conditioned formulation learns the family of optima indexed by the realized clock path. This straight-line schedule cleanly recovers the heavy-tailed marginal source families used in our experiments, and additionally admits a pathwise $W_p$-geodesic interpretation under a $p$-optimal endpoint coupling that remains valid in the infinite-second-moment regime $p<2$. See Appendix~\ref{appendix:proof-clock-straight-geodesic} for the additional theorem.
\vskip-15pt
\paragraph{Heavy-tailed marginals from clock laws.}
With the straight-line schedule, choosing the clock law and source covariance
recovers the named heavy-tailed marginal families used in our experiments.
Throughout this paragraph, \(X_t\) denotes the clock-marginal variable obtained
by first sampling \(T\) and then sampling \(X_t^T\) in the corresponding
clock-conditioned space. Equivalently, 
\(\Law(X_t\mid X_1=x_1)
    =
    \EE_T \left[
        \Law(X_t^T\mid X_1^T=x_1,T_{[0,1]})
    \right].\)\footnote{
When \(\EE_T\) is applied to laws, it denotes the clock mixture:
\((\EE_T\mu^T)(B):=\int \mu^T(B)\,\PP_T(\dd T)\) for every Borel set
\(B\subseteq\RR^d\). This does not require finite moments of the
clock-marginal law.} 
For the representative examples, we use the area-clock
covariance \(\Sigma_A(T_{[0,1]}):=2A(T)I_d\), where
\(A(T):=\int_0^1 T_s\,\dd s\). The following theorem states that the same clock-conditioned
Gaussian construction can recover three commonly used source families: Gaussian, \(\alpha\)-stable, and Student-\(t\). The proof is deferred to
Appendix~\ref{appendix:proof-clock-choice-recovery}.

\begin{theorem}[Recovered clock marginals]
\label{thm:clock-choice-recovery}
Assume the affine path \eqref{eq:affine-family}, the clock-independent
straight-line schedule, and the covariance map \(\Sigma_A(T_{[0,1]})\). For notational brevity, write $p_t(\cdot\mid x_1) := \Law(X_t\mid X_1=x_1)$.
For every \(t\in[0,1)\) and \(x_1\in\RR^d\), the following identities hold. 

\noindent\textbf{(i)}
If \(T_t=t\), then
\(p_t(\cdot\mid x_1)=\cN(\alpha_t x_1,\beta_t^2I_d)\), with sub-Gaussian
tail decay.

\noindent\textbf{(ii)} Let \(\alpha\in(0,2)\), \(\rho=\alpha/2\), and let \(T\) be a \(\rho\)-stable
subordinator normalized by
\(\EE[e^{-sT_t}]=\exp(-(\rho+1)t s^\rho/2)\).
Then \(p_t(\cdot\mid x_1)\) is a location-shifted isotropic \(\alpha\)-stable
law with location \(\alpha_t x_1\), scale \(\beta_t\).  In particular, its
radial tail satisfies
\(
    \PP \left(\|X_t-\alpha_t x_1\|_2>r\mid X_1=x_1\right)
    =
    \Theta(r^{-\alpha}),
    \  r\to\infty.
\) 

\noindent\textbf{(iii)} Let $\nu>0$. If \(V_\nu\sim\mathrm{InvGamma}(\nu/2,\nu/2)\) and \(T_t=tV_\nu\), then
\(p_t(\cdot\mid x_1)\) is a multivariate Student-\(t\) law with location
\(\alpha_t x_1\), scale \(\beta_t^2I_d\), \(\nu\) degrees of freedom. In particular, its radial tail satisfies
\(      \PP \left(\|X_t-\alpha_t x_1\|_2>r\mid X_1=x_1\right)
    =
    \Theta(r^{-\nu}),
    \ r\to\infty .
\)
\end{theorem}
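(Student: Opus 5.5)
Throughout, I work conditionally on the realized clock path and then mix over the clock. With the straight-line schedule and \(\Sigma_A\), \eqref{eq:clock-affine-law} gives
\[
X_t^T-\alpha_t x_1 \,\big|\, X_1^T=x_1,\,T_{[0,1]} \;\sim\; \cN\bigl(0,\,2A(T)\beta_t^2 I_d\bigr).
\]
Hence, after mixing over \(T\), \(X_t-\alpha_t x_1\overset{d}{=}\beta_t\sqrt{2A(T)}\,G\) with \(G\sim\cN(0,I_d)\) independent of \(T\). This uses the independence of the clock from the base randomness and the clock-mixture convention in the footnote. The whole proof therefore reduces to identifying the law of the scalar mixing variable \(W:=2A(T)\) in each case. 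I will then read off the Gaussian scale-mixture identity (Appendix~\ref{appendix:gsm-examples}) and the tail behaviour.

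\textbf{Case (i).} Here \(A(T)=\int_0^1 s\,\dd s=1/2\), so \(W=1\) and \(p_t(\cdot\mid x_1)=\cN(\alpha_t x_1,\beta_t^2 I_d)\). The sub-Gaussian tail then follows from the standard \(\chi^2_d\) (or Gaussian concentration) bound on \(\|G\|_2\).

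\textbf{Case (iii).} Here \(A(T)=V_\nu\int_0^1 s\,\dd s=V_\nu/2\), so \(W=V_\nu\sim\mathrm{InvGamma}(\nu/2,\nu/2)\). The representation \(\beta_t\sqrt{V_\nu}\,G\) is the classical scale-mixture form of the multivariate Student-\(t\) with \(\nu\) degrees of freedom and scale matrix \(\beta_t^2 I_d\). This can be checked by integrating the Gaussian density against the inverse-gamma density, which is a gamma integral. For the tail I will use the explicit density, which is proportional to \((1+\|y\|^2/(\nu\beta_t^2))^{-(\nu+d)/2}\). Integrating it in polar coordinates over \(\{\|y\|>r\}\) gives \(\int_r^\infty \varrho^{d-1}\varrho^{-\nu-d}\,\dd\varrho=\Theta(r^{-\nu})\). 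Alternatively, Breiman's lemma applies: \(\PP(V_\nu>v)\sim c\,v^{-\nu/2}\) and \(\EE\|G\|^{\nu+\epsilon}<\infty\), so \(\PP(V_\nu\|G\|^2>r^2)\sim c\,\EE\|G\|^{\nu}\,r^{-\nu}\).

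\textbf{Case (ii).} This is the main step. First I rewrite the area by Fubini as \(A(T)=\int_0^1 T_s\,\dd s=\int_{(0,1]}(1-s)\,\dd T_s\). Next I compute its Laplace transform using independent, stationary increments. Approximate the integrand by step functions \(\sum_i (1-s_i)\mathbf 1_{(s_{i-1},s_i]}\). For each step function,
\[
\EE\Bigl[e^{-\lambda\sum_i(1-s_i)(T_{s_i}-T_{s_{i-1}})}\Bigr]=\exp\Bigl(-c\sum_i (\lambda(1-s_i))^\rho\Delta s_i\Bigr),\qquad c=\tfrac{\rho+1}{2}.
\]
The step-function integrals increase monotonically to \(A(T)\) because \(T\) is nondecreasing, so monotone convergence gives
\[
\EE e^{-\lambda A}=\exp\Bigl(-c\lambda^\rho\!\int_0^1(1-s)^\rho\,\dd s\Bigr)=\exp(-\lambda^\rho/2).
\]
This is where the normalization \((\rho+1)/2\) is designed to cancel.

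Then, conditioning on \(A\), the characteristic function of \(Y=\beta_t\sqrt{2A}\,G\) is
\[
\EE\bigl[e^{-A\beta_t^2\|\xi\|^2}\bigr]=\exp\bigl(-\tfrac12(\beta_t\|\xi\|)^{\alpha}\bigr),
\]
using \(2\rho=\alpha\). This is exactly the isotropic \(\alpha\)-stable characteristic function with scale \(\beta_t\), in the convention of Appendix~\ref{appendix:gsm-examples}.

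For the radial tail of case (ii), the positive \(\rho\)-stable law satisfies \(\PP(A>a)\sim c'a^{-\rho}\). This follows from the Tauberian theorem applied to \(1-\EE e^{-\lambda A}\sim\lambda^\rho/2\) as \(\lambda\downarrow0\). Since \(\EE\|G\|^{\alpha+\epsilon}<\infty\), Breiman's lemma gives \(\PP(2A\|G\|^2>r^2/\beta_t^2)\sim C\,r^{-\alpha}\), hence \(\Theta(r^{-\alpha})\).

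I expect the main technical care to be in the Laplace-transform computation for the stochastic integral \(\int(1-s)\,\dd T_s\): justifying the Riemann-sum limit and matching the precise stable-scale convention. The tail estimates are standard once the mixing laws are identified.
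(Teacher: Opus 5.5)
Your proposal is correct and follows essentially the same route as the paper's proof. You reduce to the Gaussian scale mixture with mixing variable $2A(T)$ and evaluate $A(T)$ in each case. For the stable case you rewrite $A(T)=\int_0^1(1-s)\,\mathrm{d}T_s$ and use independent increments to obtain $\exp(-\lambda^\rho/2)$, then read off the stable characteristic function; for the Student-$t$ case you use the inverse-gamma integral. Your step-function/monotone-convergence justification of the Laplace transform and your Tauberian--Breiman tail argument fill in details that the paper delegates to standard references.
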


\vskip-15pt
\paragraph{Beyond clock-independent straight-line schedules.}
The schedules \(\alpha_t^T,\beta_t^T\) may themselves depend on the realized clock path through its truncated signature; concrete signature-linear and VE-type parameterizations and a near-linear boundary-corrected variant are collected in Appendices~\ref{appendix:signature-linear-schedules}, \ref{appendix:ve-signature-schedule}, and~\ref{appendix:other-clock-paths}. Such clock-dependent schedules generally produce broader Gaussian-mixture sources rather than exact \(\alpha\)-stable or Student-\(t\) laws, so the named recoveries of Theorem~\ref{thm:clock-choice-recovery} are specific to the clock-independent case. A detailed comparison of the random-clock formulation with DLPM, LIM, and Student-\(t\)-based heavy-tailed diffusion families, together with a discussion of data-adaptive clock laws, is given in Appendix~\ref{appendix:hth-comparison}.

\section{Experiments}
\label{sec:experiment}
To assess the effectiveness and generality of the proposed heavy-tailed flow matching model via random clock, we conduct extensive experiments across three benchmarks of increasing dimension and
distinct noise structure.
Full
experimental details are provided in Appendix~\ref{appendix:experimental-setup}.
Ablation studies on flow type, number of function evaluations (NFE), and ODE solver are  deferred to Appendix~\ref{appendix:ablation-studies}.

\subsection{Experimental Setup}
\label{subsec:experiment-setup}
\paragraph{Datasets}
We use three benchmarks of increasing dimension and distinct heavy-tail structure.
\emph{(i)~2D imbalanced $\alpha$-stable mixture~\citep{shariatianheavy}}: a $9$-component isotropic $\alpha$-stable mixture on a $3{\times}3$ grid with imbalanced weights, swept over per-component stability index $\alpha_{D}\in\{1.5,1.8\}$ (smaller $\alpha_{D}$ gives heavier tails).
\emph{(ii)~CIFAR10-LT~\citep{cao2019learning,krizhevsky2009learning}}: a long-tailed subsample of CIFAR-10 at $32{\times}32$ with imbalance ratio $\rho{=}0.01$. CIFAR10-LT also serves as the testbed for baseline comparison and ablations.
\emph{(iii)~HRRR VIL~\citep{benjamin2016north,dowell2022high,pandey2024heavy}}: hourly Vertically Integrated Liquid (VIL) analyses from the High-Resolution Rapid Refresh archive, cropped to $128{\times}128$ over the Central US; VIL is a strongly right-tailed quantity tied to convective weather extremes, following~\citep{pandey2024heavy}.
Sample sizes, splits, augmentations, backbones, optimizers, and samplers are detailed in Appendix~\ref{appendix:experimental-setup}.

\paragraph{Tasks and Metrics}
On  2D $\alpha$-stable
mixtures, we evaluate  mode coverage ability across models using the precision--recall
$f_1^{\mathrm{pr}}$ score~\citep{shariatianheavy}, computed between real
and generated samples.  On CIFAR10-LT, we report Fr\'echet Inception
Distance~\citep{heusel2017gans} (FID), computed against the full
long-tailed training split. On HRRR VIL, we follow
\citep{pathak2022fourcastnet,pandey2024heavy} and evaluate unconditional
generation of VIL fields. For quantitative tail analysis, we compare
flattened generated and test-set intensity samples using three
one-dimensional tail-aware statistics: the kurtosis ratio (KR), skewness
ratio (SR), and a tail-restricted two-sample Kolmogorov--Smirnov statistic
(KS). All results are averaged over 3 random seeds. Formal definitions and implementation details are provided in
Appendix~\ref{appendix:setup-evaluation}.

\paragraph{Methods} Following Theorem~\ref{thm:clock-choice-recovery}, different specifications of clock lead  to different underlying source distribution for HTFM. In the following, we refer to the deterministic-clock variant $T_t{=}t$ as HTFM-G, essentially recovering the standard Gaussian flow matching; the $\alpha/2$-stable subordinator clock as HTFM-$\alpha$;
and the  inverse-Gamma clock $T_t{=}tV_\nu$ for student-$t$, where
$V_\nu\sim\mathrm{InvGamma}(\nu/2,\nu/2)$, as HTFM-$t$. Each variant is
indexed by the polynomial tail-decay rate of its source: $p_\alpha$ for
$\alpha$-stable, $p_\nu$ for Student-$t$, and $p{=}\infty$ for 
Gaussian (since Gaussians essentially have exponential tail decay). Thus, the polynomial tail-decay rate serves as the tail-calibration
parameter across clock families.

\subsection{Cross-Domain Tail Calibration with Random Clocks}
\label{subsec:main-results}

\begin{wraptable}[14]{r}{0.6\textwidth}
\vspace{-1.0em}
\setlength{\abovecaptionskip}{3pt}
\setlength{\belowcaptionskip}{0pt}
\centering
\scriptsize
\setlength{\tabcolsep}{3pt}
\renewcommand{\arraystretch}{1.03}
\resizebox{\linewidth}{!}{%
\begin{tabular}{llcccccc}
\toprule
\multicolumn{2}{c}{Method} 
& \multicolumn{2}{c}{2D synthetic ($f_1^{\mathrm{pr}}\uparrow$)} 
& CIFAR10-LT 
& \multicolumn{3}{c}{HRRR VIL} \\
\cmidrule(lr){1-2}
\cmidrule(lr){3-4}
\cmidrule(lr){5-5}
\cmidrule(lr){6-8}
Variant & $p$ 
& $\alpha_{D}{=}1.5$ 
& $\alpha_{D}{=}1.8$ 
& FID $\downarrow$ 
& KR $\downarrow$ 
& SR $\downarrow$ 
& KS $\downarrow$ \\
\midrule
HTFM-G & $\infty$ 
& 0.888 & 0.893 & 19.29 & 0.72 & 0.68 & 0.42 \\
\midrule
\multirow{3}{*}{HTFM-$\alpha$}
& 1.6 & 0.943 & 0.934 & 16.19 & \textbf{0.04} & 0.11 & 0.11 \\
& 1.7 & 0.941 & 0.923 & 16.10 & 0.47 & \textbf{0.06} & 0.04 \\
& 1.8 & 0.931 & 0.921 & \textbf{15.89} & 0.06 & 0.31 & \textbf{0.03} \\
\midrule
\multirow{3}{*}{HTFM-$t$}
& 1.7 & \textbf{0.962} & \textbf{0.959} & 16.31 & 0.56 & 0.14 & 0.14 \\
& 2.0 & 0.954 & 0.950 & 16.76 & 0.43 & 0.52 & 0.21 \\
& 3.0 & 0.943 & 0.939 & 16.14 & 0.16 & 0.41 & 0.16 \\
\bottomrule
\end{tabular}%
}
\caption{HTFM main results across benchmarks. \(p\) is the polynomial
tail-decay exponent (\(p_\alpha\) for
HTFM-\(\alpha\) and \(p_\nu\) for HTFM-\(t\)). Bold indicates the
best result in each column.}
\vspace{-5em}
\label{tab:main-results-combined}

\end{wraptable}
We first study whether HTFM can act as a
single heavy-tailed flow-matching framework across different datasets. For each benchmark, we keep the training protocol
fixed and vary only the random-clock law and its tail parameter, thereby
comparing the Gaussian clock HTFM-G against the two heavy-tailed clock
families HTFM-$\alpha$ and HTFM-$t$. We report one
sampling budget per benchmark: $f_1^{\mathrm{pr}}$ at NFE${=}10$ on the
2D mixture, FID at NFE${=}100$ on CIFAR10-LT, and HRRR VIL tail
statistics at NFE${=}50$. See Appendix~\ref{appendix:experimental-setup} for detailed information.

\Cref{tab:main-results-combined} reveals two patterns.
\textbf{(i) HTFM with a heavy-tailed clock consistently  outperforms
HTFM-G across three datasets,} which indicates that the random clock provides a
useful inductive bias for heavy-tailed generation.
\textbf{(ii) The tail-decay exponent alone does not determine the best
source family.} At the matched decay rate $p{=}1.7$, HTFM-$t$ performs
best on the 2D mixture, whereas HTFM-$\alpha$ performs better on
CIFAR10-LT and HRRR VIL. Thus, the polynomial tail-decay rate captures
only part of the modeling choice: the \emph{shape} of the source beyond
its asymptotic tail rate, such as the jump structure of $\alpha$-stable
clocks versus the scale-mixture geometry of Student-$t$ clocks, also
interacts with the data distribution. This suggests that the clock
family is a meaningful design knob even after the nominal tail heaviness
is fixed.

\begin{table}[t]
\setlength{\abovecaptionskip}{7pt}
\setlength{\belowcaptionskip}{0pt}
\centering
\small
\setlength{\tabcolsep}{3.6pt}
\renewcommand{\arraystretch}{1.08}
\setlength{\dashlinedash}{1.5pt}
\setlength{\dashlinegap}{1.2pt}
\resizebox{\linewidth}{!}{%
\begin{tabular}{lccc:c:ccccccc:c}
\toprule
\multicolumn{1}{c}{\multirow{2}{*}{Method}}
  & \multicolumn{3}{c}{Noise Support}
  & \multicolumn{1}{c}{\multirow{2}{*}{NFE}}
  & \multicolumn{8}{c}{Polynomial Tail Decay Exponent} \\
\cmidrule(lr){2-4}\cmidrule(lr){6-13}
& $\alpha$-stable & Student-$t$ & Gaussian
& & $p_\alpha=1.6$ & $p_\alpha=1.7$ & $p_\alpha=1.8$ & $p_\alpha=1.9$
& $p_\nu=1.7$ & $p_\nu=2.0$ & $p_\nu=3.0$ & $p=\infty$ \\
\midrule
\multicolumn{4}{l:}{\textit{Baselines}}
& \multicolumn{1}{c:}{}
& \multicolumn{7}{c:}{\textit{Heavy Tailed}}
& \multicolumn{1}{c}{\textit{Gaussian}} \\
\multirow{3}{*}{LIM}
& & &
& 20 & 135.9 & 130.2 & 126.8 & 132.5 & -- & -- & -- & -- \\
& \cmark & \xmark & \xmark
& 100 & 34.70 & 34.11 & 32.40 & 34.10 & -- & -- & -- & -- \\
& & &
& 500 & 20.22 & 19.18 & 18.23 & 20.60 & -- & -- & -- & -- \\
\addlinespace[3.0pt]
\cdashline{1-13}
\addlinespace[3.0pt]
\multirow{3}{*}{DLPM}
& & &
& 20 & 29.16 & 28.55 & 32.49 & 30.96 & -- & -- & -- & 35.49 \\
& \cmark & \xmark & \cmark
& 100 & 18.18 & 18.23 & 18.29 & 18.74 & -- & -- & -- & 22.28 \\
& & &
& 500 & 15.26 & 14.91 & 15.35 & 15.43 & -- & -- & -- & 19.47 \\
\addlinespace[3.0pt]
\cdashline{1-13}
\addlinespace[3.0pt]
\multirow{3}{*}{DLIM}
& & &
& 20 & 21.30 & 23.90 & 22.75 & 24.50 & -- & -- & -- & 25.45 \\
& \cmark & \xmark & \cmark
& 100 & 19.69 & 20.38 & 18.46 & 19.60 & -- & -- & -- & 21.63 \\
& & &
& 500 & 17.22 & 17.73 & 17.90 & 18.14 & -- & -- & -- & 19.92 \\
\addlinespace[3.0pt]
\cdashline{1-13}
\addlinespace[3.0pt]
\multirow{3}{*}{t-Flow}
& & &
& 20 & -- & -- & -- & -- & -- & -- & 22.49 & -- \\
& \xmark & \cmark & \xmark
& 100 & -- & -- & -- & -- & -- & -- & 19.05 & -- \\
& & &
& 500 & -- & -- & -- & -- & -- & -- & 17.66 & -- \\
\midrule
\multicolumn{4}{l:}{\textit{Our Framework}}
& \multicolumn{1}{c:}{}
& \multicolumn{7}{c:}{\textit{Heavy Tailed}}
& \multicolumn{1}{c}{\textit{Gaussian}} \\
\multirow{3}{*}{HTFM}
& & &
& 20 & 17.52 & 17.13 & 16.91 & 16.76 & 19.91 & 20.48 & 19.35 & 22.67 \\
& \cmark & \cmark & \cmark
& 100 & 16.19 & 16.10 & 15.89 & 16.21 & 16.31 & 16.76 & 16.14 & 19.29 \\
& & &
& 500 & 15.27 & 15.04 & 14.99 & 15.67 & 15.97 & 16.51 & 15.38 & 19.04 \\
\bottomrule
\end{tabular}%
}
\caption{Main CIFAR10-LT results. We report FID at each NFE while sweeping the
polynomial tail-decay parameter \(p\). A dash ``--'' marks method--tail pairs
that are not applicable.}
\label{tab:cifar10lt-main-results}
\vspace{-1.5em}
\end{table}

\subsection{Comparison with Baselines}
\label{subsec:cifar10lt-baselines}
In \Cref{tab:cifar10lt-main-results}, we compare HTFM against four heavy-tailed
generative baselines on CIFAR10-LT under matched backbone and training budget: LIM \cite{yoon2023score,popovimproved}, DLPM/DLIM \cite{shariatianheavy}, and t-Flow \cite{pandey2024heavy}. Columns sweep the
polynomial tail-decay rate across the union of the $\alpha$-stable range
$p_\alpha\in\{1.6,1.7,1.8,1.9\}$, the Student-$t$ range
$p_\nu\in\{1.7,2.0,3.0\}$, and the Gaussian limit $p=\infty$.

Three observations can be made.
\textbf{(i) HTFM provides a versatile interface for tail calibration.}
LIM and DLPM/DLIM are tied to $\alpha$-stable laws, while t-Flow is tied
to the finite-variance Student-$t$ regime. In contrast, HTFM recovers the
Gaussian, $\alpha$-stable, and Student-$t$ regimes by changing only the
random-clock law, while keeping the same backbone and training objective.
\textbf{(ii) HTFM inherits the low-NFE efficiency of flow matching.}
At low NFE, HTFM substantially improves over the $\alpha$-stable
diffusion baselines LIM and DLPM; at larger NFE, HTFM and DLPM become
more comparable.
\textbf{(iii) HTFM outperforms the ODE-based heavy-tailed samplers.}
Compared with DLIM and t-Flow, HTFM achieves consistently lower FID
across the reported NFE budgets.

\subsection{Ablation: Signature Order for Clock Conditioning}
\label{subsec:source-covariance-ablation}

This ablation tests whether HTFM's gains come solely from using a heavy-tailed
source, or whether the velocity field also benefits from observing the realized
clock. On CIFAR10-LT, we report FID at NFE\(=100\) while varying the order
\(m\) of the logsignature feature \(\ell_m(T)\) supplied to the network
(cf.~\eqref{eq:chtfm-loss-st}). The case
\(m=0\) denotes no clock feature: the model is still trained with samples from
the random-clock source, but only observes \((X_t,t)\). For \(m\ge 1\), the
model additionally receives logsignature features of the realized clock path,
allowing the vector field to adapt to the corresponding conditional Gaussian
space. For HTFM-\(t\), the affine clock \(T_t=tV_\nu\) is already determined by
first-order path information, so higher-order features are omitted.

\begin{wraptable}[8]{r}{0.42\linewidth}
  \vspace{-14pt}
  \setlength{\abovecaptionskip}{5pt}
  \setlength{\belowcaptionskip}{5pt}
  \centering
  \scriptsize
  \setlength{\tabcolsep}{2.6pt}
  \renewcommand{\arraystretch}{1.05}
  \resizebox{\linewidth}{!}{%
  \begin{tabular}{lcccc}
  \toprule
  Variant & Tail exp. & $m=0$ & $m=1$ & $m=2$ \\
  \midrule
  \multirow{3}{*}{HTFM-$\alpha$}
  & $p_\alpha=1.6$ & 16.78 & \textbf{16.19} & 16.21 \\
  & $p_\alpha=1.7$ & 16.65 & \textbf{16.10} & 16.46 \\
  & $p_\alpha=1.8$ & 16.49 & \textbf{15.89} & 15.94 \\
  \midrule
  \multirow{3}{*}{HTFM-$t$}
  & $p_\nu=1.7$ & 18.72 & \textbf{16.31} & -- \\
  & $p_\nu=2.0$ & 17.30 & \textbf{16.75} & -- \\
  & $p_\nu=3.0$ & 16.58 & \textbf{16.14} & -- \\
  \bottomrule
  \end{tabular}%
  }
  \caption{Signature-order ablation on CIFAR10-LT. Bold indicates the
best result in each row.}
  \label{tab:source-signature-ablation}
\end{wraptable}

Table~\ref{tab:source-signature-ablation} demonstrates the effectiveness of using signature features for clock-conditioning in HTFM. Across all reported HTFM-$\alpha$ and
HTFM-$t$ settings, using $m{=}1$ improves over the clock-independent
$m{=}0$ variant. Meanwhile, increasing the order from $m{=}1$ to $m{=}2$ does not provide a
consistent additional gain, suggesting that first-order clock information
already captures the dominant conditional-scale variation for this
benchmark.

\section{Conclusion}
\label{sec:conclusion}
We introduced Heavy-Tailed Flow Matching via Random Clocks (HTFM), which
represents different families of heavy-tailed sources as mixtures of
clock-conditioned Gaussian sources within a single flow-matching objective.
Using truncated logsignature features, HTFM conditions the velocity field on
the realized clock with negligible overhead. 
Across 2D synthetic data, CIFAR10-LT, and HRRR weather fields, HTFM improves
over the Gaussian-clock variant, enables controllable tail calibration, and
achieves favorable low-NFE performance against heavy-tailed baselines.

\newpage
\bibliography{bib}

@article{pooladian2023multisample,
  title={Multisample flow matching: Straightening flows with minibatch couplings},
  author={Pooladian, Aram-Alexandre and Ben-Hamu, Heli and Domingo-Enrich, Carles and Amos, Brandon and Lipman, Yaron and Chen, Ricky TQ},
  journal={arXiv preprint arXiv:2304.14772},
  year={2023}
}

@article{liu2022rectified,
  title={Rectified flow: A marginal preserving approach to optimal transport},
  author={Liu, Qiang},
  journal={arXiv preprint arXiv:2209.14577},
  year={2022}
}

@article{albergo2022building,
  title={Building normalizing flows with stochastic interpolants},
  author={Albergo, Michael S and Vanden-Eijnden, Eric},
  journal={arXiv preprint arXiv:2209.15571},
  year={2022}
}

@incollection{chevyrev2025primer,
  title={A primer on the signature method in machine learning},
  author={Chevyrev, Ilya and Kormilitzin, Andrey},
  booktitle={Signature Methods in Finance: An Introduction with Computational Applications},
  pages={3--64},
  year={2025},
  publisher={Springer}
}

@article{kidger2019deep,
  title={Deep signature transforms},
  author={Kidger, Patrick and Bonnier, Patric and Perez Arribas, Imanol and Salvi, Cristopher and Lyons, Terry},
  journal={Advances in neural information processing systems},
  volume={32},
  year={2019}
}

@inproceedings{kidger2021signatory,
  title={Signatory: differentiable computations of the signature and logsignature transforms, on both {CPU} and {GPU}},
  author={Kidger, Patrick and Lyons, Terry J.},
  booktitle={International Conference on Learning Representations},
  year={2021},
  url={https://openreview.net/forum?id=lqU2cs3Zca}
}

@inproceedings{morrill2021neural,
  title={Neural Rough Differential Equations for Long Time Series},
  author={Morrill, James and Salvi, Cristopher and Kidger, Patrick and Foster, James},
  booktitle={Proceedings of the 38th International Conference on Machine Learning},
  pages={7829--7838},
  year={2021},
  volume={139},
  series={Proceedings of Machine Learning Research},
  publisher={PMLR},
  url={https://proceedings.mlr.press/v139/morrill21b.html}
}

@article{kiraly2019kernels,
  title={Kernels for Sequentially Ordered Data},
  author={Kiraly, Franz J. and Oberhauser, Harald},
  journal={Journal of Machine Learning Research},
  volume={20},
  number={31},
  pages={1--45},
  year={2019}
}

@article{lyons2014rough,
  title={Rough paths, signatures and the modelling of functions on streams},
  author={Lyons, Terry},
  journal={arXiv preprint arXiv:1405.4537},
  year={2014}
}

@article{levin2013learning,
  title={Learning from the past, predicting the statistics for the future, learning an evolving system},
  author={Levin, Daniel and Lyons, Terry and Ni, Hao},
  journal={arXiv preprint arXiv:1309.0260},
  year={2013}
}

@article{andrews1974scale,
  title={Scale mixtures of normal distributions},
  author={Andrews, D. F. and Mallows, C. L.},
  journal={Journal of the Royal Statistical Society: Series B (Methodological)},
  volume={36},
  number={1},
  pages={99--102},
  year={1974},
  doi={10.1111/j.2517-6161.1974.tb00989.x}
}

@article{barndorff1982normal,
  title={Normal variance-mean mixtures and z-distributions},
  author={Barndorff-Nielsen, O. and Kent, J. and S{\o}rensen, M.},
  journal={International Statistical Review},
  volume={50},
  number={2},
  pages={145--159},
  year={1982},
  doi={10.2307/1402598}
}

@article{kornilov2024optimal,
  title={Optimal flow matching: Learning straight trajectories in just one step},
  author={Kornilov, Nikita and Mokrov, Petr and Gasnikov, Alexander and Korotin, Alexander},
  journal={Advances in Neural Information Processing Systems},
  volume={37},
  pages={104180--104204},
  year={2024}
}

@article{tong2023improving,
  title={Improving and generalizing flow-based generative models with minibatch optimal transport},
  author={Tong, Alexander and Fatras, Kilian and Malkin, Nikolay and Huguet, Guillaume and Zhang, Yanlei and Rector-Brooks, Jarrid and Wolf, Guy and Bengio, Yoshua},
  journal={arXiv preprint arXiv:2302.00482},
  year={2023}
}

@article{song2020score,
  title={Score-based generative modeling through stochastic differential equations},
  author={Song, Yang and Sohl-Dickstein, Jascha and Kingma, Diederik P and Kumar, Abhishek and Ermon, Stefano and Poole, Ben},
  journal={arXiv preprint arXiv:2011.13456},
  year={2020}
}

@article{yoon2023score,
  title={Score-based generative models with L{\'e}vy processes},
  author={Yoon, Eun Bi and Park, Keehun and Kim, Sungwoong and Lim, Sungbin},
  journal={Advances in Neural Information Processing Systems},
  volume={36},
  pages={40694--40707},
  year={2023}
}

@article{pandey2024heavy,
  title={Heavy-tailed diffusion models},
  author={Pandey, Kushagra and Pathak, Jaideep and Xu, Yilun and Mandt, Stephan and Pritchard, Michael and Vahdat, Arash and Mardani, Morteza},
  journal={arXiv preprint arXiv:2410.14171},
  year={2024}
}

@inproceedings{lian2025cauchy,
  title={Cauchy Diffusion: A Heavy-tailed Denoising Diffusion Probabilistic Model for Speech Synthesis},
  author={Lian, Qi and Qi, Yu and Wang, Yueming},
  booktitle={Proceedings of the AAAI Conference on Artificial Intelligence},
  volume={39},
  number={23},
  pages={24549--24557},
  year={2025},
  doi={10.1609/aaai.v39i23.34634}
}

@article{baule2025generative,
  title={Generative modelling with jump-diffusions},
  author={Baule, Adrian},
  journal={arXiv preprint arXiv:2503.06558},
  year={2025}
}

@inproceedings{popovimproved,
  title={Improved Sampling Algorithms for L{\'e}vy-It{\^o} Diffusion Models},
  author={Popov, Vadim and Yermekova, Assel and Sadekova, Tasnima and Khrapov, Artem and Kudinov, Mikhail Sergeevich},
  booktitle={The Thirteenth International Conference on Learning Representations}
}

@inproceedings{shariatianheavy,
  title={Heavy-Tailed Diffusion with Denoising Levy Probabilistic Models},
  author={Shariatian, Dario and Simsekli, Umut and Durmus, Alain Oliviero},
  booktitle={The Thirteenth International Conference on Learning Representations}
}

@article{liu2022flow,
  title={Flow straight and fast: Learning to generate and transfer data with rectified flow},
  author={Liu, Xingchao and Gong, Chengyue and Liu, Qiang},
  journal={arXiv preprint arXiv:2209.03003},
  year={2022}
}

@article{lipman2022flow,
  title={Flow matching for generative modeling},
  author={Lipman, Yaron and Chen, Ricky TQ and Ben-Hamu, Heli and Nickel, Maximilian and Le, Matt},
  journal={arXiv preprint arXiv:2210.02747},
  year={2022}
}

@article{ho2020denoising,
  title={Denoising diffusion probabilistic models},
  author={Ho, Jonathan and Jain, Ajay and Abbeel, Pieter},
  journal={Advances in neural information processing systems},
  volume={33},
  pages={6840--6851},
  year={2020}
}

@article{kovachki2023neural,
  title={Neural operator: Learning maps between function spaces with applications to pdes},
  author={Kovachki, Nikola and Li, Zongyi and Liu, Burigede and Azizzadenesheli, Kamyar and Bhattacharya, Kaushik and Stuart, Andrew and Anandkumar, Anima},
  journal={Journal of Machine Learning Research},
  volume={24},
  number={89},
  pages={1--97},
  year={2023}
}

@article{cuchiero2025universal,
  title={Universal approximation theorems for continuous functions of c{\`a}dl{\`a}g paths and L{\'e}vy-type signature models},
  author={Cuchiero, Christa and Primavera, Francesca and Svaluto-Ferro, Sara},
  journal={Finance and Stochastics},
  volume={29},
  number={2},
  pages={289--342},
  year={2025},
  publisher={Springer}
}

@inproceedings{cao2019learning,
  title={Learning imbalanced datasets with label-distribution-aware margin loss},
  author={Cao, Kaidi and Wei, Colin and Gaidon, Adrien and Arechiga, Nikos and Ma, Tengyu},
  booktitle={Advances in Neural Information Processing Systems},
  year={2019}
}

@inproceedings{liu2019large,
  title={Large-scale long-tailed recognition in an open world},
  author={Liu, Ziwei and Miao, Zhongqi and Zhan, Xiaohang and Wang, Jiayun and Gong, Boqing and Yu, Stella X},
  booktitle={Proceedings of the IEEE/CVF Conference on Computer Vision and Pattern Recognition (CVPR)},
  pages={2537--2546},
  year={2019}
}

@techreport{krizhevsky2009learning,
  title={Learning multiple layers of features from tiny images},
  author={Krizhevsky, Alex and Hinton, Geoffrey},
  year={2009},
  institution={University of Toronto}
}

@article{dowell2022high,
  title={The High-Resolution Rapid Refresh (HRRR): An hourly updating convection-allowing forecast model. Part I: Motivation and system description},
  author={Dowell, David C and Alexander, Curtis R and James, Eric P and Weygandt, Stephen S and Benjamin, Stanley G and Manikin, Geoffrey S and Blake, Benjamin T and Brown, John M and Olson, Joseph B and Hu, Ming and others},
  journal={Weather and Forecasting},
  volume={37},
  number={8},
  pages={1371--1395},
  year={2022}
}

@article{pathak2022fourcastnet,
  title={{FourCastNet}: A global data-driven high-resolution weather model using adaptive {F}ourier neural operators},
  author={Pathak, Jaideep and Subramanian, Shashank and Harrington, Peter and Raja, Sanjeev and Chattopadhyay, Ashesh and Mardani, Morteza and Kurth, Thorsten and Hall, David and Li, Zongyi and Azizzadenesheli, Kamyar and Hassanzadeh, Pedram and Kashinath, Karthik and Anandkumar, Animashree},
  journal={arXiv preprint arXiv:2202.11214},
  year={2022}
}

@article{grundemann2022rarest,
  title={Rarest rainfall events will see the greatest relative increase in magnitude under future climate change},
  author={Gr{\"u}ndemann, Gaby J and van de Giesen, Nick and Brunner, Lukas and van der Ent, Ruud},
  journal={Communications Earth \& Environment},
  volume={3},
  number={1},
  pages={235},
  year={2022},
  doi={10.1038/s43247-022-00558-8}
}

@article{cont2001empirical,
  title={Empirical properties of asset returns: stylized facts and statistical issues},
  author={Cont, Rama},
  journal={Quantitative Finance},
  volume={1},
  number={2},
  pages={223--236},
  year={2001},
  doi={10.1080/713665670}
}

@book{coles2001introduction,
  title={An introduction to statistical modeling of extreme values},
  author={Coles, Stuart and Bawa, Joanna and Trenner, Lesley and Dorazio, Pat},
  volume={208},
  year={2001},
  publisher={Springer}
}

@book{resnick2007heavy,
  title={Heavy-tail phenomena: probabilistic and statistical modeling},
  author={Resnick, Sidney I},
  year={2007},
  publisher={Springer}
}

@book{embrechts2013modelling,
  title={Modelling extremal events: for insurance and finance},
  author={Embrechts, Paul and Kl{\"u}ppelberg, Claudia and Mikosch, Thomas},
  volume={33},
  year={2013},
  publisher={Springer Science \& Business Media}
}

@article{seneviratne2021weather,
  title={Weather and climate extreme events in a changing climate},
  author={Seneviratne, Sonia I and Zhang, Xuebin and Adnan, Muhammad and Badi, Wafae and Dereczynski, Claudine and Luca, A Di and Ghosh, Subimal and Iskandar, Iskhaq and Kossin, James and Lewis, Sophie and others},
  journal={Climate change 2021: The physical science basis: Working group I contribution to the sixth assessment report of the intergovernmental panel on climate change},
  pages={1513--1766},
  year={2021},
  publisher={Cambridge University Press}
}

@article{cont2026tail,
  title={Tail-gan: Learning to simulate tail risk scenarios},
  author={Cont, Rama and Cucuringu, Mihai and Xu, Renyuan and Zhang, Chao},
  journal={Management Science},
  volume={72},
  number={4},
  pages={2917--2936},
  year={2026},
  publisher={INFORMS}
}

@inproceedings{zhang2024long,
  title={Long-tailed diffusion models with oriented calibration},
  author={Zhang, Tianjiao and Zheng, Huangjie and Yao, Jiangchao and Wang, Xiangfeng and Zhou, Mingyuan and Zhang, Ya and Wang, Yanfeng},
  booktitle={The twelfth international conference on learning representations},
  year={2024}
}

@inproceedings{kingma2014auto,
  title={Auto-Encoding Variational Bayes},
  author={Kingma, Diederik P. and Welling, Max},
  booktitle={International Conference on Learning Representations},
  year={2014},
  url={https://openreview.net/forum?id=33X9fd2-9FyZd}
}

@inproceedings{goodfellow2014generative,
  title={Generative Adversarial Nets},
  author={Goodfellow, Ian and Pouget-Abadie, Jean and Mirza, Mehdi and Xu, Bing and Warde-Farley, David and Ozair, Sherjil and Courville, Aaron and Bengio, Yoshua},
  booktitle={Advances in Neural Information Processing Systems},
  volume={27},
  year={2014}
}

@inproceedings{jaini2020tails,
  title={Tails of {L}ipschitz Triangular Flows},
  author={Jaini, Priyank and Kobyzev, Ivan and Yu, Yaoliang and Brubaker, Marcus},
  booktitle={Proceedings of the 37th International Conference on Machine Learning},
  pages={4673--4681},
  year={2020},
  volume={119},
  series={Proceedings of Machine Learning Research},
  publisher={PMLR},
  url={https://proceedings.mlr.press/v119/jaini20a.html}
}

@inproceedings{hickling2025flexible,
  title={Flexible Tails for Normalizing Flows},
  author={Hickling, Tennessee and Prangle, Dennis},
  booktitle={Proceedings of the 42nd International Conference on Machine Learning},
  pages={23155--23178},
  year={2025},
  volume={267},
  series={Proceedings of Machine Learning Research},
  publisher={PMLR},
  url={https://proceedings.mlr.press/v267/hickling25a.html}
}

@article{benjamin2016north,                                                                                                                                                                                               
    title={A North American hourly assimilation and model forecast cycle: The Rapid Refresh},                                                                                                                               
    author={Benjamin, Stanley G and Weygandt, Stephen S and Brown, John M and ...},                                                                                                                                         
    journal={Monthly Weather Review},                                                                                                                                                                                     
    volume={144}, number={4}, pages={1669--1694}, year={2016}                                                                                                                                                               
}

@article{massey1951kolmogorov,
  title={The {K}olmogorov--{S}mirnov test for goodness of fit},
  author={Massey Jr, Frank J},
  journal={Journal of the American Statistical Association},
  volume={46},
  number={253},
  pages={68--78},
  year={1951}
}

@inproceedings{heusel2017gans,
  title={GANs Trained by a Two Time-Scale Update Rule Converge to a Local Nash Equilibrium},
  author={Heusel, Martin and Ramsauer, Hubert and Unterthiner, Thomas and Nessler, Bernhard and Hochreiter, Sepp},
  booktitle={Advances in Neural Information Processing Systems},
  year={2017}
}
\bibliographystyle{plain}
\newpage
\appendix
\section*{Appendix Contents}
{\small
\noindent\textbf{A\quad \hyperref[appendix:additional_discussions]{Additional Discussions}}\dotfill \pageref{appendix:additional_discussions}\\
\hspace*{2em}A.1\quad \hyperref[appendix:related-work]{Related work}\dotfill \pageref{appendix:related-work}\\
\hspace*{2em}A.2\quad \hyperref[appendix:hth-comparison]{Detailed comparison with existing heavy-tailed models}\dotfill \pageref{appendix:hth-comparison}\\
\hspace*{2em}A.3\quad \hyperref[appendix:gsm-examples]{Representative Gaussian scale mixtures}\dotfill \pageref{appendix:gsm-examples}\\
\hspace*{2em}A.4\quad \hyperref[appendix:gmvm]{Gaussian mean-variance mixtures and asymmetric extensions}\dotfill \pageref{appendix:gmvm}\\
\hspace*{2em}A.5\quad \hyperref[appendix:other-clock-paths]{Other admissible clock-conditioned paths}\dotfill \pageref{appendix:other-clock-paths}\\[2pt]
\noindent\textbf{B\quad \hyperref[appendix:diffusion-extension]{All-clock extension to diffusion models}}\dotfill \pageref{appendix:diffusion-extension}\\[2pt]
\noindent\textbf{C\quad \hyperref[appendix:additional_results]{Additional results and proofs of main theorems}}\dotfill \pageref{appendix:additional_results}\\
\hspace*{2em}C.1\quad \hyperref[appendix:clock-history-note]{A note on conditioning on the clock history}\dotfill \pageref{appendix:clock-history-note}\\
\hspace*{2em}C.2\quad \hyperref[appendix:proof-clock-affine-law]{Conditional Gaussian law of the affine path}\dotfill \pageref{appendix:proof-clock-affine-law}\\
\hspace*{2em}C.3\quad \hyperref[appendix:proof-chtfm-htfm-equivalence]{Detailed pathwise equivalence result and proof}\dotfill \pageref{appendix:proof-chtfm-htfm-equivalence}\\
\hspace*{2em}C.4\quad \hyperref[appendix:proof-clock-straight-geodesic]{Clock-conditioned straight-line geodesic}\dotfill \pageref{appendix:proof-clock-straight-geodesic}\\
\hspace*{2em}C.5\quad \hyperref[appendix:proof-clock-choice-recovery]{Proof of Theorem~\ref{thm:clock-choice-recovery}}\dotfill \pageref{appendix:proof-clock-choice-recovery}\\[2pt]
\noindent\textbf{D\quad \hyperref[appendix:log-sig]{Additional Information on Log-Signature}}\dotfill \pageref{appendix:log-sig}\\
\hspace*{2em}D.1\quad \hyperref[appendix:signature-linear-schedules]{Signature-linear schedules}\dotfill \pageref{appendix:signature-linear-schedules}\\
\hspace*{2em}D.2\quad \hyperref[appendix:ve-signature-schedule]{VE-type signature-linear schedule}\dotfill \pageref{appendix:ve-signature-schedule}\\[2pt]
\noindent\textbf{E\quad \hyperref[appendix:experimental-setup]{Experimental Setup}}\dotfill \pageref{appendix:experimental-setup}\\
\hspace*{2em}E.1\quad \hyperref[appendix:setup-dataset]{Dataset}\dotfill \pageref{appendix:setup-dataset}\\
\hspace*{2em}E.2\quad \hyperref[appendix:setup-baselines]{Baselines}\dotfill \pageref{appendix:setup-baselines}\\
\hspace*{2em}E.3\quad \hyperref[appendix:setup-evaluation]{Evaluation}\dotfill \pageref{appendix:setup-evaluation}\\
\hspace*{2em}E.4\quad \hyperref[appendix:setup-denoiser]{Denoiser Architecture}\dotfill \pageref{appendix:setup-denoiser}\\
\hspace*{2em}E.5\quad \hyperref[appendix:setup-training]{Training}\dotfill \pageref{appendix:setup-training}\\
\hspace*{2em}E.6\quad \hyperref[appendix:setup-sampling]{Sampling}\dotfill \pageref{appendix:setup-sampling}\\[2pt]
\noindent\textbf{F\quad \hyperref[appendix:ablation-studies]{Ablation Studies}}\dotfill \pageref{appendix:ablation-studies}\\
\hspace*{2em}F.1\quad \hyperref[appendix:flow-type-ablation]{Effect of Flow Type}\dotfill \pageref{appendix:flow-type-ablation}\\
\hspace*{2em}F.2\quad \hyperref[appendix:nfe-ablation]{Effect of Number of Function Evaluations (NFE)}\dotfill \pageref{appendix:nfe-ablation}\\
\hspace*{2em}F.3\quad \hyperref[appendix:solver-ablation]{Effect of ODE Solver}\dotfill \pageref{appendix:solver-ablation}\\[2pt]
\noindent\textbf{G\quad \hyperref[appendix:visualizations]{Visualizations}}\dotfill \pageref{appendix:visualizations}\par
}
\bigskip

\section{Additional Discussions}
\label{appendix:additional_discussions}

\subsection{Related work}
\label{appendix:related-work}

\paragraph{Heavy-tailed diffusion and flow models.}
Most diffusion and flow-matching generative models use Gaussian reference
distributions or Gaussian perturbation kernels. Several recent works replace
this Gaussian mechanism by heavy-tailed alternatives. L\'evy score models use
\(\alpha\)-stable L\'evy processes and replace the ordinary score by a
fractional score, leading to nonlocal reverse-time dynamics and deterministic
fractional probability-flow samplers
\citep{yoon2023score,popovimproved}. DLPM and DLIM instead build a
discrete-time \(\alpha\)-stable analogue of DDPM/DDIM; their key practical step
is a Gaussian scale-mixture augmentation that makes the conditional reverse
kernels Gaussian while the marginal noising law remains heavy-tailed
\citep{shariatianheavy}. Student-\(t\)-based diffusion and flow models replace
Gaussian perturbation kernels with Student-\(t\) kernels and derive
family-specific denoising posteriors and training divergences, including
t-Flow as a heavy-tailed flow construction \citep{pandey2024heavy}. Cauchy
Diffusion studies the Cauchy case in speech synthesis, using Cauchy noise and a
ratio-of-Gaussians view to support stochastic and deterministic sampling in a
DDPM-style vocoder \citep{lian2025cauchy}. Related jump-diffusion generative
models consider finite-activity jumps and marginal-matching reverse dynamics
\citep{baule2025generative}. Our random-clock formulation follows the same
broad motivation of using heavy-tailed perturbation families, but differs in
where the tractability is imposed: after conditioning on a realized clock path,
the flow-matching problem is Gaussian and uses the ordinary conditional velocity
target, while marginalizing the clock produces the heavy-tailed source or path
family.

\paragraph{Latent Gaussian mixtures and random time changes.}
Gaussian scale mixtures provide a classical route from conditionally Gaussian
models to heavy-tailed marginals \citep{andrews1974scale}. More general normal
variance-mean mixtures can also be interpreted as Brownian motions with drift
stopped at independent random times, and include many skewed and heavy-tailed
families \citep{barndorff1982normal}. This latent-Gaussian perspective is
exactly the mechanism exploited by DLPM for \(\alpha\)-stable noising
\citep{shariatianheavy}. Our construction extends this idea from a scalar
mixing variable to a path-valued clock. The clock can determine the conditional
source covariance, the interpolation schedule, or both. Thus, the model keeps a
Gaussian conditional regression problem but allows the outer clock law to encode
family choice, tail heaviness, and path-dependent allocation of noise.

\paragraph{Signature and logsignature transforms.}
Path signatures originate in rough path theory and give an order-sensitive
sequence of iterated-integral features for streams
\citep{lyons2014rough,chevyrev2025primer}. In statistics and machine learning,
signature features have been used as universal, truncated representations for
stream-valued inputs \citep{levin2013learning}, as kernels for sequential data
\citep{kiraly2019kernels}, and as differentiable neural-network components
\citep{kidger2019deep,kidger2021signatory}. Logsignature features are
especially relevant when one wants a compact nonredundant representation of path
increments; for example, neural rough differential equations summarize long
driving signals over subintervals by logsignatures
\citep{morrill2021neural}. Recent approximation results also justify
signature-linear representations of path functionals on suitable compact path
sets, including c\`adl\`ag settings \citep{cuchiero2025universal}. Our use of
signatures is narrower: the clock path is not the generated sample path, but a
latent conditioning environment. We use low-order logsignature features to give
the neural velocity field finite-dimensional access to the realized random
clock, and, optionally, to parameterize clock-dependent affine schedules.

\subsection{Detailed comparison with existing heavy-tailed models}
\label{appendix:hth-comparison}

The related-work overview in Appendix~\ref{appendix:related-work} surveys
heavy-tailed diffusion and flow constructions. Here we elaborate on the
technical contrast between the random-clock formulation and the closest prior
families.

\paragraph{DLPM and L\'evy-based diffusion.}
DLPM \citep{shariatianheavy} obtains heavy-tailed diffusion transitions via
\(\alpha\)-stable innovations and a robust outer power applied to empirical
losses. As shown in Theorem~\ref{thm:clock-choice-recovery}(ii), our
stable-subordinator clock with the area-clock covariance reproduces the same
\(\alpha\)-stable marginal source family. The two constructions differ at two
levels. \emph{At the flow-matching target level}, DLPM averages over noise
realizations and learns a clock-marginal vector field, whereas conditioning on
the realized clock path turns the regression into a clock-indexed family of
pathwise optima \(u_t^T\); a network with access to the clock can learn the
appropriate optimum for each noise realization, and the random-clock view can
therefore be read as a refinement of the DLPM stable-noise construction with
the same marginal family but a better-specified flow-matching target. We
expect empirical gains when different clock realizations induce meaningfully
different optima, and the formulation also avoids direct use of the generally
unavailable \(\alpha\)-stable density. \emph{At the loss level}, the outer
robust power \(z\mapsto z^r\) with \(0<r<1\) used by DLPM destroys the
additive projection decomposition in
Proposition~\ref{prop:chtfm-htfm-equivalence}; in contrast, the random-clock
formulation keeps the inner problem in a conditionally Gaussian
\(L^2\)-projection space and treats any robustification over sampled clocks
as an estimator-level stabilization rather than a change of the population
target. Compared with L\'evy score and L\'evy--It\^o diffusion models
\citep{yoon2023score,popovimproved}, the flow-matching construction further
avoids a nonlocal fractional-score target: after conditioning on the clock,
the endpoint-conditioned target remains the explicit affine expression
\eqref{eq:clock-affine-target}.

\paragraph{Student-\(t\) diffusion and \(t\)-Flow.}
Student-\(t\) diffusion and \(t\)-Flow models replace Gaussian perturbations
by Student-\(t\) heavy-tailed perturbations \citep{pandey2024heavy}. Their
construction is tailored to the Student-\(t\) family: it derives
Student-\(t\)-specific perturbation kernels and uses a \(\gamma\)-divergence
objective, with the practical formulation restricted to the finite-variance
regime \(\nu>2\). Extending that derivation formally to \(\nu\le2\) is
mathematically problematic, rather than merely an implementation detail,
because the covariance-based normalizations and finite-variance quantities
used by the Student-\(t\) perturbation model cease to be well-defined. In
contrast, the inverse-gamma clock in
Theorem~\ref{thm:clock-choice-recovery}(iii) is well-defined for every
\(\nu>0\), so our pathwise construction covers the extremely heavy-tailed
infinite-variance regime \(\nu\le2\) without changing the conditional Gaussian
target.

\paragraph{Toward data-adaptive clocks.}
The most general use of random clocks would not prescribe the clock law in
advance. Instead, one could estimate a clock distribution from data, or infer
a data-dependent posterior over clock paths with an amortized model, and then
use the inferred clock as the conditioning variable in
\eqref{eq:chtfm-loss-st}. This would turn the clock into a learned latent
description of local tail behavior rather than a hand-chosen family such as a
stable subordinator or inverse-gamma slope. Developing such data-adaptive
clock inference requires additional identifiability and estimation
assumptions, so we leave it as a future extension.

\subsection{Representative Gaussian scale mixtures}
\label{appendix:gsm-examples}

We recall two standard Gaussian scale-mixture examples that are only mentioned
briefly in Section~\ref{subsec:ht-noise}. For \(\alpha\in(0,2]\), a centered
symmetric isotropic \(\alpha\)-stable random vector \(S_\alpha\in\RR^d\) is
characterized by
\begin{equation}
\EE\left[e^{i\langle \xi,S_\alpha\rangle}\right]
=
\exp\bigl(-\|\xi\|_2^\alpha/2\bigr),
\qquad \xi\in\RR^d.
\label{eq:stable-char}
\end{equation}
The case \(\alpha=2\) reduces to the Gaussian law, whereas for
\(\alpha<2\) the law is heavy-tailed, with radial tail probability
\[
\PP\left(\|S_\alpha\|_2>r\right)\asymp r^{-\alpha},
\qquad r\to\infty.
\]
With this normalization, \(S_\alpha\) admits the Gaussian scale-mixture
representation
\[
S_\alpha
\overset{d}{=}
\sqrt{V_\alpha}\,G,
\]
where \(G\sim\cN(0,I_d)\) and \(V_\alpha\) is a positive
\(\alpha/2\)-stable mixing variable independent of \(G\), with Laplace
transform
\[
\EE[e^{-sV_\alpha}]
=
\exp\left(-2^{\alpha/2-1}s^{\alpha/2}\right),
\qquad s\ge 0.
\]

Likewise, a centered \(d\)-dimensional Student-\(t\) random vector \(Z_\nu\)
with \(\nu>0\) degrees of freedom and identity scale has density
\begin{equation}
f_{\nu,d}(x)
=
\frac{\Gamma\bigl((\nu+d)/2\bigr)}
{\Gamma(\nu/2)\,(\nu\pi)^{d/2}}
\left(1+\frac{\|x\|_2^2}{\nu}\right)^{-(\nu+d)/2},
\qquad x\in\RR^d,
\label{eq:student-t-density}
\end{equation}
and satisfies
\[
\PP\left(\|Z_\nu\|_2>r\right)\asymp r^{-\nu},
\qquad r\to\infty.
\]
It can be written as
\[
Z_\nu
\overset{d}{=}
\sqrt{V_\nu}\,G,
\qquad
V_\nu \sim \mathrm{InvGamma}(\nu/2,\nu/2),
\]
with \(V_\nu\) independent of \(G\). These examples make explicit why
conditioning on the latent scale leaves a Gaussian problem, while marginalizing
the latent scale produces polynomial tails.

\subsection{Gaussian mean-variance mixtures and asymmetric extensions}
\label{appendix:gmvm}

The main text focuses on centered Gaussian scale mixtures, which already cover a
broad class of symmetric heavy-tailed laws. A natural extension is to allow the
latent variable to affect both the conditional mean and the conditional
covariance, leading to the class of \emph{Gaussian mean-variance mixtures}.

Let \(V\) be a latent random variable taking values in a measurable space
\(\mathsf V\), and let \(G\sim\cN(0,I_d)\) be independent of \(V\). A random
vector \(X\in\RR^d\) is called a Gaussian mean-variance mixture if it admits a
representation of the form
\begin{equation}
X
=
m(V)+L(V)G,
\label{eq:gmvm-representation}
\end{equation}
where \(m:\mathsf V\to\RR^d\) and \(L:\mathsf V\to\RR^{d\times d}\) are
measurable. Equivalently,
\begin{equation}
X\mid V
\sim
\cN\bigl(m(V),\,\Sigma(V)\bigr),
\qquad
\Sigma(V):=L(V)L(V)^\top.
\label{eq:gmvm-conditional}
\end{equation}
The heavy-tailed or asymmetric nature of the marginal law is produced by
marginalizing over the latent variable \(V\), while conditional on \(V\) the
distribution remains Gaussian.

The centered Gaussian scale-mixture subclass from the main text is recovered by
setting \(m(V)\equiv 0\). When \(m(V)\neq 0\), the latent variable affects not
only the covariance but also the location, and the resulting marginal law may
be asymmetric. Thus Gaussian mean-variance mixtures provide a natural route to
skewed heavy-tailed extensions.

In the random-clock setting, the corresponding extension would be to replace the
centered conditional source law
\[
X_0^T
\sim
\cN\bigl(0,\Sigma(T_{[0,1]})\bigr)
\]
by the more general form
\begin{equation}
X_0^T
\sim
\cN\bigl(m(T_{[0,1]}),\,\Sigma(T_{[0,1]})\bigr),
\label{eq:clock-gmvm}
\end{equation}
where \(m:\mathbb D([0,1];\RR_+)\to\RR^d\) is a measurable mean functional.
Under the affine interpolation
\[
X_t^T=\alpha_tX_1^T+\beta_tX_0^T,
\]
we then have
\begin{equation}
X_t^T\mid X_1^T=x_1
\sim
\cN\bigl(\alpha_t x_1+\beta_t m(T_{[0,1]}),\,
\beta_t^2\Sigma(T_{[0,1]})\bigr).
\label{eq:clock-gmvm-xt}
\end{equation}
Hence the path remains conditionally Gaussian after conditioning on the clock
path, so the latent-Gaussian principle underlying the flow-matching
construction continues to hold.

Moreover, the explicit affine target formula is unaffected by this extension.
Indeed, for a.e.\ \(t\in[0,1)\), the identity
\[
X_0^T=\frac{X_t^T-\alpha_tX_1^T}{\beta_t}
\]
still holds pathwise, so substituting into
\[
\frac{\dd X_t^T}{\dd t}
=
\frac{\dd\alpha_t}{\dd t}X_1^T
+
\frac{\dd\beta_t}{\dd t}X_0^T
\]
yields
\begin{equation}
\EE\left[
\frac{\dd X_t^T}{\dd t}
\,\middle|\,
X_t^T=x,\ X_1^T=x_1
\right]
=
\frac{\frac{\dd\beta_t}{\dd t}}{\beta_t}\,x
+
\left(
\frac{\dd\alpha_t}{\dd t}
-
\frac{\frac{\dd\beta_t}{\dd t}}{\beta_t}\alpha_t
\right)x_1.
\label{eq:gmvm-target}
\end{equation}
Thus, even in the more general mean-variance mixture setting, the affine
clock-conditioned target retains the same closed-form expression as in the
centered case. What changes is the induced marginal family after integrating
out the clock, which may now be skewed as well as heavy-tailed.

This extension suggests that the random-clock framework is not limited to
symmetric Gaussian scale mixtures. By allowing a nonzero conditional mean
functional \(m(T_{[0,1]})\), one can in principle accommodate a broader class
of asymmetric heavy-tailed laws while preserving conditional Gaussianity. We do
not pursue this direction further in the main text.

\subsection{Other admissible clock-conditioned paths}
\label{appendix:other-clock-paths}

The main text focuses on the straight-line interpolation, which is the most
common choice in conditional flow matching and already suffices for the heavy-tailed framework developed here. More generally, however, the same
clock-conditioned construction applies to a broader class of affine paths.

Let \(\alpha,\beta:[0,1]\to\RR\) be absolutely continuous functions such that
\[
\alpha_0=0,\qquad
\beta_0=1,\qquad
\alpha_1=1,\qquad
\beta_1=0,
\]
and assume \(\beta_t>0\) for all \(t\in[0,1)\). Define
\begin{equation}
X_t^T=\alpha_t X_1^T+\beta_t X_0^T,
\qquad t\in[0,1].
\label{eq:appendix-clock-affine-path}
\end{equation}
As in Section~\ref{subsec:flow-choices}, if
\(
X_0^T\sim \cN(0,\Sigma(T_{[0,1]}))
\)
and \(X_0^T\) and \(X_1^T\) are conditionally independent given \(T_{[0,1]}\),
then
\begin{equation}
p_t^T(\cdot\mid x_1)
=
\Law(X_t^T\mid X_1^T=x_1)
=
\cN\bigl(\alpha_t x_1,\ \beta_t^2\Sigma(T_{[0,1]})\bigr).
\label{eq:appendix-clock-affine-law}
\end{equation}
Moreover, for a.e.\ \(t\in[0,1)\),
\[
\frac{\dd X_t^T}{\dd t}
=
\frac{\dd\alpha_t}{\dd t} X_1^T
+
\frac{\dd\beta_t}{\dd t} X_0^T,
\qquad
X_0^T=\frac{X_t^T-\alpha_t X_1^T}{\beta_t},
\]
so the corresponding conditional flow-matching target is
\begin{equation}
v_t^T(x\mid x_1)
=
\frac{\frac{\dd\beta_t}{\dd t}}{\beta_t}\,x
+
\left(
\frac{\dd\alpha_t}{\dd t}
-
\frac{\frac{\dd\beta_t}{\dd t}}{\beta_t}\alpha_t
\right)x_1.
\label{eq:appendix-clock-affine-target}
\end{equation}
Thus any admissible pair \((\alpha,\beta)\) yields an explicit
clock-conditioned path/target pair.

\paragraph{VP-type path.}
A natural diffusion-inspired choice is the variance-preserving schedule. Let
\(\bar\alpha:[0,1]\to[0,1]\) be absolutely continuous with
\[
\bar\alpha_0=0,\qquad \bar\alpha_1=1,
\]
and assume \(\bar\alpha_t<1\) for \(t\in[0,1)\). Define
\[
\alpha_t=\bar\alpha_t,
\qquad
\beta_t=\sqrt{1-\bar\alpha_t^2}.
\]
Then
\begin{equation}
X_t^T=\bar\alpha_t X_1^T+\sqrt{1-\bar\alpha_t^2}\,X_0^T,
\label{eq:appendix-vp-path}
\end{equation}
which is the analogue of a variance-preserving interpolation. In this case,
\eqref{eq:appendix-clock-affine-target} becomes
\begin{equation}
v_t^T(x\mid x_1)
=
-\frac{\bar\alpha_t\frac{\dd\bar\alpha_t}{\dd t}}{1-\bar\alpha_t^2}\,x
+
\frac{\frac{\dd\bar\alpha_t}{\dd t}}{1-\bar\alpha_t^2}\,x_1.
\label{eq:appendix-vp-target}
\end{equation}

\paragraph{VE-type path.}
A variance-exploding analogue can also be accommodated within the affine
framework. Let \(\sigma:[0,1]\to[0,1]\) be an absolutely continuous schedule
such that
\[
\sigma_0=1,\qquad \sigma_1=0,\qquad \sigma_t>0\qquad \text{for } t\in[0,1).
\]
Set
\[
\alpha_t=1-\sigma_t,
\qquad
\beta_t=\sigma_t.
\]
Then the corresponding affine interpolation is
\begin{equation}
X_t^T=(1-\sigma_t)X_1^T+\sigma_t X_0^T,
\qquad t\in[0,1].
\label{eq:appendix-ve-path}
\end{equation}
Conditioning on \(X_1^T=x_1\) and the realized clock path \(T_{[0,1]}\), we have
\begin{equation}
p_t^T(\cdot\mid x_1)
=
\Law(X_t^T\mid X_1^T=x_1)
=
\cN\bigl((1-\sigma_t)x_1,\ \sigma_t^2\Sigma(T_{[0,1]})\bigr).
\label{eq:appendix-ve-law}
\end{equation}
Thus the conditional variance is directly controlled by the scale schedule
\(\sigma_t^2\). Although our interpolation is written in the generative
direction from source to data, this corresponds to a variance-exploding
schedule when viewed backward from \(t=1\) to \(t=0\).

Differentiating \eqref{eq:appendix-ve-path} gives
\[
\frac{\dd X_t^T}{\dd t}
=
-\frac{\dd\sigma_t}{\dd t} X_1^T
+
\frac{\dd\sigma_t}{\dd t} X_0^T,
\qquad
X_0^T=\frac{X_t^T-(1-\sigma_t)X_1^T}{\sigma_t}.
\]
Substituting into \eqref{eq:appendix-clock-affine-target}, or directly into
\(\frac{\dd X_t^T}{\dd t}\), yields the conditional flow-matching target
\begin{equation}
v_t^T(x\mid x_1)
=
\frac{\frac{\dd\sigma_t}{\dd t}}{\sigma_t}(x-x_1),
\qquad t\in[0,1).
\label{eq:appendix-ve-target}
\end{equation}

A simple concrete choice is the polynomial schedule
\[
\sigma_t=(1-t)^\gamma,
\qquad \gamma>0,
\]
for which
\[
X_t^T=\bigl(1-(1-t)^\gamma\bigr)X_1^T+(1-t)^\gamma X_0^T,
\]
and
\begin{equation}
v_t^T(x\mid x_1)
=
-\frac{\gamma}{1-t}(x-x_1).
\label{eq:appendix-ve-target-poly}
\end{equation}
More generally, any absolutely continuous decreasing schedule \(\sigma_t\) with
the above endpoint conditions yields an admissible VE-type path within the same
clock-conditioned affine framework.

\section{All-clock extension to diffusion models}
\label{appendix:diffusion-extension}

In this appendix we record how the random-clock viewpoint extends to
score-based diffusion models. The point of this subsection is slightly different
from the flow-matching construction in the main text. Flow matching only needs
a clock-conditioned interpolation and an explicit velocity target; it does not
require reversing a stochastic process. Diffusion models do require a
reverse-time dynamics, and for general c\`adl\`ag clocks this dynamics is not, in
general, a purely local score SDE. The correct all-clock statement is a
reverse-kernel formulation. The familiar local score SDE is recovered on the
continuous part of the clock, and in particular for absolutely continuous
clocks.

Let \(W\) be a standard \(d\)-dimensional Brownian motion independent of the
clock \(T\). Given drift \(f:[0,1]\times\RR^d\to\RR^d\) and deterministic
diffusion matrix \(G:[0,1]\to\RR^{d\times d}\), consider the clock-conditioned
forward dynamics
\begin{equation}
\dd X_t
=
f_t(X_t)\,\dd t
+
G_t\,\dd W_{T_t},
\qquad
X_0\sim p_0(\cdot\mid T_{[0,1]}).
\label{eq:clock-forward-sde}
\end{equation}
For a fixed realized clock path, \(W_{T_t}\) is a semimartingale with quadratic
variation \(T_t\). If \(T\) is continuous, this is a continuous martingale; if
\(T\) has jumps, then \(W_{T_t}\) has Gaussian jumps with covariance proportional
to the clock jump sizes. Write \(a_t:=G_tG_t^\top\) and
\(p_t^T:=\Law(X_t\mid T_{[0,1]})\). Whenever \(p_t^T\) has a density, we define
the clock-conditioned score by
\begin{equation}
s_t^T(x):=\nabla_x\log p_t^T(x).
\label{eq:clock-score}
\end{equation}
As in the main text, the density is conditioned on the full clock path. This is
the right object for clock-dependent initial laws such as
\(X_0=L(T_{[0,1]})G\); in general it cannot be replaced by conditioning only on
\(T_{[0,t]}\).

We first state the reverse dynamics in a form that is valid for arbitrary
realized c\`adl\`ag clock paths. For \(0\le s<t\le1\), let
\(P_{s,t}^T(x,\dd y)\) denote the conditional forward transition kernel of
\eqref{eq:clock-forward-sde} from time \(s\) to time \(t\), given the full clock
path. Thus \(p_s^T P_{s,t}^T=p_t^T\). The reverse transition from \(t\) back to
\(s\) is the Bayes kernel associated with this joint law.

\begin{theorem}[All-clock reverse dynamics in kernel form]
\label{thm:clock-reverse-pf}
\label{thm:clock-reverse-kernel}
Fix a realized clock path \(T\in\mathbb D([0,1];\RR_+)\). Assume that the forward
equation \eqref{eq:clock-forward-sde} is well posed, that regular conditional
transition kernels \(P_{s,t}^T\) exist, and that \(p_s^T P_{s,t}^T=p_t^T\) for
\(0\le s<t\le1\). For \(s<t\), define a reverse kernel
\(R_{t,s}^T(y,\dd x)\) by the measure identity
\begin{equation}
p_t^T(\dd y)\,R_{t,s}^T(y,\dd x)
=
p_s^T(\dd x)\,P_{s,t}^T(x,\dd y)
\label{eq:reverse-bayes-kernel}
\end{equation}
on \(\RR^d\times\RR^d\). Equivalently, when transition densities exist,
\begin{equation}
R_{t,s}^T(y,\dd x)
=
\frac{p_s^T(x)p_{s,t}^T(y\mid x)}{p_t^T(y)}\,\dd x.
\label{eq:reverse-bayes-density}
\end{equation}
Let \((Y_r)_{r\in[0,1]}\) be the reverse-time Markov process initialized by
\(Y_0\sim p_1^T\) and, for \(0\le r<r'\le1\), using the transition
\begin{equation}
\Law(Y_{r'}\in \dd x\mid Y_r=y,T_{[0,1]})
=
R_{1-r,\,1-r'}^T(y,\dd x).
\label{eq:reverse-kernel-process}
\end{equation}
Then the reverse process is conditional marginal-matching:
\begin{equation}
\Law(Y_r\mid T_{[0,1]})=p_{1-r}^T,
\qquad r\in[0,1].
\label{eq:kernel-reverse-marginal-match}
\end{equation}
\end{theorem}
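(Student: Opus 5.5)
The argument is essentially measure-theoretic. Fix the realized clock path $T$; every object below is then deterministic, so no randomness of the clock needs tracking. I would first check that \eqref{eq:reverse-bayes-kernel} actually defines a kernel. Let $\pi_{s,t}^T(\dd x,\dd y):=p_s^T(\dd x)P_{s,t}^T(x,\dd y)$. This is a probability measure on $\RR^d\times\RR^d$. Its second marginal is $p_s^TP_{s,t}^T=p_t^T$ by assumption. Since $\RR^d$ is Polish, the disintegration theorem gives a regular conditional kernel $R_{t,s}^T(y,\dd x)$, unique $p_t^T$-a.e., with $\pi_{s,t}^T=p_t^T(\dd y)R_{t,s}^T(y,\dd x)$. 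This is exactly \eqref{eq:reverse-bayes-kernel}. When densities exist, Bayes' formula gives \eqref{eq:reverse-bayes-density}, with the convention that the kernel is defined arbitrarily on the null set $\{p_t^T=0\}$.

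Next I would record the key one-step identity. Taking the $x$-marginal of both sides of \eqref{eq:reverse-bayes-kernel} gives
\[
p_t^T R_{t,s}^T=\int p_t^T(\dd y)R_{t,s}^T(y,\cdot)=\int p_s^T(\cdot)P_{s,t}^T(x,\RR^d)=p_s^T,
\]
because $P_{s,t}^T(x,\cdot)$ is a probability measure. In words, the reverse kernel pushes $p_t^T$ back to $p_s^T$.

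The marginal-matching claim \eqref{eq:kernel-reverse-marginal-match} then follows directly. At $r=0$ we have $\Law(Y_0\mid T_{[0,1]})=p_1^T$ by initialization. For $r>0$, apply \eqref{eq:reverse-kernel-process} with the pair $(0,r)$ and the one-step identity with $t=1$, $s=1-r$:
\[
\Law(Y_r\mid T_{[0,1]})=p_1^TR_{1,1-r}^T=p_{1-r}^T.
\]
If one instead defines the process by chaining consecutive transitions on a grid $0=r_0<\dots<r_n=r$, one iterates the same identity $p_{1-r_k}^TR^T_{1-r_k,1-r_{k+1}}=p_{1-r_{k+1}}^T$ by induction on $k$.

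The main obstacle is consistency, namely that the stated transitions define a genuine Markov process. This requires the Chapman--Kolmogorov relation $R_{t,u}^TR_{u,s}^T=R_{t,s}^T$ $p_t^T$-a.e. for $s<u<t$. I would derive it from the Markov property of the forward dynamics given $T$, which holds because $W_{T_\cdot}$ has independent increments given the clock. The three-time joint law $p_s^T(\dd x)P_{s,u}^T(x,\dd z)P_{u,t}^T(z,\dd y)$ can be disintegrated along $y$ and then along $z$. Conditional independence of $x$ and $y$ given $z$ then yields the reverse Chapman--Kolmogorov relation up to null sets. Kolmogorov extension on the Polish space then produces $(Y_r)$. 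The marginal claim itself only needs the one-step identity, and uniqueness up to $p_t^T$-null sets does not affect the marginals.
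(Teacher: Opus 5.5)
Your proposal is correct and follows the paper's route: integrate the Bayes identity over $y$ to get $p_t^T R_{t,s}^T = p_s^T$, then propagate from $Y_0\sim p_1^T$, with consistency of the reverse kernels coming from the forward Markov structure given the clock. Your additions fill in details the paper states in one line: existence of $R_{t,s}^T$ by disintegration, applying the transition for the pair $(0,r)$ directly instead of iterating over a partition, and an explicit reverse Chapman--Kolmogorov derivation.
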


\begin{proof}
Fix \(0\le s<t\le1\). Integrating \eqref{eq:reverse-bayes-kernel} over
\(y\in\RR^d\) gives, for every measurable set \(A\subseteq\RR^d\),
\[
\int_{\RR^d} R_{t,s}^T(y,A)\,p_t^T(\dd y)
=
\int_A P_{s,t}^T(x,\RR^d)\,p_s^T(\dd x)
=
p_s^T(A).
\]
Thus a single reverse step transports \(p_t^T\) back to \(p_s^T\). Taking
\(t=1-r\) and \(s=1-r'\), this says that if \(Y_r\sim p_{1-r}^T\), then the
transition \eqref{eq:reverse-kernel-process} gives
\(Y_{r'}\sim p_{1-r'}^T\). Since \(Y_0\sim p_1^T\), iterating this identity over
any finite partition of \([0,1]\) yields \(\Law(Y_r\mid T_{[0,1]})=p_{1-r}^T\)
for every \(r\in[0,1]\). The Markov kernels are consistent because they are the
regular conditional kernels of the same forward Markov law read backward in
time, so the corresponding reverse process exists by the usual extension
construction.
\end{proof}

The theorem is deliberately stated without a score. A score is a local object,
whereas a jump of the clock produces a finite, nonlocal update of the law. This
can be seen explicitly. Let \(t\) be a jump time of the realized clock and set
\(h:=\Delta T_t=T_t-T_{t-}>0\). Since the drift term has no atom in physical time,
the forward jump is
\[
X_t=X_{t-}+G_t Z_t,
\qquad Z_t\sim\cN(0,hI_d),
\]
so the jump kernel is \(Q_t^T(x,\dd y)=\cN(y;x,h a_t)(\dd y)\). Therefore
\[
p_t^T(\dd y)=\int Q_t^T(x,\dd y)\,p_{t-}^T(\dd x),
\]
and the reverse jump is the Bayes kernel characterized by
\begin{equation}
p_t^T(\dd y)R_{t,t-}^T(y,\dd x)
=
p_{t-}^T(\dd x)Q_t^T(x,\dd y).
\label{eq:reverse-jump-kernel-measure}
\end{equation}
When \(a_t\) is nonsingular, this becomes the density formula
\begin{equation}
R_{t,t-}^T(y,\dd x)
=
\frac{p_{t-}^T(x)\varphi_{h a_t}(y-x)}
{\int p_{t-}^T(z)\varphi_{h a_t}(y-z)\,\dd z}\,\dd x,
\label{eq:reverse-jump-kernel-density}
\end{equation}
where \(\varphi_{h a_t}\) is the Gaussian density with covariance \(h a_t\). Thus
reverse jumps are posterior sampling steps. They cannot be represented exactly
by the local drift term \(a_t s_t^T\,\dd T_t\).

We next recover the usual score-based formula from the kernel theorem on the
continuous part of the clock. Write
\[
T_t^c:=T_t-\sum_{0<s\le t}\Delta T_s
\]
for the continuous part of the realized clock, and define the reversed
continuous clock \(\widehat T_r^c:=T_1^c-T_{1-r}^c\). At reverse times
corresponding to jumps of \(T\), the process uses the jump kernel
\eqref{eq:reverse-jump-kernel-measure}; between these jumps, or if the clock is
continuous, the reverse dynamics have the following local representation.

\begin{corollary}[Local score form on the continuous clock part]
\label{cor:clock-local-score}
Assume the hypotheses of Theorem~\ref{thm:clock-reverse-kernel}. In addition,
assume that the continuous part of the forward law admits strictly positive
smooth densities, that the usual decay and integrability conditions for
integration by parts hold, and that the local reverse equations below are well
posed. On the continuous clock part, with \(\tau=1-r\), the reverse process has
the local score representation
\begin{equation}
\dd Y_r
=
-f_\tau(Y_r)\,\dd r
+
a_\tau s_\tau^T(Y_r)\,\dd\widehat T_r^c
+
G_\tau\,\dd\bar W_{\widehat T_r^c},
\label{eq:clock-reverse-sde-thm}
\end{equation}
where \(\bar W\) is a standard Brownian motion. If \(T\) is continuous, this
local equation alone gives \(\Law(Y_r\mid T_{[0,1]})=p_{1-r}^T\) for all
\(r\in[0,1]\). If \(T\) has jumps, it must be supplemented by the reverse jump
kernels \eqref{eq:reverse-jump-kernel-measure} at the corresponding reverse
jump times.

For the continuous part, the deterministic probability-flow equation is
\begin{equation}
\dd Z_r
=
-f_\tau(Z_r)\,\dd r
+
\frac12 a_\tau s_\tau^T(Z_r)\,\dd\widehat T_r^c.
\label{eq:clock-pf-ode-thm}
\end{equation}
When \(T\) has jumps, an exact deterministic flow across a jump additionally
requires choosing a transport map from \(p_t^T\) to \(p_{t-}^T\); this choice is
not canonical and is not determined by the score alone.
\end{corollary}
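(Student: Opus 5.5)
The plan is to derive \eqref{eq:clock-reverse-sde-thm} from the generator/Fokker--Planck equation of the forward dynamics on the continuous clock part, and to get \eqref{eq:clock-pf-ode-thm} by the usual rewriting of the diffusion term as a score drift. Throughout I fix the realized path \(T\) and work on an interval \([s,t]\) containing no jump time. On such an interval \(T=T^c+\text{const}\), so \(\dd T_u=\dd T^c_u\) is a nonnegative continuous measure. I first reduce to absolutely continuous clocks by the time change \(\theta_u:=u+T^c_u\), which is strictly increasing and continuous. In the variable \(\theta\), both \(\dd u=\dot u(\theta)\,\dd\theta\) and \(\dd T^c=\dot T^c(\theta)\,\dd\theta\) have bounded densities summing to one, so the general case reduces to the case \(\dd T^c_u=\lambda_u\,\dd u\) with a measurable \(\lambda\ge 0\).

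\textbf{Step 1 (forward Fokker--Planck).} By It\^o's formula for the continuous martingale \(W_{T^c}\), whose quadratic variation is \(T^c\), each test function \(\varphi\in C_c^\infty\) satisfies
\[
\dd\,\EE[\varphi(X_u)]=\EE\bigl[\nabla\varphi\cdot f_u\bigr]\dd u+\tfrac12\EE\bigl[a_u:\nabla^2\varphi\bigr]\dd T^c_u .
\]
Integrating by parts, which is allowed under the assumed decay and positivity, gives the weak equation
\[
\partial p=-\nabla\cdot(f p)\,\dd u+\tfrac12\nabla\cdot(a\nabla p)\,\dd T^c .
\]
Here \(G_u\) does not depend on \(x\), so \(\nabla\cdot\nabla\cdot(ap)=\nabla\cdot(a\nabla p)\).

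\textbf{Step 2 (reverse equation).} I substitute \(\nabla p=p\,s^T\) and split
\[
\tfrac12\nabla\cdot(a\nabla p)=\nabla\cdot(a s^T p)-\tfrac12\nabla\cdot(a\nabla p).
\]
In reversed time \(\tau=1-r\), with \(\dd\widehat T^c_r=-\dd T^c_{1-r}\ge 0\), the law \(q_r:=p_{1-r}^T\) then solves the Fokker--Planck equation of \eqref{eq:clock-reverse-sde-thm}. Its drift is \(-f_\tau\,\dd r+a_\tau s_\tau^T\,\dd\widehat T^c_r\), and its diffusion term is \(G_\tau\,\dd\bar W_{\widehat T^c_r}\), whose quadratic variation is \(\widehat T^c\).

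Because the local reverse equation is assumed well posed, its marginals are the unique solution of this linear forward equation started from \(p_{1-r_0}^T\). They therefore coincide with \(p_{1-r}^T\). The same conclusion follows from Theorem~\ref{thm:clock-reverse-kernel}: the Bayes kernels \(R_{t,s}^T\) have exactly these marginals, and the local SDE has the same transitions by uniqueness.

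Dropping the \(-\tfrac12\nabla\cdot(a\nabla p)\) and \(\dd\bar W\) terms together leaves the pure transport equation
\[
\partial q=-\nabla\cdot\bigl(q(-f\,\dd r+\tfrac12 a s^T\,\dd\widehat T^c)\bigr).
\]
This is the continuity equation of \eqref{eq:clock-pf-ode-thm}, so the ODE matches the same marginals.

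\textbf{Step 3 (assembly).} If \(T\) is continuous there are no jumps, and the local equation covers all of \([0,1]\), giving \(\Law(Y_r\mid T_{[0,1]})=p_{1-r}^T\). If \(T\) has jumps (countably many), I concatenate along the reversed time axis. Between consecutive jumps I use Step 2. At a jump time I apply the Bayes jump kernel \eqref{eq:reverse-jump-kernel-measure}, which by the one-step computation in the proof of Theorem~\ref{thm:clock-reverse-kernel} maps \(p_t^T\) to \(p_{t-}^T\).

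The local drift \(a s^T\,\dd T\) cannot reproduce this jump step. A measure-valued drift concentrated at a single time produces a deterministic shift of size \(O(h)\), whereas the jump update is a Gaussian posterior convolution. For the deterministic flow across a jump, any transport map pushing \(p_t^T\) to \(p_{t-}^T\) works (for example the Brenier map), and different choices give different flows, which is the non-canonicity claimed in the statement.

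\textbf{Main obstacle.} The delicate step is making the time change and the uniqueness rigorous when \(T^c\) is singular continuous, for example a Cantor-like clock. The plan is to carry out all computations in the \(\theta\)-parametrization, where both measures are absolutely continuous, and to invoke uniqueness for the reverse Fokker--Planck equation, which the hypotheses assume through well-posedness.
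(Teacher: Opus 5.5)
Your Steps 1--2 follow the paper's proof. The paper also writes the forward weak equation with a Stieltjes term in \(\dd T^c\) and passes to \(q_r=p^T_{1-r}\) with the nonnegative reversed measure \(\dd\widehat T^c_r\). It substitutes \(q\) into the weak form of \eqref{eq:clock-reverse-sde-thm} and uses \(p\,s^T=\nabla p\) plus one integration by parts, so the score drift and the noise combine to \(-\tfrac12\nabla\cdot(a\nabla q)\,\dd\widehat T^c_r\). Uniqueness then comes from the well-posedness hypothesis, and the ODE is treated the same way with coefficient \(\tfrac12\). Your time change \(\theta_u=u+T^c_u\) is an extra step the paper does not take (it works with \(\dd T^c\) directly as a measure); it is correct and makes the singular-continuous case clearer.

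Three points need repair.

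First, your derivation of \eqref{eq:clock-pf-ode-thm} does not work as phrased. Deleting terms from an equation that \(q\) satisfies does not in general leave an equation \(q\) satisfies. Deleting only the diffusion contribution would leave the drift \(a s^T\), not \(\tfrac12 a s^T\). The actual reason is that the whole second-order term is already a transport term, \(\tfrac12\nabla\cdot(a\nabla q)=\nabla\cdot\bigl(q\,\tfrac12 a s^T\bigr)\). Your displayed continuity equation is correct; only its justification is wrong.

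Second, Step 3, and already the setup ``an interval containing no jump time'', assume the jump times are isolated. Take an infinite-activity clock such as the \(\rho\)-stable subordinator of Theorem~\ref{thm:clock-choice-recovery}(ii): its jump times are a.s.\ dense and \(T^c\equiv0\). So there are no jump-free intervals, and ``between consecutive jumps'' is undefined. Your concatenation therefore covers only finitely many jumps (e.g.\ compound-Poisson clocks). The paper does not attempt this assembly: it proves the continuous-part claims and defers jumps to Theorem~\ref{thm:clock-reverse-kernel}. You should either do the same or state the finite-activity restriction.

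Third, your aside that the local SDE ``has the same transitions'' as the Bayes kernels ``by uniqueness'' does not follow from equality of one-time marginals. Indeed, \eqref{eq:clock-reverse-sde-thm} and \eqref{eq:clock-pf-ode-thm} themselves have the same marginals but different transitions. Matching the transitions would need a Haussmann--Pardoux-type time-reversal theorem. The claim is not needed, so drop it.
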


\begin{proof}
We prove the continuous part; the jump updates are already identified by
Theorem~\ref{thm:clock-reverse-kernel}. Let \(\psi\in C_c^\infty(\RR^d)\). Along
the continuous part of the forward dynamics, the conditional law satisfies the
weak equation
\[
\dd\int \psi(x)p_t^T(x)\,\dd x
=
\int f_t(x)\cdot\nabla\psi(x)p_t^T(x)\,\dd x\,\dd t
+
\frac12\int \Tr(a_t\nabla^2\psi(x))p_t^T(x)\,\dd x\,\dd T_t^c.
\]
Set \(q_r=p_{1-r}^T\) and \(\tau=1-r\). Since
\(\widehat T_r^c=T_1^c-T_{1-r}^c\), this forward weak equation gives
\[
\dd\int \psi(x)q_r(x)\,\dd x
=
-\int f_\tau(x)\cdot\nabla\psi(x)q_r(x)\,\dd x\,\dd r
-
\frac12\int \Tr(a_\tau\nabla^2\psi(x))q_r(x)\,\dd x\,\dd\widehat T_r^c.
\]
Now let \(\rho_r=\Law(Y_r\mid T_{[0,1]})\) for the local SDE
\eqref{eq:clock-reverse-sde-thm}. Its weak equation is
\begin{align*}
\dd\int \psi(x)\rho_r(x)\,\dd x
&=
-\int f_\tau(x)\cdot\nabla\psi(x)\rho_r(x)\,\dd x\,\dd r \\
&\quad+
\int a_\tau s_\tau^T(x)\cdot\nabla\psi(x)\rho_r(x)\,\dd x\,\dd\widehat T_r^c \\
&\quad+
\frac12\int \Tr(a_\tau\nabla^2\psi(x))\rho_r(x)\,\dd x\,\dd\widehat T_r^c.
\end{align*}
Substituting \(\rho_r=q_r=p_\tau^T\) and using
\(p_\tau^T s_\tau^T=\nabla p_\tau^T\), integration by parts gives
\[
\int a_\tau s_\tau^T\cdot\nabla\psi\,p_\tau^T\,\dd x
=
\int a_\tau\nabla p_\tau^T\cdot\nabla\psi\,\dd x
=
-\int \Tr(a_\tau\nabla^2\psi)p_\tau^T\,\dd x.
\]
Therefore the two \(\dd\widehat T_r^c\)-terms combine to
\(-\frac12\int \Tr(a_\tau\nabla^2\psi)p_\tau^T\,\dd x\,\dd\widehat T_r^c\),
which matches the weak equation for \(q_r\). With matching initial law at
\(r=0\), uniqueness gives the stated marginal matching.

For the deterministic equation \eqref{eq:clock-pf-ode-thm}, the weak equation
has the same drift term but only
\(\frac12\int a_\tau s_\tau^T\cdot\nabla\psi\,\zeta_r\,\dd x\,\dd\widehat T_r^c\)
in the clock direction. Substituting \(\zeta_r=q_r\) and applying the same
integration by parts yields exactly the same weak equation for \(q_r\). Hence
\(\Law(Z_r\mid T_{[0,1]})=p_{1-r}^T\) on the continuous part.
\end{proof}

If the clock is absolutely continuous, \(T_t=\int_0^t\ell_s\,\dd s\), then
\(\dd\widehat T_r^c=\ell_\tau\,\dd r\), and
\eqref{eq:clock-reverse-sde-thm} reduces to
\[
\dd Y_r
=
\bigl[-f_\tau(Y_r)+\ell_\tau a_\tau s_\tau^T(Y_r)\bigr]\dd r
+
\sqrt{\ell_\tau}\,G_\tau\,\dd\bar B_r,
\qquad \tau=1-r,
\]
while the probability flow becomes
\[
\dd Z_r
=
\bigl[-f_\tau(Z_r)+\tfrac12\ell_\tau a_\tau s_\tau^T(Z_r)\bigr]\dd r.
\]
This is the classical reverse-SDE/probability-flow structure with the diffusion
rate modulated by the realized clock rate.

The learning target associated with the local continuous dynamics is the
clock-conditioned score \(s_t^T\). Accordingly, for continuous-clock diffusion
implementations, or for the continuous part of a jump-clock implementation, a
pathwise score-matching objective is
\begin{equation}
\mathcal L_{\mathrm{DSM}}(\theta)
:=
\EE_{t\sim\mathrm{Unif}(0,1)}
\omega_t\,\EE_T\left[
\EE_{X_t\mid T}\left[
\bigl\|
s_\theta(X_t,t,\eta_T)-s_t^T(X_t)
\bigr\|_2^2
\right]
\right],
\label{eq:clock-diffusion-loss}
\end{equation}
where \(\omega_t>0\) is an optional weighting schedule. For clocks with jumps,
this score objective does not by itself specify the reverse jump kernels in
\eqref{eq:reverse-jump-kernel-measure}; exact reverse simulation for such clocks
requires those nonlocal kernels, or an additional approximation to them.

In practice, the marginal score \(s_t^T\) is typically unavailable. One may
therefore use a clock-conditioned denoising score-matching objective whenever
the forward transition kernel given \((X_0,T_{[0,1]})\) is tractable:
\begin{equation}
\begin{aligned}
&\mathcal L_{\mathrm{CDSM}}(\theta)\\
:=&
\EE_{t\sim\mathrm{Unif}(0,1)}
\omega_t\,\EE_{X_0,T}\left[
\EE_{X_t\mid X_0,T}\left[
\bigl\|
s_\theta(X_t,t,\eta_T)
-
\nabla_{x_t}\log p_t(x_t\mid X_0,T_{[0,1]})\big|_{x_t=X_t}
\bigr\|_2^2
\right]
\right].
\end{aligned}
\label{eq:clock-cdsm-loss}
\end{equation}
Under the same square-integrability and projection conditions as standard
denoising score matching, the conditional score in
\eqref{eq:clock-cdsm-loss} projects onto the marginal score \(s_t^T\) after
conditioning on \((X_t,T_{[0,1]})\).

For linear-Gaussian forward schedules, the conditional kernel is Gaussian. If
\[
X_t\mid (X_0,T_{[0,1]})
\sim
\cN\bigl(m_t(X_0,T_{[0,1]}),\,Q_t(T_{[0,1]})\bigr),
\]
then
\[
\nabla_{x_t}\log p_t(x_t\mid X_0,T_{[0,1]})
=
-\,Q_t(T_{[0,1]})^{-1}
\bigl(x_t-m_t(X_0,T_{[0,1]})\bigr),
\]
and \eqref{eq:clock-cdsm-loss} becomes
\begin{equation}
\begin{aligned}
&\mathcal L_{\mathrm{CDSM}}(\theta)\\
:=&
\EE_{t\sim\mathrm{Unif}(0,1)}
\omega_t\,\EE_{X_0,T}\left[
\EE_{X_t\mid X_0,T}\left[
\Bigl\|
s_\theta(X_t,t,\eta_T)
+
Q_t(T_{[0,1]})^{-1}
\bigl(X_t-m_t(X_0,T_{[0,1]})\bigr)
\Bigr\|_2^2
\right]
\right].
\end{aligned}
\label{eq:clock-cdsm-loss-gaussian}
\end{equation}

\section{Additional results and proofs of main theorems}
\label{appendix:additional_results}

\subsection{A note on conditioning on the clock history}
\label{appendix:clock-history-note}

The main construction conditions on the full realized clock path
\(T_{[0,1]}\). In general this conditioning cannot be replaced by conditioning
only on the past clock history \(T_{[0,t]}\), because the clock-conditioned
source may itself depend on future clock values such as \(T_1\). The following
lemma records the additional non-anticipativity condition under which such a
reduction is valid.

\begin{lemma}[Clock-history reduction under non-anticipative initialization]
\label{lem:clock-history}
Consider the forward construction \eqref{eq:clock-forward-sde}. Assume that,
for every \(t\in[0,1]\), the initial variable \(X_0\) is conditionally
independent of \(\sigma(T_{(t,1]})\) given \(\sigma(T_{[0,t]})\), and that the
Brownian motion driving the SDE is independent of the clock and of the base
randomness used to form \(X_0\). Then, for every \(t\in[0,1]\),
\[
\Law(X_t\mid T_{[0,1]})=\Law(X_t\mid T_{[0,t]})
\qquad\text{a.s.}
\]
Equivalently, whenever both sides admit conditional densities,
\[
p_t(\cdot\mid T_{[0,1]})=p_t(\cdot\mid T_{[0,t]})
\qquad\text{a.s.}
\]
\end{lemma}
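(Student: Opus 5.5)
The plan is to represent \(X_t\) as a measurable functional of three ingredients and then use independence to factor the conditional law. Under well-posedness of \eqref{eq:clock-forward-sde}, for a fixed realized clock path the solution at time \(t\) is a measurable map
\[
X_t=\Phi_t\bigl(X_0,\,T_{[0,t]},\,W\bigr),
\]
where only the clock values on \([0,t]\) enter. Indeed, the drift \(f\) and diffusion matrix \(G\) are deterministic, and the driving noise \(s\mapsto W_{T_s}\) for \(s\le t\) depends only on \(T_{[0,t]}\) and on \(W\). For strong solutions this is the usual Yamada--Watanabe-type measurability of the solution map. Alternatively, on a discretization grid one can write \(X_t\) as an iterated composition of Euler steps involving the increments \(W_{T_{s_{k+1}}}-W_{T_{s_k}}\), and pass to the limit. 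So the first step is simply to record that \(X_t\) is \(\sigma(X_0,T_{[0,t]},W)\)-measurable.

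The second step is to identify the conditional law of the triple \((X_0,T_{[0,t]},W)\) given \(T_{[0,1]}\). Write \(X_0=h(T_{[0,1]},\xi)\) with base randomness \(\xi\). The Brownian motion \(W\) is independent of \((T,\xi)\), hence of \((T,X_0)\). The non-anticipativity hypothesis says that \(X_0\) is conditionally independent of \(T_{(t,1]}\) given \(T_{[0,t]}\), so \(\Law(X_0\mid T_{[0,1]})=\Law(X_0\mid T_{[0,t]})\). Combining the two, for bounded measurable \(\psi\),
\[
\EE\bigl[\psi(X_0,W)\mid T_{[0,1]}\bigr]=\int\!\!\int \psi(x,w)\,\mu_{T_{[0,t]}}(\dd x)\,\mathbb W(\dd w),
\]
where \(\mu_{T_{[0,t]}}\) is a regular version of \(\Law(X_0\mid T_{[0,t]})\) and \(\mathbb W\) is Wiener measure. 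The right-hand side is \(\sigma(T_{[0,t]})\)-measurable. Applying this with \(\psi(x,w)=\varphi(\Phi_t(x,T_{[0,t]},w))\), where \(T_{[0,t]}\) is frozen (which is legitimate by the freezing lemma, since \(T_{[0,t]}\) is \(\sigma(T_{[0,1]})\)-measurable), gives that \(\EE[\varphi(X_t)\mid T_{[0,1]}]\) is \(\sigma(T_{[0,t]})\)-measurable.

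The final step is the tower property. Since \(\EE[\varphi(X_t)\mid T_{[0,1]}]\) is already \(\sigma(T_{[0,t]})\)-measurable, it coincides a.s. with \(\EE[\varphi(X_t)\mid T_{[0,t]}]\). Ranging \(\varphi\) over a countable convergence-determining class gives equality of regular conditional laws a.s. The density statement follows because two a.s. equal conditional laws have a.e. equal densities whenever both densities exist.

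The main obstacle is the first step, making rigorous that the solution map depends on the clock only through \(T_{[0,t]}\) when \(T\) has jumps. The time-changed process \(W_{T_\cdot}\) is then a discontinuous semimartingale in its own filtration, and pathwise uniqueness has to be invoked in that setting. I would handle this by treating the clock as fixed, using the independence of \(W\) and \(T\) so that \(W_{T_\cdot}\) is a Gaussian process with independent increments whose law given \(T_{[0,1]}\) depends only on \(T_{[0,t]}\) on \([0,t]\). Well-posedness, which the lemma assumes, then gives a measurable solution functional. If strong well-posedness is unavailable, the fallback is to argue at the level of laws: uniqueness in law of the solution on \([0,t]\), given a driving-noise law and an initial law both determined by \(T_{[0,t]}\).
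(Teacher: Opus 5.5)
Your proposal is correct and follows essentially the paper's route. The paper likewise asserts that \(X_t\) is measurable with respect to \(\sigma(X_0,T_{[0,t]})\vee\sigma(W_u:0\le u\le T_t)\), argues that this \(\sigma\)-field is conditionally independent of \(\sigma(T_{(t,1]})\) given \(\sigma(T_{[0,t]})\), and concludes with the same tower-property step. Your explicit factorization \(\Law(X_0,W\mid T_{[0,1]})=\mu_{T_{[0,t]}}\otimes\mathbb{W}\) is the cleaner way to obtain that joint conditional independence. The paper gets it by combining two separate conditional independences, which does not suffice in general, so it implicitly relies on exactly the joint independence of \(W\) from \((T,X_0)\) that you invoke.
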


\begin{proof}
Fix \(t\in[0,1]\) and set
\[
\mathcal G_t
:=
\sigma(X_0,T_{[0,t]})\vee \sigma(W_u:0\le u\le T_t).
\]
By non-anticipativity of the forward dynamics, \(X_t\) is measurable with
respect to \(\mathcal G_t\). Since \(T_t\) is \(\sigma(T_{[0,t]})\)-measurable
and the Brownian motion \(W\) is independent of the entire clock,
\(\sigma(W_u:0\le u\le T_t)\) is conditionally independent of
\(\sigma(T_{(t,1]})\) given \(\sigma(T_{[0,t]})\). By assumption the same is
true for \(X_0\). Hence \(\mathcal G_t\), and therefore \(X_t\), is
conditionally independent of \(\sigma(T_{(t,1]})\) given \(\sigma(T_{[0,t]})\).
For any bounded measurable test function \(\varphi\),
\[
\EE\left[\varphi(X_t)\,\middle|\,\sigma(T_{[0,1]})\right]
=
\EE\left[\varphi(X_t)\,\middle|\,\sigma(T_{[0,t]})\right]
\qquad\text{a.s.},
\]
which proves the claim.
\end{proof}

The non-anticipativity assumption in Lemma~\ref{lem:clock-history} is essential.
For example, if \(X_0\mid T_{[0,1]}\sim \cN(0,T_1I_d)\), then already at
\(t=0\) the conditional law given \(T_{[0,1]}\) is \(\cN(0,T_1I_d)\), whereas
the conditional law given \(T_{[0,0]}\) is the corresponding mixture over
\(T_1\). These laws are generally different. This is why the main
flow-matching construction consistently uses
\(p_t^T=\Law(X_t^T)=\Law(X_t\mid T_{[0,1]})\) rather than replacing it by a
past-clock conditional law.

\subsection{Conditional Gaussian law of the affine path}
\label{appendix:proof-clock-affine-law}

We verify the conditional Gaussian law \eqref{eq:clock-affine-law} stated in
the main text. Recall the assumptions: conditional on the realized clock path
\(T_{[0,1]}\), \(X_0^T\sim\cN(0,\Sigma(T_{[0,1]}))\); \(X_0^T\) and \(X_1^T\)
are conditionally independent given \(T_{[0,1]}\); and the affine path is
\(X_t^T=\alpha_t^T X_1^T+\beta_t^T X_0^T\) with deterministic schedules
\(\alpha_t^T,\beta_t^T\) given \(T_{[0,1]}\).

\begin{proof}[Proof of \eqref{eq:clock-affine-law}]
Fix \(T_{[0,1]}\) and \(X_1^T=x_1\). Under this conditioning, \(\alpha_t^T\)
and \(\beta_t^T\) are deterministic, and the conditional independence
assumption gives
\(
X_0^T\mid (X_1^T=x_1,\,T_{[0,1]})\sim\cN(0,\Sigma(T_{[0,1]})).
\)
Hence
\[
X_t^T\mid (X_1^T=x_1,\,T_{[0,1]})
=
\alpha_t^T x_1+\beta_t^T\,X_0^T\,\bigl|\,(X_1^T=x_1,\,T_{[0,1]})
\]
is an affine transformation of a Gaussian, with mean \(\alpha_t^T x_1\) and
covariance \((\beta_t^T)^2\Sigma(T_{[0,1]})\). Therefore
\[
X_t^T\,\bigl|\,(X_1^T=x_1,\,T_{[0,1]})
\sim
\cN\bigl(\alpha_t^T x_1,\,(\beta_t^T)^2\Sigma(T_{[0,1]})\bigr),
\]
which is precisely \eqref{eq:clock-affine-law}.
\end{proof}

\subsection{Detailed pathwise equivalence result and proof}
\label{appendix:proof-chtfm-htfm-equivalence}

\begin{proposition}[Detailed pathwise conditional--marginal equivalence]
\label{prop:chtfm-htfm-equivalence-full}
Assume \eqref{eq:clock-compatibility-velocity} and suppose that, for every
\(\theta\) in the parameter class and for a.e.\ \(t\in[0,1]\),
\[
\EE\left[
\|u_\theta(X_t^T,t,T_{[0,1]})\|_2^2
+
\|v_t^T(X_t^T\mid X_1^T)\|_2^2
\,\middle|\,
T_{[0,1]}
\right]
<\infty
\qquad\text{a.s.}
\]
Then, for a.e.\ \(t\), there exists a nonnegative
\(\sigma(T_{[0,1]})\)-measurable random variable \(C_t(T)\), independent of
\(\theta\), such that
\[
\EE\left[
R_t^{\mathrm{cond}}(\theta;X_1^T,T)
\,\middle|\,
T_{[0,1]}
\right]
=
R_t^{\mathrm{marg}}(\theta;T)+C_t(T).
\]
Consequently, whenever the risks and \(C_t(T)\) are integrable over \((t,T)\)
with \(t\sim\mathrm{Unif}(0,1)\),
\[
\mathcal L_{\mathrm{CHTFM}}(\theta)
=
\mathcal L_{\mathrm{HTFM}}(\theta)
+
\EE_{t\sim\mathrm{Unif}(0,1)}
\EE_T\left[C_t(T)\right].
\]
In particular, if the additive constant is finite, the two population
objectives have the same minimizers. If \(u_\theta\) is differentiable in
\(\theta\) and differentiation may be exchanged with the conditional
expectations, the clock expectation, and the uniform-time expectation, then
\[
\nabla_\theta\mathcal L_{\mathrm{CHTFM}}(\theta)
=
\nabla_\theta\mathcal L_{\mathrm{HTFM}}(\theta).
\]
\end{proposition}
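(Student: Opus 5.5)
The plan is to run the standard Lipman et al. bias--variance argument pathwise: fix a realized clock \(T_{[0,1]}\) and a time \(t\), work entirely under the conditional probability given \(T_{[0,1]}\), and only integrate over \((t,T)\) at the end. Throughout I abbreviate \(u_\theta:=u_\theta(X_t^T,t,T_{[0,1]})\), \(v:=v_t^T(X_t^T\mid X_1^T)\) and \(u:=u_t^T(X_t^T)\).

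\textbf{Step 1: identify \(u_t^T\) as a conditional expectation.} From the compatibility identities \eqref{eq:clock-compatibility-velocity}, I will show that for \(p_t^T\)-a.e.\ \(x\),
\[
u_t^T(x)=\EE\bigl[v_t^T(X_t^T\mid X_1^T)\,\big|\,X_t^T=x,\,T_{[0,1]}\bigr].
\]
The joint conditional law of \((X_t^T,X_1^T)\) given the clock is \(p_t^T(\dd x\mid x_1)\,p_{\mathrm{data}}(\dd x_1)\), which uses independence of the data and the clock. So the second identity in \eqref{eq:clock-compatibility-velocity}, integrated against any bounded test function \(\varphi(x)\), says exactly that
\[
\EE\bigl[\varphi(X_t^T)\,v\mid T\bigr]=\EE\bigl[\varphi(X_t^T)\,u_t^T(X_t^T)\mid T\bigr].
\]
This is the defining property of the conditional expectation. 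The assumed conditional square-integrability of \(v\) makes everything well defined, and Jensen gives \(\EE[\|u\|^2\mid T]<\infty\).

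\textbf{Step 2: expand the square.} Write \(u_\theta-v=(u_\theta-u)+(u-v)\). Then
\[
\EE\bigl[\|u_\theta-v\|^2\mid T\bigr]=R_t^{\mathrm{marg}}(\theta;T)+2\,\EE\bigl[\langle u_\theta-u,\,u-v\rangle\mid T\bigr]+C_t(T),
\]
where \(C_t(T):=\EE[\|u-v\|^2\mid T]\ge0\). This \(C_t(T)\) is \(\sigma(T_{[0,1]})\)-measurable and does not depend on \(\theta\), and it is finite by the integrability assumption.

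The cross term vanishes by the tower property: condition on \((X_t^T,T_{[0,1]})\). The factor \(u_\theta-u\) is measurable with respect to that \(\sigma\)-field, and \(\EE[u-v\mid X_t^T,T]=0\) by Step 1. Cauchy--Schwarz together with the assumed \(L^2\) bounds justifies pulling the factor out.

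Here I need that \(\EE[R_t^{\mathrm{cond}}\mid T]\) coincides with \(\EE[\|u_\theta-v\|^2\mid T]\). This holds because \(R_t^{\mathrm{cond}}\) is the inner conditional expectation given \((X_1^T,T)\), and the tower property applies again.

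\textbf{Step 3: average over \((t,T)\).} Take expectations over \(t\sim\mathrm{Unif}(0,1)\) and over \(T\). Under the stated integrability of the risks and of \(C_t(T)\), linearity and Fubini--Tonelli (all terms are nonnegative) give \eqref{eq:population-risk-decomp}. Two consequences follow:
\begin{itemize}
\item Equal minimizers: the constant is finite and \(\theta\)-free, so the two objectives have the same minimizers.
\item Equal gradients: when differentiation may be interchanged with the expectations, the constant has zero gradient, so the gradients of the two objectives coincide.
\end{itemize}

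\textbf{Main obstacle.} I expect Step 1 to be the delicate point: making the identification of \(u_t^T\) rigorous as a regular conditional expectation, measurable jointly in \((x,T)\). I would handle this by working with regular conditional distributions of \((X_t^T,X_1^T)\) given \(T\) on a Polish space. I would then define \(u_t^T\) through the disintegration, noting that it agrees \(p_t^T\)-a.e.\ with any field satisfying \eqref{eq:clock-compatibility-velocity}, which is all the risk requires. The heavy-tailed issue never arises, because all square-integrability is required only conditionally on the clock.
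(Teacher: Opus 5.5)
Your proposal is correct and follows the same route as the paper. Working in the fixed-clock conditional space, you identify \(u_t^T(X_t^T)\) as the conditional expectation (the \(L^2\)-projection) of \(v_t^T(X_t^T\mid X_1^T)\) given \(X_t^T\); your square expansion with the vanishing cross term is exactly the conditional Pythagorean identity the paper invokes, with \(C_t(T)=\EE[\|v_t^T-u_t^T\|_2^2\mid T_{[0,1]}]\), followed by Fubini over \((t,T)\). Your test-function derivation of Step~1 from \eqref{eq:clock-compatibility-velocity} spells out a step the paper only asserts.
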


\begin{proof}[Proof of Propositions~\ref{prop:chtfm-htfm-equivalence}
and~\ref{prop:chtfm-htfm-equivalence-full}]
Fix a time \(t\) for which the assumptions hold. By
\eqref{eq:clock-compatibility-velocity},
\[
u_t^T(x)
=
\EE\left[
v_t^T(X_t^T\mid X_1^T)
\,\middle|\,
X_t^T=x
\right]
\]
for \(p_t^T\)-a.e.\ \(x\), where expectations are taken in the fixed-clock
conditional probability space. The same identity is understood in the weak
sense when densities are not used. Thus, within the clock-conditioned
environment indexed by \(T\), \(u_t^T(X_t^T)\) is the \(L^2\)-projection of the
pathwise conditional velocity \(v_t^T(X_t^T\mid X_1^T)\) onto the sigma-field
\(\sigma(X_t^T)\). Applying the conditional Pythagorean identity gives
\begin{align*}
\EE\left[
R_t^{\mathrm{cond}}(\theta;X_1^T,T)
\,\middle|\,
T_{[0,1]}
\right]
&=
R_t^{\mathrm{marg}}(\theta;T)
+
\EE\left[
\bigl\|
v_t^T(X_t^T\mid X_1^T)-u_t^T(X_t^T)
\bigr\|_2^2
\,\middle|\,
T_{[0,1]}
\right] \\
&=
R_t^{\mathrm{marg}}(\theta;T)+C_t(T),
\end{align*}
where
\[
C_t(T)
:=
\EE\left[
\bigl\|
v_t^T(X_t^T\mid X_1^T)-u_t^T(X_t^T)
\bigr\|_2^2
\,\middle|\,
T_{[0,1]}
\right]
\ge 0.
\]

If the risks and \(C_t(T)\) are integrable over \((t,T)\) with
\(t\sim\mathrm{Unif}(0,1)\), then Fubini's theorem yields
\begin{equation}
\mathcal L_{\mathrm{CHTFM}}(\theta)
=
\mathcal L_{\mathrm{HTFM}}(\theta)
+
\EE_{t\sim\mathrm{Unif}(0,1)}
\EE_T\left[C_t(T)\right].
\label{eq:objective-decomp}
\end{equation}
The last term is independent of \(\theta\). Hence, whenever it is finite, the
two population objectives have the same minimizers. If \(u_\theta\) is
differentiable in \(\theta\) and differentiation may be exchanged with the
conditional expectations, the clock expectation, and the uniform-time
expectation, then
\(\nabla_\theta\mathcal L_{\mathrm{CHTFM}}(\theta)
=\nabla_\theta\mathcal L_{\mathrm{HTFM}}(\theta)\).
\end{proof}

\subsection{Clock-conditioned straight-line geodesic}\label{appendix:proof-clock-straight-geodesic}

\begin{theorem}[Clock-conditioned straight-line geodesic]
\label{thm:clock-straight-geodesic}
Fix \(p\in[1,\infty)\). For a.e.\ realized clock path \(T\), assume
\(p_0(\cdot\mid T_{[0,1]}),p_{\mathrm{data}}\in\mathcal P_p(\RR^d)\), and let
\(\pi^T\) be a \(p\)-optimal coupling between them. Then
\(
\mu_t^T:=\bigl((1-t)x_0+t x_1\bigr)_{\#}\pi^T,\  t\in[0,1],
\)
is a constant-speed \(W_p\)-geodesic from \(p_0(\cdot\mid T_{[0,1]})\) to
\(p_{\mathrm{data}}\).
\end{theorem}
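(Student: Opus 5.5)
This is the standard displacement-interpolation argument in $(\mathcal P_p(\RR^d),W_p)$, applied for each fixed clock path. Fix a realized clock path $T$ outside the exceptional null set. Write $\mu_0:=p_0(\cdot\mid T_{[0,1]})$, $\mu_1:=p_{\mathrm{data}}$, and $\pi:=\pi^T$. Set $e_t(x_0,x_1):=(1-t)x_0+tx_1$, so that $\mu_t^T=(e_t)_\#\pi$. We must show
\[
W_p(\mu_s^T,\mu_t^T)=|t-s|\,W_p(\mu_0,\mu_1)\quad\text{for all }s,t\in[0,1].
\]
Nothing about the clock enters beyond the fact that both endpoints lie in $\mathcal P_p$. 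In particular, finite $p$-th moments with $p<2$ suffice, so heavy-tailed sources are covered.

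First I check well-posedness. The endpoints are $(e_0)_\#\pi=\mu_0$ and $(e_1)_\#\pi=\mu_1$ because $\pi$ is a coupling. Next, $\mu_t^T\in\mathcal P_p$, since
\[
|e_t(x_0,x_1)|^p\le 2^{p-1}\bigl(|x_0|^p+|x_1|^p\bigr)
\]
and both marginals have finite $p$-th moments. Existence of the $p$-optimal coupling $\pi^T$ is assumed in the statement; it also follows from standard lower semicontinuity and tightness.

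Second, I prove the upper bound. For $s\le t$, the pushforward $(e_s,e_t)_\#\pi$ is a coupling of $\mu_s^T$ and $\mu_t^T$. Under it,
\[
|e_t-e_s|=(t-s)\,|x_1-x_0|,
\]
so
\[
W_p(\mu_s^T,\mu_t^T)\le (t-s)\Bigl(\int|x_1-x_0|^p\,\dd\pi\Bigr)^{1/p}=(t-s)\,W_p(\mu_0,\mu_1),
\]
where the last equality uses the optimality of $\pi$.

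Third, I prove the reverse inequality via the triangle inequality for $W_p$, which is a metric on $\mathcal P_p$; this is the only nontrivial input. For $0\le s\le t\le 1$,
\[
W_p(\mu_0,\mu_1)\le W_p(\mu_0,\mu_s^T)+W_p(\mu_s^T,\mu_t^T)+W_p(\mu_t^T,\mu_1).
\]
By the upper bound, the first term is at most $s\,W_p(\mu_0,\mu_1)$ and the third is at most $(1-t)\,W_p(\mu_0,\mu_1)$. Hence
\[
W_p(\mu_s^T,\mu_t^T)\ge (t-s)\,W_p(\mu_0,\mu_1).
\]
Equality then holds throughout, which gives the constant-speed geodesic property.

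The main (mild) obstacle is measure-theoretic. The statement is pathwise, so it only requires the argument for each fixed $T$ in a full-measure set. If one also wants $T\mapsto\pi^T$ to be measurable, so that $\mu_t^T$ is a genuine clock-indexed family, I would invoke a measurable selection of optimal plans, as in Villani's \emph{Optimal Transport}, Cor.~5.22. This works because $T\mapsto p_0(\cdot\mid T_{[0,1]})=\cN(0,\Sigma(T_{[0,1]}))$ is measurable by the measurability of $\Sigma$. The geodesic claim itself does not need this.
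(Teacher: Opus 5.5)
Your proposal is correct and follows the paper's proof. Both arguments get the upper bound $W_p(\mu_s^T,\mu_t^T)\le |t-s|\,W_p(\mu_0,\mu_1)$ from the pushforward coupling and the optimality of $\pi^T$, and both then use the triangle inequality through the intermediate points to force equality. Your check that $\mu_t^T\in\mathcal P_p(\RR^d)$ and your measurable-selection remark are harmless additions that the pathwise claim does not need.
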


\begin{proof}
Fix a realized clock path \(T\) for which
\(p_0(\cdot\mid T_{[0,1]})\in\mathcal P_p(\RR^d)\), and abbreviate
\[
\mu_0:=p_0(\cdot\mid T_{[0,1]}),
\qquad
\mu_1:=p_{\mathrm{data}},
\qquad
\mu_r:=\bigl((1-r)x_0+r x_1\bigr)_{\#}\pi^T,
\quad r\in[0,1].
\]
Then \(\mu_t^T=\mu_t\). By assumption \(\mu_0,\mu_1\in\mathcal P_p(\RR^d)\),
so the \(p\)-optimal transport problem is well posed and
\(W_p(\mu_0,\mu_1)<\infty\).

For any \(r,r'\in[0,1]\), the pushforward
\[
\bigl((1-r)x_0+r x_1,\,(1-r')x_0+r'x_1\bigr)_{\#}\pi^T
\]
is a coupling of \(\mu_r\) and \(\mu_{r'}\). Hence, using
the optimality of \(\pi^T\) at the last step,
\begin{equation}
W_p(\mu_r,\mu_{r'})^p
\le
\int \|(r-r')(x_1-x_0)\|^p\,d\pi^T
=
|r-r'|^p\,W_p(\mu_0,\mu_1)^p.
\label{eq:interp-upper}
\end{equation}
Now fix \(0\le r\le r'\le 1\). Applying \eqref{eq:interp-upper} to the pairs
\((0,r)\), \((r,r')\), and \((r',1)\), we obtain
\[
\begin{aligned}
&W_p(\mu_0,\mu_r)+W_p(\mu_r,\mu_{r'})+W_p(\mu_{r'},\mu_1) \\
\le\;&
r\,W_p(\mu_0,\mu_1)
+(r'-r)\,W_p(\mu_0,\mu_1)
+(1-r')\,W_p(\mu_0,\mu_1) \\
=\;&
W_p(\mu_0,\mu_1).
\end{aligned}
\]
On the other hand, the triangle inequality yields
\[
W_p(\mu_0,\mu_1)
\le
W_p(\mu_0,\mu_r)+W_p(\mu_r,\mu_{r'})+W_p(\mu_{r'},\mu_1).
\]
Therefore equality holds throughout, and in particular,
\[
W_p(\mu_r,\mu_{r'})
=
|r-r'|\,W_p(\mu_0,\mu_1)
\qquad\text{for all } r,r'\in[0,1].
\]
Thus \((\mu_t^T)_{t\in[0,1]}\) is a constant-speed \(W_p\)-geodesic between
\(p_0(\cdot\mid T_{[0,1]})\) and \(p_{\mathrm{data}}\).
\end{proof}

\begin{remark}
\label{rem:choice-of-p}
For each fixed clock path with finite covariance matrix,
\(p_0(\cdot\mid T_{[0,1]})=\cN(0,\Sigma(T_{[0,1]}))\) belongs to
\(\mathcal P_q(\RR^d)\) for every \(q\ge 1\). Thus
Theorem~\ref{thm:clock-straight-geodesic} is pathwise in \(T\): after
conditioning on the clock, the source side does not restrict the choice of
\(p\), and the admissible \(p\) is governed by the data moment assumption. This
does not imply that the clock-marginal source law has all moments; after
integrating out \(T\), stable or Student-\(t\) clock choices can again produce
heavy-tailed sources with limited or infinite moments. The classical
\(W_2\)-statement is recovered when the data law has a finite second moment and
the argument is applied pathwise.
\end{remark}

\subsection{Proof of Theorem~\ref{thm:clock-choice-recovery}}
\label{appendix:proof-clock-choice-recovery}

\begin{proof}[Proof of Theorem~\ref{thm:clock-choice-recovery}]
By \eqref{eq:clock-affine-law} and the source covariance \(\Sigma_A(T_{[0,1]}):=2A(T)I_d\), where
\(A(T):=\int_0^1 T_s\,\dd s\), for every
fixed clock path and endpoint \(X_1^T=x_1\),
\[
X_t^T\mid X_1^T=x_1
\sim
\cN\bigl(\alpha_t x_1,\,2\beta_t^2A(T)I_d\bigr),
\qquad
A(T)=\int_0^1T_s\,\dd s .
\]
Thus the clock-marginal conditional law is a Gaussian scale mixture.

For the deterministic clock \(T_t=t\), \(A(T)=1/2\), so the covariance above is
\(\beta_t^2I_d\). This proves the Gaussian case.

For the stable case, let \(\alpha\in(0,2)\), \(\rho=\alpha/2\), and let \(T\) be the
\(\rho\)-stable subordinator in the theorem. With
\(A(T)=\int_0^1T_s\dd s\), the integration-by-parts identity for
Lebesgue--Stieltjes integrals gives
\[
A(T)=\int_0^1(1-s)\,\dd T_s.
\]
Using the independent increments of the subordinator, for any \(\lambda\ge0\),
\begin{align}
\EE\left[e^{-\lambda A(T)}\right]
&=
\exp\left(
-\frac{\rho+1}{2}\int_0^1(\lambda(1-s))^\rho\,\dd s
\right)
=
\exp\left(-\frac{\lambda^\rho}{2}\right).
\label{eq:area-stable-laplace}
\end{align}
Therefore, for any \(\xi\in\RR^d\),
\begin{align}
\EE\left[e^{i\langle \xi,X_t\rangle}\mid X_1=x_1\right]
&=
\exp\bigl(i\alpha_t\langle \xi,x_1\rangle\bigr)\,
\EE_T\left[
\exp\left(-\beta_t^2 A(T)\|\xi\|_2^2\right)
\right] \notag\\
&=
\exp\bigl(i\alpha_t\langle \xi,x_1\rangle\bigr)\,
\exp\left(-\frac{\beta_t^\alpha\|\xi\|_2^\alpha}{2}\right).
\label{eq:stable-char-derivation-xt}
\end{align}
This is the characteristic function of a location-shifted isotropic
\(\alpha\)-stable law with location \(\alpha_t x_1\) and scale \(\beta_t\), under
the normalization used in the main text. The radial tail asymptotics
\(\PP(\|X_t-\alpha_t x_1\|_2>r\mid X_1=x_1)=\Theta(r^{-\alpha})\) as \(r\to\infty\)
then follow from the standard tail behavior of isotropic \(\alpha\)-stable laws~\citep{resnick2007heavy}.

For the Student-\(t\) case, \(T_t=tV_\nu\) gives \(A(T)=V_\nu/2\), and hence
\(\Sigma_A(T_{[0,1]})=V_\nu I_d\). Since
\(V_\nu\sim\mathrm{InvGamma}(\nu/2,\nu/2)\), marginalizing over \(V_\nu\) yields
\begin{align}
p_{X_t\mid X_1=x_1}(x)
&=
\int_0^\infty
\frac{1}{(2\pi \beta_t^2 v)^{d/2}}
\exp\left(
-\frac{\|x-\alpha_t x_1\|_2^2}{2\beta_t^2 v}
\right)
\frac{(\nu/2)^{\nu/2}}{\Gamma(\nu/2)}
v^{-\nu/2-1}e^{-\nu/(2v)}\,\dd v \notag\\
&=
\frac{\Gamma((\nu+d)/2)}
{\Gamma(\nu/2)(\nu\pi)^{d/2}\beta_t^d}
\left(
1+\frac{\|x-\alpha_t x_1\|_2^2}{\nu\beta_t^2}
\right)^{-(\nu+d)/2}.
\label{eq:student-derivation-xt}
\end{align}
This is the multivariate Student-\(t\) density with location \(\alpha_t x_1\),
scale \(\beta_t^2I_d\), and \(\nu\) degrees of freedom, whose radial tail
satisfies \(\PP(\|X_t-\alpha_t x_1\|_2>r\mid X_1=x_1)=\Theta(r^{-\nu})\) as
\(r\to\infty\)~\citep{resnick2007heavy}.
\end{proof}


\section{Additional Information on Log-Signature}
\label{appendix:log-sig}

This appendix fixes the signature-feature convention used in
Section~\ref{subsec:clock-signature-schedules}. The main paper specializes the
clock feature to the truncated logsignature, \(\eta_T=\ell_m(T)\). For
completeness we keep the generic notation \(\eta_T\) throughout this appendix
and discuss alternative choices (raw signatures, lead--lag transforms, or
concatenations with simple summaries). The feature \(\eta_T\) has two roles:
it conditions the learned vector field \(u_\theta(X_t^T,t,\eta_T)\), and, when
a clock-dependent trajectory is used, it may also parameterize the
interpolation schedules \(\alpha_t^T\) and \(\beta_t^T\). Throughout this
appendix, the feature is a summary of the full realized clock path
\(T_{[0,1]}\), sampled before drawing the clock-conditioned source and before
solving the generation ODE.

Let \(\mathbf z:[0,1]\to\RR^q\) be a continuous path of bounded variation. Its
signature is the tensor series
\[
S(\mathbf z)
=
\left(
1,
\int_{0<u_1<1}\dd \mathbf z_{u_1},
\int_{0<u_1<u_2<1}\dd \mathbf z_{u_1}\otimes \dd \mathbf z_{u_2},
\ldots
\right),
\]
and its truncation to order \(m\) is denoted by \(S^{(\le m)}(\mathbf z)\). The
truncated logsignature is
\[
\mathrm{LogSig}^{(\le m)}(\mathbf z)
:=
\log_{\otimes}\bigl(S^{(\le m)}(\mathbf z)\bigr),
\]
where \(\log_{\otimes}\) is computed in the truncated tensor algebra. Unlike the
raw signature, the logsignature lies in the truncated free Lie algebra and
removes many algebraic redundancies among tensor coordinates.

In our application, \(q=2\) and the path is the time-augmented clock
\(\mathbf T_s=(s,T_s)\). Since the random clocks considered in the paper may be
c\`adl\`ag and may have jumps, we do not rely on a canonical signature of the
jump path itself. Instead, for a discretization
\(0=t_0<t_1<\cdots<t_N=1\), we form the sampled time-augmented path
\[
\bigl(\mathbf T_{t_0},\mathbf T_{t_1},\ldots,\mathbf T_{t_N}\bigr)
=
\bigl((t_0,T_{t_0}),(t_1,T_{t_1}),\ldots,(t_N,T_{t_N})\bigr)
\]
and compute signatures after linearly interpolating these sampled points. With
this convention,
\[
\phi_m(T):=S^{(\le m)}(\mathbf T^{\mathrm{lin}})
\]
is the truncated signature feature used in
\eqref{eq:clock-truncated-signature-feature}, where \(\mathbf T^{\mathrm{lin}}\)
denotes the piecewise linear interpolation of the sampled time-augmented clock.
A logsignature version of the same feature is
\[
\ell_m(T):=
\mathrm{LogSig}^{(\le m)}(\mathbf T^{\mathrm{lin}}).
\]
Thus the model feature \(\eta_T\) may be chosen as \(\phi_m(T)\), as
\(\ell_m(T)\), or as a concatenation of either representation with simple clock
summaries such as \(T_1\) or \(\int_0^1T_s\dd s\). When invoking the
signature-linear approximation result in
Proposition~\ref{prop:signature-linear-universality}, one should take the raw
signature coordinates \(\phi_m(T)\), because the proposition is stated for linear
functionals of signatures. In neural-network conditioning, we often use the
logsignature \(\ell_m(T)\), since it is more compact; using logsignature
coordinates in the learned schedules should therefore be understood as a
practical finite-dimensional parameterization, not as the literal raw-signature
universality statement.

Under the piecewise linear convention, a jump in the sampled clock is represented
by a steep linear segment between two consecutive observation times. Other
conventions, such as inserting explicit vertical jump segments or using a
lead--lag transform, would define different finite-dimensional features. They do
not change the flow-matching identities in the main text: after conditioning on
\(T_{[0,1]}\), the path remains Gaussian as in
\eqref{eq:clock-affine-law}, and the endpoint-conditioned target remains the
affine expression \eqref{eq:clock-affine-target}. The feature \(\eta_T\) only
summarizes the realized clock environment for the network and, when used, for
the schedule parameterization.

The first logsignature level is the increment of the time-augmented path, so it
records \((1,T_1-T_0)\), which is \((1,T_1)\) under the standing assumption
\(T_0=0\). Higher levels encode how clock mass is allocated over physical time.
For example, in two dimensions the second Lie level corresponds to the signed
area between the physical-time coordinate and the clock coordinate; for monotone
clocks this separates paths with the same terminal value \(T_1\) but different
temporal allocations of their increments. This is why low-order signature
information can represent path-dependent functionals such as the area-clock
quantity \(\int_0^1T_s\dd s\) used in Section~\ref{subsec:flow-choices}.

The dimension reduction from signature to logsignature can be substantial. For a
\(q\)-dimensional path, the raw truncated signature has
\(1+\sum_{k=1}^m q^k\) coordinates including the constant coordinate, whereas the
nonconstant truncated logsignature has dimension
\[
\sum_{k=1}^m \ell_q(k),
\qquad
\ell_q(k):=\frac1k\sum_{r\mid k}\mu(r)q^{k/r},
\]
where \(\mu\) is the M\"obius function. Thus, for the time-augmented clock
\((q=2)\) and truncation order \(m=3\), the raw nonconstant signature has
dimension \(2+4+8=14\), while the logsignature has dimension \(2+1+2=5\). This
compactness is the main reason that the experiments use truncated logsignature
features for \(\eta_T\) when conditioning the neural velocity field.

\subsection{Signature-linear schedules}
\label{appendix:signature-linear-schedules}

This subsection gives the schedule details that are summarized in
Section~\ref{subsec:flow-choices}. The compactness assumption in
Proposition~\ref{prop:signature-linear-universality} below is the usual setting for
uniform approximation. For heavy-tailed clocks, which need not live on a compact
path set globally, the proposition should be read either on compact truncations
of the clock space, on high-probability compact subsets, or after applying the
feature normalizations used in training. The time augmentation is important:
it records the order in which clock mass is accumulated, so path functionals
such as \(\int_0^1T_s\dd s\) can be represented by low-order signature
information.

\begin{proposition}[Signature-linear approximation of clock functionals]
\label{prop:signature-linear-universality}
Let \(\mathcal K\) be a compact set of time-augmented, piecewise linear clock
paths \(\mathbf T=(s,T_s)_{s\in[0,1]}\) on which the signature separates paths
up to tree-like equivalence. Let
\(F:[0,1]\times\mathcal K\to\RR\) be continuous. Then for every
\(\varepsilon>0\), there exist a truncation level \(m\) and a continuous
coefficient curve \(c:[0,1]\to\RR^{D_m}\) such that
\begin{equation}
\sup_{t\in[0,1]}\sup_{T\in\mathcal K}
\left|F(t,T)-\langle c(t),\phi_m(T)\rangle\right|
<\varepsilon .
\label{eq:signature-universality-functional}
\end{equation}
In particular, clock-dependent schedule functionals such as
\(T\mapsto\alpha_t(T)\) and \(T\mapsto\beta_t(T)\) can be represented, up to
arbitrary uniform accuracy on compact clock families, by linear functionals of a
sufficiently high-order truncated signature:
\begin{equation}
\alpha_t(T)\approx \langle a(t),\phi_m(T)\rangle,
\qquad
\beta_t(T)\approx \langle b(t),\phi_m(T)\rangle .
\label{eq:signature-universality-schedule}
\end{equation}
\end{proposition}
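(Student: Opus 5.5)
The plan is to reduce the statement to the classical Stone--Weierstrass theorem on the compact space \([0,1]\times\mathcal K\), using the shuffle-product structure of signatures. Let \(\mathcal A\) be the set of functions on \([0,1]\times\mathcal K\) of the form \((t,T)\mapsto\langle c(t),\phi_m(T)\rangle\), with \(m\) finite and \(c\) continuous. I will check four things in turn. First, every element of \(\mathcal A\) is continuous. Second, \(\mathcal A\) is a subalgebra containing the constants. Third, \(\mathcal A\) separates points of \([0,1]\times\mathcal K\). Once these hold, Stone--Weierstrass makes \(\mathcal A\) uniformly dense in \(C([0,1]\times\mathcal K)\), which is exactly \eqref{eq:signature-universality-functional}. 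The schedule statement \eqref{eq:signature-universality-schedule} then follows by applying the result to \(F=\alpha_t(T)\) and \(F=\beta_t(T)\), each assumed jointly continuous, and taking the larger of the two truncation levels.

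\emph{Continuity.} Each coordinate of \(\phi_m\) is an iterated integral of the time-augmented path. On a compact family of piecewise linear paths, carrying the bounded-variation or \(p\)-variation topology in which \(\mathcal K\) is compact, these integrals are continuous by standard stability of iterated integrals. Hence \(\langle c(t),\phi_m(T)\rangle\) is jointly continuous when \(c\) is continuous.

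\emph{Algebra structure.} Constants come from the level-\(0\) coordinate, which equals \(1\). Linear combinations are immediate, since one can pad to a common truncation level. For products, the key tool is the shuffle identity \(\langle e_I,S\rangle\langle e_J,S\rangle=\langle e_I\sqcup\!\sqcup\, e_J,S\rangle\). It shows that the product of a level-\(m_1\) linear functional and a level-\(m_2\) linear functional is a linear functional at level \(m_1+m_2\). The coefficient curve of the product is a bilinear expression in \(c_1(t),c_2(t)\), so it remains continuous.

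\emph{Separation.} Points \((t,T)\) and \((t',T')\) with \(t\neq t'\) are separated by the function \(c(t)=t\,e_\emptyset\). Points with the same \(t\) and \(T\neq T'\) are handled by the hypothesis: the signature separates paths up to tree-like equivalence. Time augmentation matters here, because the first coordinate \(s\) is strictly increasing, so the augmented path contains no tree-like pieces. Distinct elements of \(\mathcal K\) therefore have distinct signatures, and some coordinate at some finite level differs between them. A constant coefficient curve on that coordinate gives the separating function.

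I expect the main obstacle to be continuity, specifically pinning down the topology on \(\mathcal K\) so that compactness and continuity of \(\phi_m\) are compatible; the algebraic steps are routine. If \(\mathcal K\) is compact only in the uniform topology, I would first try to add a uniform bound on total variation to the meaning of ``compact family''. That bound gives continuity of iterated integrals on \(\mathcal K\), since on sets of uniformly bounded variation, uniform convergence implies convergence of the signature. Failing that, I would argue via rough-path continuity in \(p\)-variation for some \(p\in(1,2)\), together with interpolation.
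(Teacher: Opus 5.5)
Your proof is correct. The paper gives no proof of this proposition; it states it and relies on the standard signature universality results it cites (e.g.\ Cuchiero--Primavera--Svaluto-Ferro). Your argument is exactly the standard proof behind that appeal: Stone--Weierstrass on the compact product \([0,1]\times\mathcal K\), with the shuffle identity giving closure under products and the \(t\)-dependence absorbed into continuous coefficient curves. So there is nothing to reconcile.

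Two remarks sharpen it.

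\emph{The topological obstacle you flag largely dissolves.} Clock paths are nondecreasing, so the clock component of any \(\mathbf T\in\mathcal K\) has total variation \(T_1-T_0=T_1\le\sup_{\mathcal K}\|T\|_\infty\). This is finite when \(\mathcal K\) is compact in the uniform topology, and hence whenever \(\mathcal K\) is compact in any finer topology. The uniform variation bound you proposed to impose is therefore automatic. Continuity of each coordinate of \(\phi_m\) under uniform convergence then follows by induction on the level: split \(\int X^n\,\mathrm{d}\mathbf T^n-\int X\,\mathrm{d}\mathbf T\) into \(\int(X^n-X)\,\mathrm{d}\mathbf T^n\) and \(\int X\,\mathrm{d}(\mathbf T^n-\mathbf T)\), and integrate the latter by parts. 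No rough-path or \(p\)-variation machinery is needed.

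\emph{Injectivity of the signature on time-augmented clocks can be checked directly, without tree-like reduction.} The coordinate indexed by the word \(\mathtt t\cdots\mathtt t\,\mathtt c\) with \(k\) letters \(\mathtt t\) equals \(\frac{1}{k!}\int_0^1 s^k\,\mathrm{d}T_s\). So two clocks with equal signatures have Stieltjes measures \(\mathrm{d}T\) and \(\mathrm{d}T'\) with equal moments on \([0,1]\), hence equal measures. Since \(T_0=T'_0=0\), this gives \(T=T'\). It also shows that the tree-like separation hypothesis in the statement is not actually needed for time-augmented clocks.
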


The proposition motivates representing clock-dependent interpolation
coefficients by signature-linear functions. A direct form is
\begin{equation}
\alpha_t^{T,m}=\langle a(t),\phi_m(T)\rangle,
\qquad
\beta_t^{T,m}=\langle b(t),\phi_m(T)\rangle,
\label{eq:signature-linear-schedules}
\end{equation}
where \(a(t),b(t)\in\RR^{D_m}\) are coefficient curves. If these curves are
absolutely continuous in \(t\), then for a fixed realized clock path the
physical-time derivatives are explicit:
\begin{equation}
\frac{\dd\alpha_t^{T,m}}{\dd t}
=
\left\langle \frac{\dd a(t)}{\dd t},\phi_m(T)\right\rangle,
\qquad
\frac{\dd\beta_t^{T,m}}{\dd t}
=
\left\langle \frac{\dd b(t)}{\dd t},\phi_m(T)\right\rangle.
\label{eq:signature-schedule-derivatives}
\end{equation}
Thus the flow-matching target differentiates the scalar-time coefficient curves,
not the clock path itself.

The direct form \eqref{eq:signature-linear-schedules} does not automatically
enforce the endpoint constraints \(
\alpha_0^T=0,
\
\beta_0^T=1,
\
\alpha_1^T=1,
\
\beta_1^T=0,
\
\beta_t^T>0\ \text{for }t\in[0,1)
\).
A boundary-corrected near-linear schedule is obtained as follows. Let
\(\lambda:[0,1]\to\RR\) be absolutely continuous with \(\lambda_0=0\), and set
\begin{equation}
A_t(T):=\langle a(t),\eta_T\rangle,
\qquad
B_t(T):=\langle b(t),\eta_T\rangle.
\label{eq:signature-schedule-ab}
\end{equation}
Then define
\begin{equation}
\alpha_t^T
=
t+\lambda_t(1-t)A_t(T),
\qquad
\beta_t^T
=
1-t+\lambda_t(1-t)B_t(T)
=
(1-t)\bigl(1+\lambda_tB_t(T)\bigr).
\label{eq:near-linear-independent-schedule}
\end{equation}
The endpoint constraints are automatic:
\begin{equation}
\alpha_0^T=0,\quad
\alpha_1^T=1,\quad
\beta_0^T=1,\quad
\beta_1^T=0.
\label{eq:near-linear-endpoints}
\end{equation}
The path is admissible whenever
\begin{equation}
1+\lambda_tB_t(T)>0,
\qquad t\in[0,1).
\label{eq:near-linear-positivity}
\end{equation}
Writing
\[
\frac{\dd A_t(T)}{\dd t}
=
\left\langle\frac{\dd a(t)}{\dd t},\eta_T\right\rangle,
\qquad
\frac{\dd B_t(T)}{\dd t}
=
\left\langle\frac{\dd b(t)}{\dd t},\eta_T\right\rangle,
\]
the required derivatives are
\begin{align}
\frac{\dd\alpha_t^T}{\dd t}
&=
1+\left(\frac{\dd\lambda_t}{\dd t}(1-t)-\lambda_t\right)A_t(T)
+\lambda_t(1-t)\frac{\dd A_t(T)}{\dd t},
\label{eq:near-linear-alpha-derivative}\\
\frac{\dd\beta_t^T}{\dd t}
&=
-\bigl(1+\lambda_tB_t(T)\bigr)
+(1-t)\left(\frac{\dd\lambda_t}{\dd t}B_t(T)+\lambda_t\frac{\dd B_t(T)}{\dd t}\right),
\label{eq:near-linear-beta-derivative}\\
\frac{\frac{\dd\beta_t^T}{\dd t}}{\beta_t^T}
&=
-\frac{1}{1-t}
+
\frac{\frac{\dd\lambda_t}{\dd t}B_t(T)+\lambda_t\frac{\dd B_t(T)}{\dd t}}
{1+\lambda_tB_t(T)}.
\label{eq:near-linear-beta-log-derivative}
\end{align}
The singularity at \(t=1\) is the same terminal singularity present in the
straight-line flow-matching target and is irrelevant for the time-integrated
objective, where \(t=1\) is not sampled.

A symmetric special case uses a single clock-dependent time warp. Let
\begin{equation}
C_t(T):=\langle c(t),\eta_T\rangle,
\qquad
\gamma_t^T:=t+\lambda_t(1-t)C_t(T),
\label{eq:near-linear-time-warp}
\end{equation}
and set
\begin{equation}
\alpha_t^T=\gamma_t^T,
\qquad
\beta_t^T=1-\gamma_t^T=(1-t)\bigl(1-\lambda_tC_t(T)\bigr).
\label{eq:near-linear-warp-schedule}
\end{equation}
This preserves the straight-line form
\(X_t^T=\gamma_t^TX_1^T+(1-\gamma_t^T)X_0^T\), while allowing the realized clock path
to accelerate or decelerate the interpolation. The endpoint-conditioned target
simplifies to
\begin{equation}
v_t^T(x\mid x_1)
=
\frac{\frac{\dd\gamma_t^T}{\dd t}}{1-\gamma_t^T}(x_1-x),
\qquad t\in[0,1).
\label{eq:near-linear-warp-target}
\end{equation}

\subsection{VE-type signature-linear schedule}
\label{appendix:ve-signature-schedule}

For the clock-aware trajectory ablation, we use a variance-exploding
(VE)-type affine path written in the signature-linear form
\eqref{eq:signature-linear-schedules}. This is a deterministic affine flow path
inspired by VE noise schedules, not an additional stochastic SDE. Since all
clocks in this experiment satisfy \(T_1>0\), define the normalized clock
\(\bar T_s:=T_s/T_1\) and the normalized time-augmented path
\(\bar{\mathbf T}_s=(s,\bar T_s)\). Let \(\phi_m(\bar T)\) denote the raw
truncated signature of \(\bar{\mathbf T}\). For \(m\ge2\), let
\(e_{\mathtt c\mathtt t}\) and \(e_{\mathtt t\mathtt c}\) be the unit vectors
selecting the level-two signature coordinates with words
\((\mathtt c,\mathtt t)\) and \((\mathtt t,\mathtt c)\), where \(\mathtt t\)
denotes physical time and \(\mathtt c\) denotes the normalized clock coordinate.
Then
\begin{align}
\left\langle e_{\mathtt c\mathtt t},\phi_m(\bar T)\right\rangle
&=
\int_{0<u<v<1}\dd \bar T_u\,\dd v
=
\int_0^1 \bar T_u\,\dd u,
\label{eq:clock-time-area-ct}\\
\left\langle e_{\mathtt t\mathtt c},\phi_m(\bar T)\right\rangle
&=
\int_{0<u<v<1}\dd u\,\dd \bar T_v
=
1-\int_0^1 \bar T_v\,\dd v .
\label{eq:clock-time-area-tc}
\end{align}
Thus the signed clock-shape score
\begin{equation}
R_m(T)
:=
\left\langle r_m,\phi_m(\bar T)\right\rangle,
\qquad
r_m:=e_{\mathtt c\mathtt t}-e_{\mathtt t\mathtt c},
\label{eq:signature-clock-shape-score}
\end{equation}
satisfies
\begin{equation}
R_m(T)
=
2\int_0^1 \bar T_s\,\dd s-1.
\label{eq:signature-clock-shape-score-area}
\end{equation}
It is zero for a linear normalized clock, positive when clock mass accumulates
early, negative when it accumulates late, and bounded by \([-1,1]\) for
nondecreasing normalized clocks.

We use the normalized linear VE decay \(\sigma_{\rm VE}(t)=1-t\) and introduce
independent signature-linear clock-shape corrections:
\begin{equation}
\alpha_t^{T,m}
=
t+\lambda_\alpha t(1-t)R_m(T),
\qquad
\beta_t^{T,m}
=
(1-t)\bigl(1+\lambda_\beta tR_m(T)\bigr).
\label{eq:ve-signature-linear-schedule}
\end{equation}
Equivalently, with \(e_{\emptyset}\) denoting the constant signature coordinate,
\begin{align}
a_{\rm VE}(t)
&=
 t e_{\emptyset}+\lambda_\alpha t(1-t)r_m,
\label{eq:ve-signature-linear-a}\\
b_{\rm VE}(t)
&=
 (1-t)e_{\emptyset}+\lambda_\beta t(1-t)r_m,
\label{eq:ve-signature-linear-b}
\end{align}
so that
\begin{equation}
\alpha_t^{T,m}=\left\langle a_{\rm VE}(t),\phi_m(\bar T)\right\rangle,
\qquad
\beta_t^{T,m}=\left\langle b_{\rm VE}(t),\phi_m(\bar T)\right\rangle .
\label{eq:ve-schedule-signature-linear-form}
\end{equation}
In the experiments we use \(\lambda_\alpha=\lambda_\beta=1/4\). More generally,
since \(|R_m(T)|\le1\), any \(|\lambda_\beta|<1\) gives
\(\beta_t^{T,m}\ge (1-t)(1-|\lambda_\beta|)>0\) for \(t<1\), so the affine path
is admissible.

Writing \(R_m=R_m(T)\), the derivatives are
\begin{equation}
\frac{\dd\alpha_t^{T,m}}{\dd t}
=
1+\lambda_\alpha(1-2t)R_m,
\qquad
\frac{\dd\beta_t^{T,m}}{\dd t}
=
-1+\lambda_\beta(1-2t)R_m,
\label{eq:ve-signature-linear-derivatives}
\end{equation}
and hence
\begin{equation}
L_t^{T,m}
:=
\frac{\frac{\dd\beta_t^{T,m}}{\dd t}}{\beta_t^{T,m}}
=
\frac{-1+\lambda_\beta(1-2t)R_m}
{(1-t)\bigl(1+\lambda_\beta tR_m\bigr)}.
\label{eq:ve-signature-linear-beta-log-derivative}
\end{equation}
Substituting these quantities into \eqref{eq:clock-affine-target} gives
\begin{align}
v_t^{T,m}(x\mid x_1)
&=
L_t^{T,m}x
+
\left(
\frac{\dd\alpha_t^{T,m}}{\dd t}-L_t^{T,m}\alpha_t^{T,m}
\right)x_1
\nonumber\\
&=
L_t^{T,m}x
+
\left[
1+\lambda_\alpha(1-2t)R_m
-
L_t^{T,m}\bigl(t+\lambda_\alpha t(1-t)R_m\bigr)
\right]x_1,
\qquad t\in[0,1).
\label{eq:ve-signature-linear-target}
\end{align}
When \(\lambda_\alpha=\lambda_\beta=0\), this reduces to the linear VE path
\(\alpha_t=t\), \(\beta_t=1-t\). The one-warp path is recovered only in the
special symmetric case \(\lambda_\beta=-\lambda_\alpha\); the ablation above
uses the more general affine form
\(X_t^T=\alpha_t^{T,m}X_1^T+\beta_t^{T,m}X_0^T\).


\section{Experimental Setup}
\label{appendix:experimental-setup}
\label{appendix:more-experimental-results}
\label{appendix:experimental-setting}

This section documents the full configuration shared across all reported
HTFM and baseline runs. Unless explicitly noted, any HTFM clock variant
(Gaussian, $\alpha$-stable, Student-$t$) uses the same backbone and
optimizer for a given dataset; the only differences are the random-clock
law and the optional path-feature conditioning. All experiments are
implemented in PyTorch.

\subsection{Dataset}
\label{appendix:setup-dataset}

We use three benchmarks at increasing scale.

\paragraph{2D imbalanced $\alpha$-stable mixture.}
\label{appendix:toy-setup}
A $9$-component isotropic $\alpha$-stable mixture in $\mathbb R^2$.
Components sit on a uniformly spaced
$3{\times}3$ grid with spacing $2.0$, each component is a $2$-D isotropic
$\alpha$-stable variable with scale $0.1$, and the mixture weights are:\\
$\{0.18, 0.08, 0.14, 0.06, 0.09, 0.10, 0.16, 0.07, 0.12\}$ (range
$0.06$--$0.18$). We sweep the per-component stability index
$\alpha_{D}\in\{1.5, 1.8\}$ and draw $32{,}000$ training
samples per setting; an additional $24{,}000$-sample reference set is held
out for evaluation.

\paragraph{CIFAR10-LT.}
The long-tailed CIFAR-10 split~\citep{cao2019learning} of the original
CIFAR-10 train set~\citep{krizhevsky2009learning}. With imbalance ratio
$\rho=0.01$, head count $5000$, and $10$ classes, the per-class counts
follow a geometric schedule and total $\approx 12{,}406$ training images
at $32{\times}32$. A deterministic subset is precomputed and reused across all runs to keep the baselines
identical at the dataset level. Random horizontal flips are the only
augmentation.

\paragraph{HRRR VIL.}
Vertically Integrated Liquid (VIL) hourly analyses from the
High-Resolution Rapid Refresh weather system~\citep{benjamin2016north},
cropped to a $128{\times}128$ window over the Central US following the
protocol of~\cite{pandey2024heavy}. Train/test split mirrors theirs:
years $\{2019, 2020\}$ for training ($\approx 17{,}400$ valid hourly
fields after dropping NaN slots) and year $\{2021\}$ for test
($\approx 8{,}700$). Per-channel $z$-score statistics are computed on the
training years and applied to both splits.

\subsection{Baselines}
\label{appendix:setup-baselines}

We compare HTFM against four heavy-tailed generative baselines, each run
under the same backbone and training budget as HTFM (see
\Cref{appendix:setup-denoiser,appendix:setup-training}):
\begin{itemize}
\item \textbf{LIM}~\citep{yoon2023score,popovimproved}: L\'evy-It\^o stochastic
  differential equation with $\alpha$-stable driving noise; trained with a
  fractional-score regression loss and sampled with the model's
  $\alpha$-stable reverse SDE solver at reverse-step budget $1000$.
\item \textbf{DLPM}~\citep{shariatianheavy}: discrete-time $\alpha$-stable
  analogue of DDPM; uses $4000$ training-time reverse steps with the
  Gaussian scale-mixture augmentation, sampling via its native stochastic predictor.
\item \textbf{DLIM}~\citep{shariatianheavy}: deterministic $\alpha$-stable
  sampler sharing the DLPM predictor backbone; We set $\eta{=}0$ for deterministic generation.
\item \textbf{t-Flow}~\citep{pandey2024heavy}: heavy-tailed flow with a
  finite-variance Student-$t$ source; reported only on the Student-$t$
  parameter range $p_\nu>2$ where its construction is well-defined.
\end{itemize}

\subsection{Evaluation}
\label{appendix:setup-evaluation}

Each benchmark uses the standard tail-aware metric for its scale.

\paragraph{2D mixture.}
We use the precision--recall $f_1^{\mathrm{pr}}$ score
of~\cite{shariatianheavy}. Let
$\mathcal X_{\mathrm{ref}}=\{x_i\}_{i=1}^{N}$ and
$\mathcal X_{\mathrm{gen}}=\{y_j\}_{j=1}^{N}$ be the reference and
generated sample sets ($N=24{,}000$ here). For a set
$\mathcal X$ and a point $z$, write
$r_k(z;\mathcal X)$ for the distance from $z$ to its $k$-th nearest
neighbor in $\mathcal X$, and define the $k$-NN manifold
$$
\mathcal M_k(\mathcal X) \;=\; \bigcup_{z\in\mathcal X}
B\bigl(z,\,r_k(z;\mathcal X)\bigr),
$$
i.e.\ the union of $\ell_2$-balls around each sample with radius given
by its own $k$-th nearest-neighbor distance. The
\emph{precision} and \emph{recall} relative to this support estimate
are
$$
\mathrm{P} \;=\; \frac{1}{|\mathcal X_{\mathrm{gen}}|}
\sum_{y\in\mathcal X_{\mathrm{gen}}}
\mathbf 1\!\bigl[y\in\mathcal M_k(\mathcal X_{\mathrm{ref}})\bigr],
\qquad
\mathrm{R} \;=\; \frac{1}{|\mathcal X_{\mathrm{ref}}|}
\sum_{x\in\mathcal X_{\mathrm{ref}}}
\mathbf 1\!\bigl[x\in\mathcal M_k(\mathcal X_{\mathrm{gen}})\bigr],
$$
and the score is their harmonic mean
$$
f_1^{\mathrm{pr}} \;=\; \frac{2\,\mathrm{P}\,\mathrm{R}}{\mathrm{P}+\mathrm{R}}\,.
$$
Higher is better. We use $k=10$ throughout.

\paragraph{CIFAR10-LT.}
We report Fr\'echet Inception Distance
(FID)~\citep{heusel2017gans}, the standard image-generation
metric.
Each $32{\times}32$ image (real and generated) is passed through a
pre-trained Inception-V3 network and embedded as a $2048$-dimensional
feature vector at the final average-pooling layer. Letting
$(\mu_{\mathrm{data}},\Sigma_{\mathrm{data}})$ and
$(\mu_{\mathrm{gen}},\Sigma_{\mathrm{gen}})$ be the empirical mean and
covariance of the two feature sets, FID is the squared
$2$-Wasserstein distance between the two fitted Gaussians,
\begin{equation}
\mathrm{FID} \;=\;
\lVert\mu_{\mathrm{data}}-\mu_{\mathrm{gen}}\rVert_2^{2}
\;+\;
\mathrm{Tr}\!\bigl(
  \Sigma_{\mathrm{data}}+\Sigma_{\mathrm{gen}}
  -2(\Sigma_{\mathrm{data}}\Sigma_{\mathrm{gen}})^{1/2}
\bigr).
\end{equation}
The reference set is the full long-tail train split
($12{,}406$ images at $32{\times}32$); the generated set is
$12{,}406$ images sampled from the EMA-averaged model copy at the
listed NFE budget. Lower is better.

\paragraph{HRRR VIL.}
Following~\cite{pandey2024heavy}, we report three one-dimensional
tail statistics aligned to the operational interest in HRRR extremes.
We generate $20{,}000$ samples from each model, flatten them
together with the $\approx 8{,}760$ test fields into 1-D
intensity samples, and compute the metrics below. Lower is better
for all three.

\textbf{Kurtosis Ratio (KR).} Sample kurtosis is the (normalized)
fourth-order moment, characterizing how heavy the distribution's tails
are relative to the central mass. Let $k_{\mathrm{data}}$ and
$k_{\mathrm{sim}}$ denote the empirical kurtosis of the reference and
generated samples. The kurtosis ratio is
\begin{equation}
\mathrm{KR} \;=\; \Bigl| 1 - \tfrac{k_{\mathrm{sim}}}{k_{\mathrm{data}}}\Bigr|.
\end{equation}

\textbf{Skewness Ratio (SR).} Sample skewness is the (normalized)
third-order moment, characterizing the asymmetry of a tailed
distribution. With $s_{\mathrm{data}}, s_{\mathrm{sim}}$ the empirical
skewness on reference and generated,
\begin{equation}
\mathrm{SR} \;=\; \Bigl| 1 - \tfrac{s_{\mathrm{sim}}}{s_{\mathrm{data}}}\Bigr|.
\end{equation}

\textbf{Tail Kolmogorov--Smirnov (KS).} The two-sample
KS statistic~\citep{massey1951kolmogorov} measures the sup-norm
distance between two empirical CDFs. For heavy-tailed distributions
we evaluate it on the tails to focus the metric on the rare-extreme
regime that both KR and SR weight only indirectly. We retain
generated and reference samples lying above the $99.9$-th percentile
(right tail) or below the $0.1$-th percentile (left tail), compute
the two-sample KS statistic per side, and average over the two sides
for an overall score. For the VIL channel the underlying distribution
has no left tail, so we report the right-tail KS statistic only.

\subsection{Network Architecture}
\label{appendix:setup-denoiser}

\paragraph{2D mixture.}
A small MLP with $4$ residual blocks of width $64$, SiLU activations,
GroupNorm, and a learnable time embedding of size $32$. The same backbone
is shared across HTFM and the FM baselines.

\paragraph{CIFAR10-LT.}
The DDPM-style U-Net of~\cite{ho2020denoising}: $128$ base channels,
channel multipliers $(1,2,2,2)$, two residual blocks per scale, $4$
attention heads, self-attention at resolution $16$, dropout $0$. The
\emph{same} U-Net is used by HTFM, LIM, DLPM, DLIM, and t-Flow so that
any FID difference reflects the noising/clock formulation rather than
backbone capacity. For HTFM, the U-Net is additionally equipped with the
path-feature conditioning module described below.

\paragraph{HRRR VIL.}
The NCSN++ backbone of~\cite{song2020score} on $128{\times}128$
single-channel inputs, with base channels $32$, channel multipliers
$(1,2,2,4,4)$, four residual blocks per scale, $4$ attention heads,
self-attention at resolution $16$, FIR $(1,3,3,1)$ up/down-sampling, swish
nonlinearities, GroupNorm, and BigGAN-style residual blocks. Sigma range
$[0.01, 50]$ on $1000$ noise scales when used with score-based baselines.

\paragraph{Path-feature conditioning (HTFM-specific).}
HTFM's velocity network conditions on a clock-derived \emph{path feature}
$\phi$ that summarizes the realized clock trajectory of each sample. The
feature source depends on the random clock:
\begin{itemize}[leftmargin=1.2em,topsep=2pt,itemsep=1pt]
\item \textbf{Student-$t$ clock} $T_t{=}tV_\nu$,
  $V_\nu\sim\mathrm{InvGamma}(\nu/2,\nu/2)$. The path is fully determined
  by the single scalar $V_\nu$, so we take $\phi{=}\log V_\nu\in\mathbb R$
  ($d_\phi{=}1$); no signature is needed.
\item \textbf{$\alpha$-stable subordinator clock}. We discretize the
  time-augmented path $\{(t_i, T_{t_i})\}_{i=0}^{N}$ on a fixed
  $N{=}200$ grid, standardize each of the two channels along time, and take its truncated
  logsignature of order $m$ (default $m{=}1$, i.e.\ $\phi\in\mathbb R^{2}$
  collecting the increments of the two channels). The construction
  matches \Cref{subsec:clock-signature-schedules}.
\item \textbf{Deterministic clock} $T_t{=}t$ (HTFM-G, $p{=}\infty$). No
  path feature; the conditioning module is omitted entirely and the
  network reduces to a standard time-conditioned velocity field.
\end{itemize}

The raw feature $\phi\in\mathbb R^{d_\phi}$ is mapped through a shared
two-layer MLP
$\phi_{\rm path}\colon\mathbb R^{d_\phi}\to\mathbb R^{128}$ with SiLU
activations, dropout $0$, and an internal hidden width of $128$. The
$128$-dimensional path embedding is injected into the velocity network at
\emph{every} residual block by concatenation with the time embedding. In
the original block the time embedding $t_{\rm emb}\in\mathbb R^{d_t}$ is
mapped to a per-channel scale/shift via a linear $W_t\in\mathbb
R^{C_{\rm out}\times d_t}$; with path conditioning this is replaced by a
wider linear $W_{t,\phi}\in\mathbb R^{C_{\rm out}\times (d_t+128)}$ acting
on the concatenation $[t_{\rm emb}\,\|\,\phi_{\rm path}]$. The same
concatenation-at-every-block scheme is used identically by the DDPM U-Net
(CIFAR10-LT), the NCSN++ U-Net (HRRR VIL), and the residual-MLP backbone
(2D mixture); this keeps the comparison against LIM/DLPM/DLIM/t-Flow
clean, since their backbones have no analogous trajectory-summary input
and therefore see only the time embedding.

For completeness, our implementation also supports a zero-initialized
FiLM modulation interface that produces a per-channel
$(\mathrm{scale},\mathrm{shift})$ from $\phi_{\rm path}$ and applies it
at the middle block. All HTFM rows reported in the paper use only the
concatenation interface; FiLM is left disabled.

\subsection{Training}
\label{appendix:setup-training}

All runs use AdamW with default $\beta$'s. Per-dataset hyperparameters
are listed below.

\paragraph{2D mixture.}
Learning rate $5{\times}10^{-3}$, batch size $1024$, EMA decay $0.99$,
total number of epochs $100$.

\paragraph{CIFAR10-LT.}
Learning rate $2{\times}10^{-4}$ with $500$ warmup steps and a step-LR
schedule (decay factor $0.99$ every $1000$ steps); batch size $100$;
EMA decay $0.9999$ used for evaluation; gradient clipping at $\ell_2$ norm
$1.0$. Models are trained for $2000$ epochs. 

\paragraph{HRRR VIL.}
Learning rate $10^{-4}$ with $1000$ warmup steps and no LR decay; weight
decay $0$; batch size $128$; EMA decay $0.9999$; gradient clipping at
$5.0$. Models are trained for $1500$ epochs.
\paragraph{HTFM-specific knobs.}
For HTFM the random clock is one of: deterministic $T_t=t$ (Gaussian
limit, HTFM-G), an $\alpha$-stable subordinator with decay degree
$p_\alpha\in\{1.6,1.7,1.8,1.9\}$ on CIFAR10-LT and
$p_\alpha\in\{1.6,1.7,1.8\}$ on HRRR, or a Student-$t$ random-slope clock
$T_t=tV_\nu$, $V_\nu\sim\mathrm{InvGamma}(\nu/2,\nu/2)$, with decay
degree $p_\nu\in\{1.7,2.0,3.0\}$ on CIFAR10-LT and HRRR (extended to
$p_\nu=1.5$ on the 2D mixture where a degenerate variance is acceptable).
The training loss is the clock-conditioned $L^2$ flow-matching objective
with velocity prediction.


\subsection{Sampling}
\label{appendix:setup-sampling}

All HTFM and FM-based variants integrate the learned velocity field with
a fixed-step ODE solver; LIM, DLPM, and DLIM use their respective
(stochastic) reverse processes.

\paragraph{2D mixture.}
Deterministic Euler ODE solver;
$24{,}000$ samples generated per setting and compared to a held-out
$24{,}000$-sample reference set.

\paragraph{CIFAR10-LT.}
Deterministic Euler ODE solver for HTFM and the baselines. DLPM uses its native predictor; DLIM uses the deterministic ($\eta{=}0$) variant. Higher-order solvers are reported only in the
solver ablation (Appendix~\ref{appendix:solver-ablation}). Unless otherwise stated, all FID results use the final EMA checkpoint after the fixed training budget.

\paragraph{HRRR VIL.}
Heun's second-order ODE solver at $25$ reverse steps (NFE $=50$ per
sample), following~\cite{pandey2024heavy}. Tail statistics are evaluated
on $20{,}000$ generated samples.

\subsection{Compute Resources}
\label{app:compute}

All experiments were conducted on a mix of NVIDIA L40S (48GB) and A100 (80GB) GPUs. CIFAR10-LT runs use up to 8 L40S GPUs in parallel, while HRRR VIL runs use A100 80G GPUs to accommodate the larger $128\times128$ single-channel inputs and the NCSN++ backbone. The total training budget is fixed by the per-dataset epoch counts reported in Appendix~\ref{appendix:setup-training} (100 epochs for the 2D mixture, 2000 epochs for CIFAR10-LT, and 1500 epochs for HRRR VIL). Sampling cost is dominated by the listed NFE budgets and the ODE/SDE solver choices in Appendix~\ref{appendix:setup-sampling}.

\section{Ablation Studies}
\label{appendix:ablation-studies}

Unless otherwise stated, ablations are conducted on CIFAR10-LT because it
exposes both global generation quality and tail-class coverage while remaining
more economical than the HRRR sweep. The same protocol can be repeated on 2D
toy distributions for visualization and on HRRR for the final robustness check.

\subsection{Effect of Flow Type}
\label{appendix:flow-type-ablation}

\begin{wraptable}{r}{0.5\linewidth}
\vspace{-8pt}
\setlength{\abovecaptionskip}{5pt}
\setlength{\belowcaptionskip}{0pt}
\centering

\setlength{\tabcolsep}{4.2pt}
\renewcommand{\arraystretch}{1.08}
\resizebox{\linewidth}{!}{%
\begin{tabular}{lccc}
\toprule
Flow type & Level 0 & Level 1 & Level 2 \\
\midrule
Straight line 
& 16.65 & 16.10 & 16.46 \\
Signature-linear VE
& 16.43 & 16.19 & 16.71 \\
\bottomrule
\end{tabular}%
}
\caption{Flow-type ablation on CIFAR10-LT under HTFM-$\alpha$ at
$p_\alpha=1.7$. We compare two clock-conditioned interpolation schedules in FID performance:
the clock-blind straight line and a signature-linear VE schedule whose
mixing weights depend on log-signature features of the realized clock path.
Columns vary the signature order provided to the velocity network. Each
cell reports the FID at sampling NFE$=100$ using the Euler ODE solver.}
\label{tab:flow-type-ablation}
\vspace{-8pt}
\end{wraptable}

We next ablate the interpolation \emph{schedule} itself, holding the random
clock and the source covariance fixed. We compare two flow types implemented
in our codebase: (i) the standard \emph{straight-line} interpolation
$X_t^T=tX_1+(1-t)X_0^T$, which ignores the realized clock and moves linearly
in physical time $t$; and (ii) a \emph{signature-linear VE} schedule whose
interpolation coefficients are convex combinations parameterized by truncated
logsignature features of the clock path, with mixing weights
$\lambda_\alpha,\lambda_\beta$ controlling how aggressively the schedule
deviates from straight line. Comparing these two schedules under a matched
signature feature isolates the value of clock-aware versus clock-blind
interpolation. We run this ablation on the HTFM-$\alpha$ family at
$p_\alpha=1.7$ as a representative slice of the heavy-tailed regime.

\subsection{Effect of Number of Function Evaluations (NFE)}
\label{appendix:nfe-ablation}

We next vary the number of function evaluations (NFE) used by the sampler,
holding training fixed and changing only the sampling budget. This
ablation measures each method's quality--cost tradeoff:
\Cref{fig:nfe-ablation} plots FID against NFE for HTFM-$\alpha$, DLPM,
and DLIM at three decay degrees $p_\alpha\in\{1.6,1.7,1.8\}$.

\begin{figure}[t]
\centering
\includegraphics[width=\linewidth]{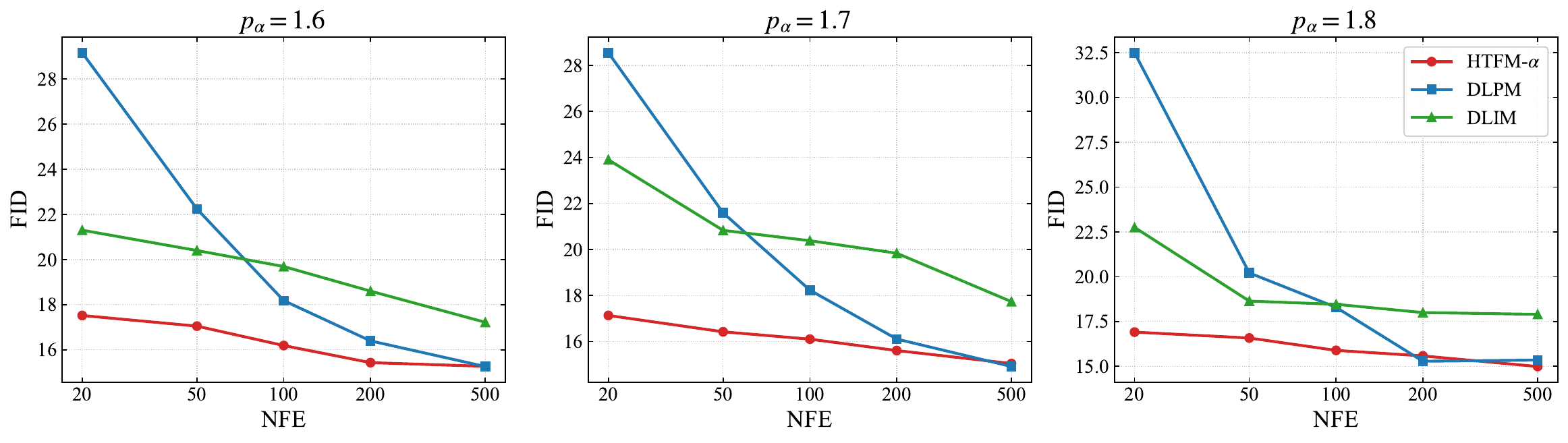}
\caption{NFE-vs-FID curves on CIFAR10-LT. One panel per decay degree
$p_\alpha\in\{1.6,1.7,1.8\}$; lines compare HTFM-$\alpha$ (ours) with the
DLPM and DLIM baselines. Lower FID is better.}
\label{fig:nfe-ablation}
\end{figure}

Three trends are visible across all three panels.
\textbf{(i) HTFM-$\alpha$ dominates at low NFE.} The red curve sits well
below both baselines from the smallest sampling budget, and the gap to
DLPM is largest precisely where compute is scarcest -- consistent with
HTFM inheriting the low-NFE sample efficiency of flow matching rather than
the slow burn-in of $\alpha$-stable diffusion.
\textbf{(ii) DLPM closes the gap only at high NFE.} The blue curve falls
steeply with more solver steps and crosses or matches HTFM-$\alpha$ near
the right edge of the budget range, indicating that the two methods become
broadly comparable at the asymptotic-quality end of the curve. The
crossover happens later for heavier tails (smaller $p_\alpha$), where the
sampler has more high-frequency content to integrate through.
\textbf{(iii) DLIM plateaus.} The green curve flattens early and stays
above both other methods for the rest of the range, so its deterministic
sampler does not scale further with extra NFE on this benchmark.
Together these curves give the same message as the main-text
CIFAR10-LT comparison: HTFM-$\alpha$ matches the asymptotic quality of
the strongest $\alpha$-stable diffusion baseline while strictly
dominating it under tight NFE budgets.

\subsection{Effect of ODE Solver}
\label{appendix:solver-ablation}

This ablation compares the integration scheme used during sampling, holding
the trained model fixed and varying only the solver and the matched NFE
budget. We compare four fixed-step ODE solvers implemented in our codebase:
explicit \emph{Euler} ($1$st order), \emph{Heun} and \emph{midpoint}
($2$nd-order single-step Runge--Kutta), and Adams--Bashforth \emph{AB2}
($2$nd-order multi-step).
Higher-order solvers reduce the per-step discretization error of the
velocity integration, but each step costs more network evaluations: a step
of Heun/midpoint/AB2 costs $2$ NFE. At a matched
total NFE budget the higher-order solvers therefore take fewer integration
steps, so the ablation captures the per-NFE quality--accuracy tradeoff
rather than a per-step comparison. We report HTFM-$\alpha$ at $p_\alpha=1.7$
as a representative slice.

\begin{table}[t]
\setlength{\abovecaptionskip}{7pt}
\setlength{\belowcaptionskip}{0pt}
\centering
\small
\setlength{\tabcolsep}{6pt}
\renewcommand{\arraystretch}{1.08}
\begin{tabular}{lccc}
\toprule
Solver & NFE $=20$ & NFE $=50$ & NFE $=100$ \\
\midrule
Euler ($1$st order)             & 20.80 & 17.92 & 16.31 \\
Heun ($2$nd order)              & 18.55 & 17.44 & 15.87 \\
Midpoint ($2$nd order)          & \textbf{17.82} & 16.51 & \textbf{15.81} \\
AB2 ($2$nd order, multi-step)   & 18.02 & \textbf{16.47} & 15.82 \\
\bottomrule
\end{tabular}%
\caption{ODE-solver ablation on CIFAR10-LT under HTFM-$\alpha$ at
$p_\alpha=1.7$. Each cell reports FID ($12{,}406$ generated vs $12{,}406$ reference, EMA, lower is better) at
the listed total NFE budget; for the higher-order solvers the number of
integration steps is $\lfloor \text{NFE}/\text{NFE-per-step}\rfloor$. \textbf{Bold} indicates per-column best.}
\label{tab:solver-ablation}
\end{table}

\section{Visualizations}
\label{appendix:visualizations}

This section collects qualitative visualizations that complement the
quantitative results in the main text.

\subsection{2D toy distributions: clock-degree sweep}
\label{appendix:viz-toy-sweep}

\Cref{fig:viz-toy-sweep} compares HTFM-$\alpha$ samples generated under
clocks of different decay degrees (rows: $p=\infty$,
$p_\alpha=1.8$, $p_\alpha=1.5$) against ground-truth samples (top
row) across five 2D toy targets (columns: \texttt{checkerboard},
\texttt{two\_moons}, \texttt{swiss\_roll}, \texttt{rings},
\texttt{olympic\_rings}). The same backbone, training schedule, and
NFE budget are used for every cell; only the random-clock law is
swept.

\begin{figure}[!h]
\centering
\includegraphics[width=1\linewidth]{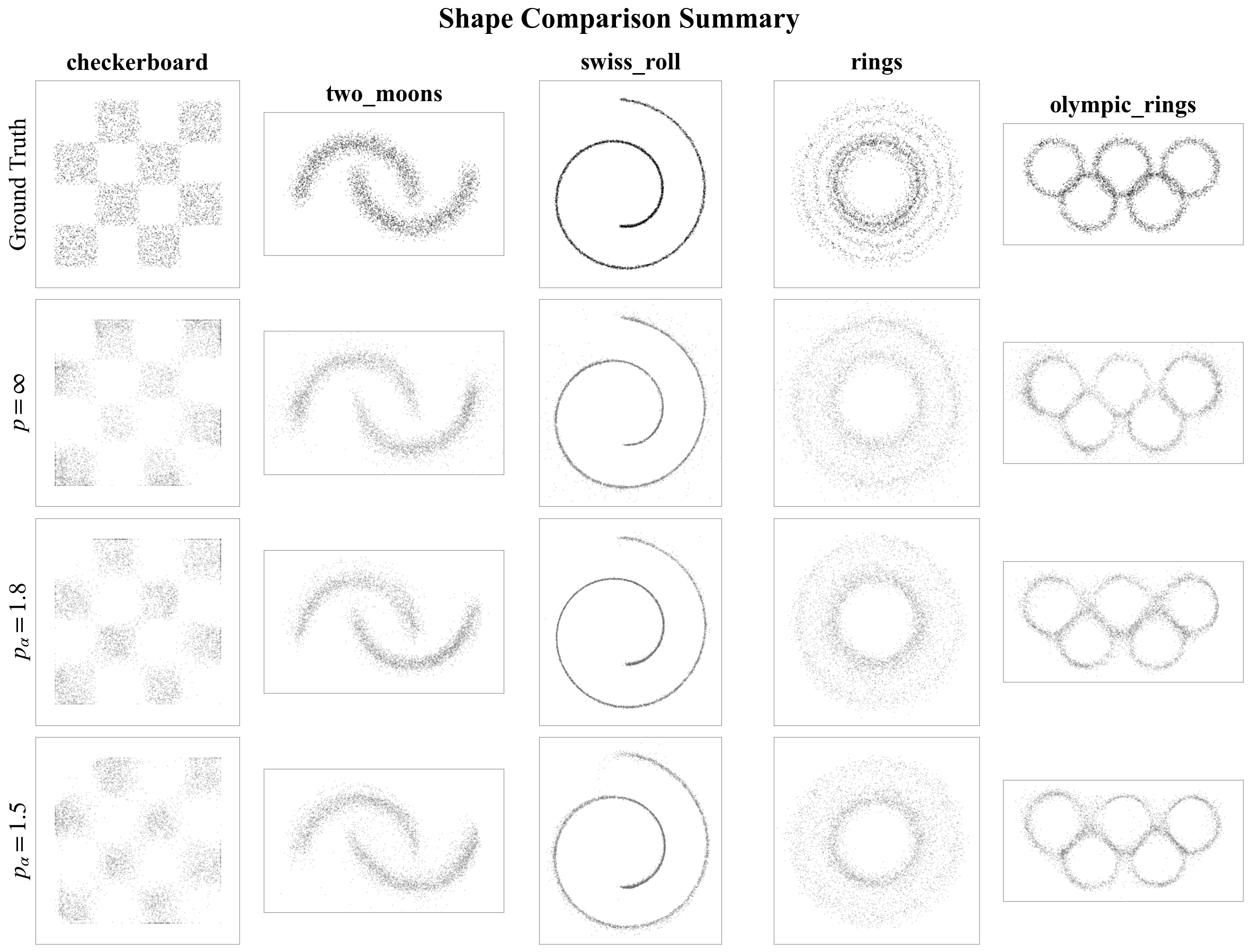}
\caption{HTFM-$\alpha$ samples on five 2D toy distributions
(\texttt{checkerboard}, \texttt{two\_moons}, \texttt{swiss\_roll},
\texttt{rings}, \texttt{olympic\_rings}) under decay exponents
$p\in\{\infty, 1.8, 1.5\}$ ($\alpha$ in the figure denotes the
$\alpha$-stable stability index of the random clock; the top row is
ground truth). Same backbone, training, and NFE across cells.}
\label{fig:viz-toy-sweep}
\end{figure}

\begin{figure}[!htbp]
\centering
\includegraphics[width=1\linewidth]{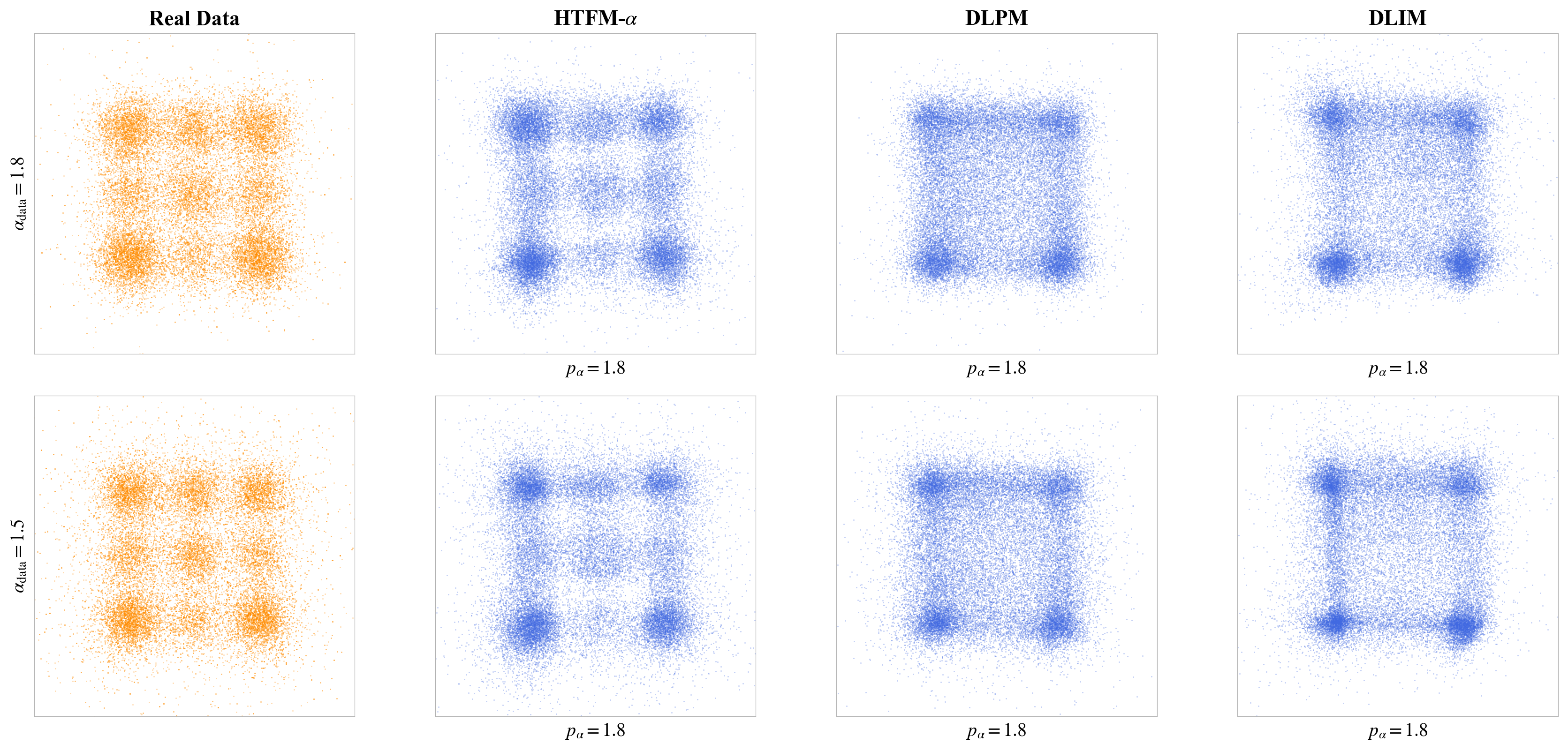}
\caption{Generated samples on the 2D imbalanced $\alpha$-stable
mixture at NFE${=}20$. Rows: $\alpha_{data}=1.8$ (top),
$\alpha_{data}=1.5$ (bottom). Columns: real data, HTFM,
DLPM, DLIM. $W_1$ distances to the real data are reported below each
generated panel.}
\label{fig:viz-sas-nfe20}
\end{figure}

\subsection{2D \texttt{S$\alpha$S} mixture: baseline comparison at NFE${=}20$}
\label{appendix:viz-sas-baselines}

\Cref{fig:viz-sas-nfe20} compares HTFM against DLPM and DLIM on the 2D
imbalanced $\alpha$-stable mixture at NFE${=}20$, on two stability
indices $\alpha_{data}\in\{1.5, 1.8\}$ (rows). Each panel
overlays generated samples (blue) on the corresponding real data
(orange in the leftmost column). HTFM reproduces the
imbalanced grid pattern more cleanly.




\subsection{CIFAR10-LT: generated samples}
\label{appendix:viz-cifar10lt}

\Cref{fig:viz-cifar10lt} shows generated $32{\times}32$ samples from
HTFM-$\alpha$ on CIFAR10-LT, organised so that each column collects
samples from a single CIFAR-10 class.

\begin{figure}[h]
\centering
\includegraphics[width=0.78\linewidth]{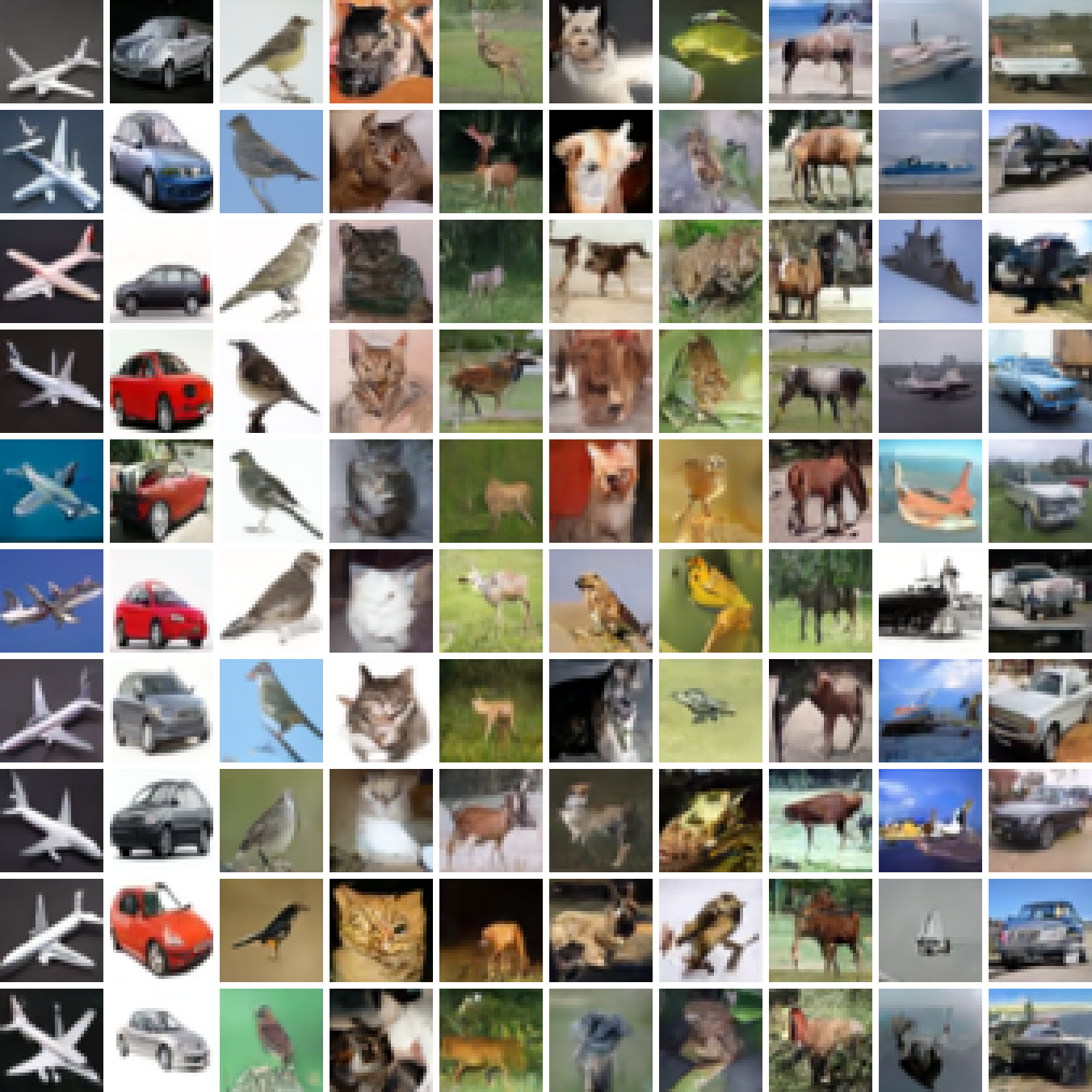}
\caption{CIFAR10-LT generated samples from HTFM-$\alpha$. Each column
corresponds to one CIFAR-10 class; from left to right: \emph{airplane,
automobile, bird, cat, deer, dog, frog, horse, ship, truck}. This is
also the CIFAR10-LT head-to-tail order, i.e.\ per-class training-sample
counts decrease from the leftmost to the rightmost column. Sample
fidelity broadly tracks the per-class training-set size: the
head classes \emph{airplane} and \emph{automobile} (leftmost two
columns) produce the sharpest, most coherent images, while
progressively rarer classes (rightward) exhibit more artefacts and
weaker class identity.}
\label{fig:viz-cifar10lt}
\end{figure}

\end{document}